  \providecommand\BibTeX{{%
    \normalfont B\kern-0.5em{\scshape i\kern-0.25em b}\kern-0.8em\TeX}}}
\useunder{\uline}{\ul}{}
\newtheorem{definition}{Definition}
\newtheorem{lemma}{Lemma}
\newtheorem{theorem}{Theorem}
\newtheorem{fact}{Fact}
\newcounter{savelemma}
\newcommand{\etal}{\textit{et~al.}\xspace}
\newcommand{\eg}{\textit{e.g.}\xspace}
\newcommand{\ie}{\textit{i.e.}\xspace}
\newcommand{\aka}{\textit{a.k.a.}\xspace}
\newcommand\figref[1]{Fig.~\ref{#1}}
\newcommand\algref[1]{Algorithm~\ref{#1}}
\newcommand\tabref[1]{Tab.~\ref{#1}}
\newcommand\secref[1]{Sec.~\ref{#1}}
\newcommand\equref[1]{Eq.~(\ref{#1})}
\newcommand\thmref[1]{Theorem ~\ref{#1}}
\newcommand\lemmaref[1]{Lemma ~\ref{#1}}
\newcommand\factref[1]{Fact ~\ref{#1}}
\newcommand\appref[1]{Appendix~\ref{#1}}
\newcommand{\fakeparagraph}[1]{\vspace{1mm}\noindent\textbf{#1.}}
\newcommand{\TODO}[1]{\textbf{\color{red}{TODO: #1} }}
\newcommand{\TODO}[1]{}
\newcommand{\sysname}{\textsf{GraphDLG}\xspace}
\begin{document}

\title{\sysname: Exploring Deep Leakage from Gradients in Federated Graph Learning}

\author{
Shuyue Wei$^{*}$}
\affiliation{%
  \department{C-FAIR \& School of Software}
  \institution{Shandong University}
  \city{Jinan}
  \country{China}
}
\email{weishuyue@sdu.edu.cn}

\author{Wantong Chen$^{*}$}
\affiliation{%
  \department{SKLCCSE Lab}
  \institution{Beihang University}
  \city{Beijing}
  \country{China}
}
\email{2306cwt@buaa.edu.cn}

\author{Tongyu Wei}
\affiliation{%
  \department{SKLCCSE Lab}
  \institution{Beihang University}
  \city{Beijing}
  \country{China}
}
\email{weitongyu@buaa.edu.cn}

\author{Chen Gong}
\affiliation{%
  \department{Automation \& Intelligent Sensing}
  \institution{Shanghai Jiao Tong
University}
  \city{Shanghai}
  \country{China}
}
\email{chen.gong@sjtu.edu.cn}

\author{Yongxin Tong}
\affiliation{%
  \department{SKLCCSE Lab}
  \institution{Beihang University}
  \city{Beijing}
  \country{China}
}
\email{yxtong@buaa.edu.cn}

\author{Lizhen Cui}
\affiliation{%
  \department{C-FAIR \& School of Software}
  \institution{Shandong University}
  \city{Jinan}
  \country{China}
}
\email{clz@sdu.edu.cn}




\begin{abstract}
Federated graph learning (FGL) has recently emerged as a promising privacy-preserving paradigm that enables distributed graph learning across multiple data owners.
A critical privacy concern in federated learning is whether an adversary can recover raw data from shared gradients, a vulnerability known as deep leakage from gradients (DLG).
However, most prior studies on the DLG problem focused on image or text data, and it remains an open question whether graphs can be effectively recovered, particularly when the graph structure and node features are uniquely entangled in GNNs.

In this work, we first theoretically analyze the components in FGL and derive a crucial insight, \ie \textit{once the graph structure is recovered, node features can be obtained through a closed-form recursive rule}.
Building on our analysis, we propose \sysname, a novel approach to recover raw training graphs from shared gradients in FGL, which can utilize randomly generated graphs or client-side training graphs as the auxiliaries to enhance the recovery.
Extensive experiments demonstrate that our \sysname outperforms existing solutions by successfully decoupling the graph structure and node features, achieving improvements of over $5.46\%$ (by MSE) for node feature reconstruction and over $25.04\%$ (by AUC) for graph structure reconstruction.

\end{abstract}



\begin{CCSXML}
<ccs2012>
   <concept>
       <concept_id>10010147.10010257.10010258</concept_id>
       <concept_desc>Computing methodologies~Learning paradigms</concept_desc>
       <concept_significance>500</concept_significance>
       </concept>
 </ccs2012>
\end{CCSXML}

\ccsdesc[500]{Computing methodologies~Learning paradigms}

\keywords{{Federated Graph Learning; Deep Leakage from Gradients}}

\maketitle

\section{Introduction} 
\label{sec:intro}
    Graphs naturally model the complex relationships or interactions among entities and serve as a prominent data for various real-world applications such as bioinformatics \cite{kdd23:app_drug, kdd22:app_mol}, transportation \cite{kdd24:fedgtp, kdd23:app_traffic, VLDB24_Fed_SM}, and healthcare \cite{kdd20:app_health, kdd17:app_health}.
    Notably, in many real scenarios,  graph data is distributed across multiple data owners and it is prohibited to share the raw graphs directly due to strict data privacy regulations (\eg GDPR~\cite{gdpr} and CPRA~\cite{ccpa}).
    For example, it is promising to train a graph learning model for drug discovery that utilizes extensive graph-based molecular data held by multiple individual drug research institutions.
    However, directly sharing the raw graphs among these institutions is usually infeasible due to privacy constraints.
    As a solution, federated graph learning (FGL)~\cite{kdd22:fedgnn_survey1,tnnls24:fedgnn_survey2,kdd23:fedgnn_survey3} has been emerging as a novel paradigm to tackle distributed graph data, which extends the concepts of federated learning~\cite{tist19:fl_survey, ftml21:fl_survey_long, FCS25_Survey} to graph data and enables the sharing of gradients in the learning process rather than raw graph data, thereby regarded as a privacy preserving way to protect the underlying graph information.
    

    
    In federated learning, a problem known as \textit{{deep leakage from gradients (DLG)}} has been identified as a fundamental privacy concern when sharing the gradients~\cite{nips19:dlg, corr20:idlg, ijcai23:dlg_survey2}, \ie \textit{is it possible to recover private training data from the shared gradients during the federated learning process?} 
    Prior works have studied the \textit{DLG} problem across various data types and have empirically or theoretically demonstrated that raw training data, such as images~\cite{nips19:dlg,nips20:ig,iclr21:rgap}, texts~\cite{emnlp21:tag,nips22:film}, and tabular data~\cite{icml23:tableak}, can be effectively recovered from the shared gradients in federated learning.
    Exploring the DLG issue on graph data is crucial for us to understanding the potential privacy risks in FGL.
    {\textit{Yet, it remains an unresolved question whether we can recover raw training graphs from shared gradients in FGL}}.
    In this work, we aim to study the DLG problem inherent in FGL.

\begin{figure}[t]
	\centering
	\includegraphics[width=0.48 \textwidth]{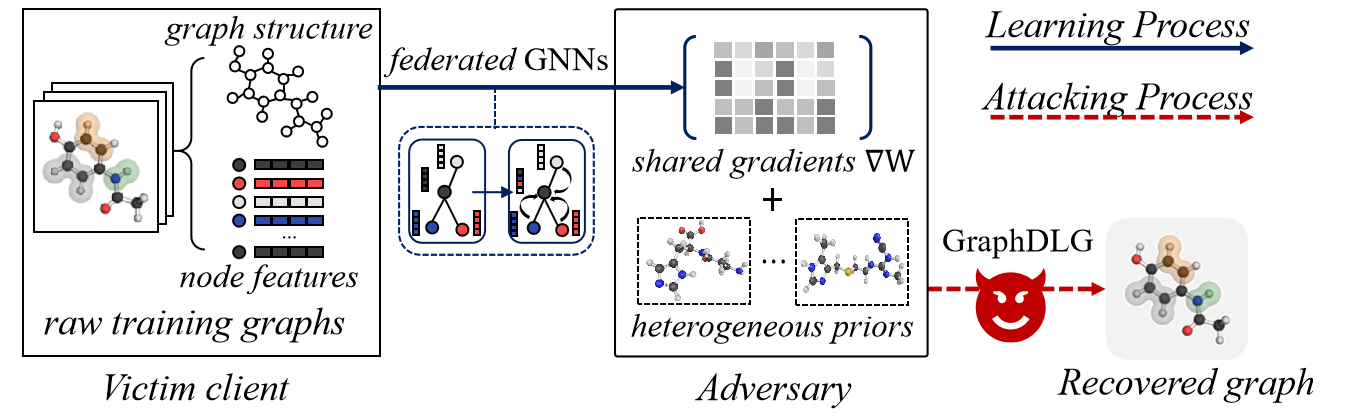}
    \caption{{\small{DLG for Graphs}: In federated graph learning, an adversary can recover training graphs from the victim client's gradients}.}
	\label{fig:intro}
\end{figure}


    As shown in \figref{fig:intro}, the DLG for graphs in FGL poses unique challenges inherent to graph data and existing solutions for recovering training data from gradients face two significant limitations.
    
    - \textit{Limitation 1: Existing recovering approaches are not tailored for the graph neural network models (GNNs).}
    Unlike the targeted images or text, graphs consist of node features and unique topologies. 
    It is common to adopt GNNs to tackle graph data, where the message aggregation operations intertwine node features and graph topologies within the gradients.
    Thus, prior solutions~\cite{nips19:dlg,nips20:ig,emnlp21:tag,nips22:film,icml23:tableak} for images or texts fail to disentangle node features and graph structures, making it ineffective to recover graphs (as in \tabref{tab:app_visualization}).
    
    - \textit{Limitation 2: Neglecting the heterogeneity between the prior knowledge and the target graphs to be recovered.}
    In existing studies~\cite{nips21:gias,cvpr22:ggl}, the adversary is often assumed to leverage its local dataset as auxiliary priors to enhance recovery capability in DLG, as it is natural for an FL client to use its local training graphs as the auxiliary freely.
    However, prior solutions overlook the fact that an adversary's auxiliary priors (\eg local graphs) are usually heterogeneous with the victim client's data to be recovered, a situation commonly encountered in real-world FGL scenarios ~\cite{curs24:hfl, WWW25_FGL_HFGL}.
    Thus, it is crucial to consider such heterogeneity between auxiliary and victim graphs when utilizing the auxiliary priors for DLG in FGL.

   To address the above limitations in existing DLG attacks and better understand the associated privacy risk, we explore deep leakage from gradients in FGL and propose \sysname, which is graph-tailored and can effectively recover both the node features and graph structures of training graphs simultaneously.
    In this work, we focus on widely-used GCNs~\cite{iclr17:gcn}.
    We first theoretically analyze the gradients of two essential components (MLP layers and {GCN layers}) in FGL and arrive at a crucial insight: \textit{if the graph structure is known, node features can be recovered through a closed-form rule}, simplifying the graph-tailored DLG via decoupling node features and graph structures (in \secref{sec:method_analysis}).
   {
   Then, we design a decoder utilizing heterogeneous auxiliary priors to transform graph embeddings derived from the MLP layers in FGL into graph structures, thereby allowing publicly available or even randomly generated graphs to contribute to the graph-tailored DLG attacks (in \secref{sec:method_structure}).
   }
   Finally, we substitute the obtained graph structures into our closed-form rules and recover node features of the target graph by recursively solving a set of linear equations (in \secref{sec:method_feature}). 
   The main contributions and results are summarized as follows.
   \begin{itemize}
        \item 
        We analyze gradients in FGL and derive a rule disentangling node features and graph structure, which provides an insight that node features can be recovered through a closed-form rule once graph structure is obtained.
        \item We propose \sysname, a novel approach to recover training graphs from the shared gradients and the FGL model.
        \sysname first recovers graph structures from graph embeddings with the FGL model and auxiliaries which are heterogeneous to victim's graphs and then recovers the node features utilizing the proposed closed-form recursive rule.
        \item We conduct extensive experiments to validate performance of the  proposed \sysname.
        The results show that our solution outperforms baselines by more than $5.46\%$ in node features (by MSE) and more than $25.04\%$ in graph structure (by AUC), \ie existing approaches for other data types perform poorly, especially in recovering graph structures (shown in \tabref{tab:app_visualization}).
        Besides, an ablation study demonstrates that tackling the heterogeneity between the auxiliary graphs and victim's graphs also improves \sysname's effectiveness. 
    \end{itemize}
    
    In the rest of this paper, we first review the representative related work in \secref{sec:related} and introduce preliminaries in \secref{sec:problem}.
    Then, we present our \sysname approach in \secref{sec:method}.
    Finally, we show the experimental results in \secref{sec:experiment} and conclude this work in \secref{sec:conclusion}.

\section{Related Work}
\label{sec:related}

Our study is mainly related to two lines of research: \textit{(i) the federated graph learning (FGL)} and \textit{(ii) the deep leakage from gradients (DLG)}. 
We review representative works in each research area as follows.

\subsection{Federated Graph Learning}
    Federated Graph Learning (FGL) addresses learning tasks where graph datasets are distributed across multiple data owners while ensuring compliance with strict data regulations such as GDPR~\cite{gdpr} and CCPA~\cite{ccpa}, which prohibit the direct data sharing.
    Existing research in FGL can be divided into three categories based on the level of the graph data held by clients, \ie \textit{(i)} \textit{Graph-level FGL}~\cite{cikm21:fkge,ijcai23:fedhgn, nips21:gcfl,math22:fedgcn,aaai23:fedstar,kdd24:fedspray, aaai24:fair}, \textit{(ii)} \textit{Subgraph-level FGL}~\cite{nips21:fedsage,vldb23:fedgta,sigmod24:fedaas,kdd24:fedgtp} and \textit{(iii)} \textit{Node-level FGL}~\cite{kdd21:CNFGNN,kdd24:hifgl}.
    In this work, we focus on the \textit{Graph-level FGL}, a common scenario for various real-world applications.

Next, we review the representative prior works in \textit{Graph-level FGL}, which mainly focus on addressing the data heterogeneity (\aka \textit{non-i.i.d.}) and the data privacy.
GCFL~\cite{nips21:gcfl} designs a gradient sequence based clustering mechanism to enhance model aggregation among heterogeneous graphs.
FedSpray~\cite{kdd24:fedspray} enables alignment of global structure proxies among multiple graphs in a personalized manner.
Besides, some works focus on privacy-preserving techniques in FGL as well.
For example, FKGE~\cite{cikm21:fkge} proposes an aggregation mechanism for federated graph embedding, using differential privacy to preserve the privacy of raw graph data.
Though prior works~\cite{math22:fedgcn, kdd22:fedgnn_survey1, WWW25_FGL4} emphasize the importance of privacy protection, they do not explore whether private graphs can be recovered via the shared information (\eg, gradients) in FGL, which is a crucial issue helps us to identify the privacy risk.
In this work, we study whether private graphs can be recovered through gradients and aim to fill gaps in understanding privacy leakage risk in FGL.

\subsection{Deep Leakage from Gradients}
In federated learning, recent studies~\cite{csur21:dlg_survey3, fgcs21:dlg_survey4} have shown that the private training data can be recovered from shared gradients, which is known as the \textit{deep {l}eakage from {g}radient (DLG)} problem.
Existing solutions for DLG attack can be classified into two categories~\cite{ijcai23:dlg_survey2}: the \textit{optimization-based DLG} and the \textit{analytics-based DLG}. 

In the seminal work, Zhu~\cite{nips19:dlg} \etal formally define the DLG problem and propose an optimization-based attack that treats the generated dummy data as optimized parameters, which recovers the raw data from dummy data by minimizing the differences between the gradients produced by the dummy data and the shared received gradients.
iDLG~\cite{corr20:idlg} enhances the reconstruction by extracting labels of the target raw data, while subsequent works improve performance by employing other loss functions in recovering process~\cite{nips20:ig,corr20:sapag,entropy23:wdlg}.
Some solutions~\cite{cvpr21:stg, nips21:gias, cvpr22:ggl, esoric20:cpl,cvpr22:gradvit} also utilize the prior knowledge to improve the recovery ability.
For example, GIAS~\cite{nips21:gias} and GGL~\cite{cvpr22:ggl} optimizing a dummy latent vector rather than a dummy data and leverages the prior knowledge embedded in the pre-trained GANs.
However, these solutions overlook the heterogeneous between the priors and the target data.
Analytics-based solutions aim to derive a closed-form formulation that describes the relationship between shared gradients and raw data.
The work in~\cite{tifs18:phong} demonstrates that the input to a fully connected layer can be inferred by dividing the weights' gradient by the bias gradient.
R-GAP~\cite{iclr21:rgap} subsequently proposes a closed-form recursive procedure for calculating the inputs of convolutional neural networks. 
Though these solutions perform well in MLPs or CNNs, which avoid extensive optimization iterations and achieve high accuracy, they can not be applied in federated graph learning, leaving the privacy risks associated with recovering graph data from its gradients unexplored.
Though recent work~\cite{tnse23:empirical_graph,arxiv24:Graph_Attacker} for privacy leakage in FGL study the DLG for graphs, they still applies the idea for image data~\cite{nips19:dlg} to recover node embeddings. 
In this work, we explore and devise solutions to the graph-tailored DLG problem, and also use prior DLG approaches for other data types as baselines.
\section{Preliminaries}
\label{sec:problem}
In this paper, we study the DLG issues in Graph-level FGL, where multiple clients jointly train graph learning model via a federated manner. 
We first introduce basic models and concepts in Graph-level FGL, and then illustrate the adopted threat model in this work.

\subsection{Basic Graph Learning Model}
\fakeparagraph{Notations for Graphs}
Given a set of graphs $\mathcal{G}$ $= \{G_1, \dots, G_N\}$, each graph $G_i$ is associated with a class label $Y_i$ for classification.
For each graph $G=(V, E)$, $V$ and $E$ denote the set of nodes and edges of the graph $G$, respectively.
The node features of a graph are represented by a matrix $X\in \mathbb{R}^{|V|\times d_X}$, where $d_X$ is the feature dimension.
The edges are typically depicted by the adjacency matrix $A \in \mathbb{R}^{|V|\times |V|}$, where $A_{ij}=1$ if two nodes $v_i$ and $v_j$ are connected.

\fakeparagraph{Graph Neural Networks}
GNNs are widely-used solutions to acquire effective representations of graphs~\cite{tnnls21:gnn_survey1,tkde22:gnn_survey2}, which usually is followed by the use of MLPs for various downstream tasks.
The main idea of GNNs is to learn node representations by propagating and aggregating neighbor information, thereby integrating both node features and the graph’s topological structure.
In this paper, we focus the representative \textit{graph convolutional networks} (GCNs)~\cite{iclr17:gcn}, where the information propagation rule is defined as follows,
\begin{equation}\label{eq:gcn_pro}
        \medop{
            {H}_{l+1} = \sigma (\Tilde{{D}}^{-\frac{1}{2}}\Tilde{{A}}\Tilde{{D}}^{-\frac{1}{2}}{H}_{l}{W}_{l}).
        }
\end{equation}
Here ${H}_{l}\in \mathbb{R}^{|V|\times d_{H}}$ is the node embedding matrix of the $l$-th layer in GCNs;
$\Tilde{{A}}={A}+{I}_{|V|}$ is the adjacency matrix with added self connections and $\Tilde{D}$ is the corresponding degree matrix of the input graph $G$;
$\sigma$ is the non-linear activation function, such as $\text{ReLU}(\cdot)$.
After GNNs, to obtain graph-level representation $H_G$ from node embeddings $H_l$, a pooling operation $\text{Pool}(\cdot)$ is commonly used to aggregate embeddings of all nodes, which can be implemented as,
\begin{equation}\label{eq:g_em}
      \medop{
          H_G = \text{Pool}(H_l)=\text{Pool}(\{h_{l,v}|v\in G\}),
       }
\end{equation}
where $h_{l,v}$ is node $v$'s embedding of the $l$-th layer.
The operations mentioned above enable the capture of the entire graph's embedding, which utilizes both node features and the graph structure.

\subsection{Federated Graph Learning}
\fakeparagraph{Graph-level FGL Setting}
We consider a FGL scenario with a central server $S$ and $M$ clients denoted as $C=\{C_1,\dots,C_M\}$. 
Each client $C_i$ holds a set of private graphs $\mathcal{G}_i=\{G_{i,N_1},\dots,G_{i,N_i}\}$, where $N_i$ is the number of local graphs and $N=\sum_{i=1}^M N_i$ is the total graphs in the federation.
The FGL aims to collaboratively train a graph learning model $F$ (\eg graph classification) across $M$ clients based on GNNs, without sharing the private raw graphs (including node features and edges).
The objective of the FGL is to optimize learnable weights $W$ in $F$ based on $\mathcal{G}_{1}, \dots, \mathcal{G}_{M}$ and to find an optimal $W^*$ minimizing the overall loss of all FGL clients as follows,
\begin{equation}\label{eq:fed_obj}
    \medop{
        W^* = \operatorname*{argmin} \limits_{W} \sum_{i=1}^{M}  \frac{N_i}{N} \mathcal{L} (F_i,W, \mathcal{G}_{i}).
        }
\end{equation}

\fakeparagraph{Graph-level FGL Training}
Generally, the training process of FGL comprises the following steps~\cite{tist19:fl_survey}: 
At each global epoch, each client $C_i$ update the local model based on private dataset and send gradients $\nabla{W_i}$ to the server $S$.
Then server $S$ updates the global model weight using aggregated gradients as $W = W - \eta\sum_{i=1}^M{\frac{N_i}{N}\nabla{W_i}}$, and forwards $W$ back to the clients for next epoch of updates. 
By this process, a global model that learned from all clients' data can be obtained without sharing private graphs.
In the DLG problem, the shared gradients $\nabla W_i$ in training process is taken as available information for recovering the private training sample. 

\subsection{Threat Model}
\fakeparagraph{Adversary's Knowledge and Capability}
In this work, we adopt the honest-but-curious adversary, who eavesdrops on the communication between the server $S$ and any client $C_i$ during the federated graph learning.
In FGL training process, the adversary is access to: \textit{(i)} the shared gradients $\nabla{W_{i,i\in[M]}}$ of each clients, \textit{(ii)} the model parameters $W$ at each training epoch, and \textit{(iii)} an auxiliary graph dataset $\mathcal{G}_{aux}$ as the prior knowledge. 
Note that the last assumption is reasonable as an adversary may be an FGL clients with local graphs or collect some public graph datasets from similar scenarios.
In this work, we make a deeper understanding of information leakage from gradients in FGL under the three assumptions outlined above.
As next, we formally introduce the DLG attack for graphs (\ie the adversary's objective), where an adversary aims to recover the victim client's private graph based on the shared gradients in the training process of the federated graph learning.


\begin{definition} [DLG Attack for Graphs]
    In FGL scenario with $M$ clients, where each client $C_i, i\in [M]$ owns a set of graphs $\mathcal{G}_i$ as its private dataset, an adversary $\mathcal{A}$ in DLG is allowed to access to the victim client $C_v$'s gradients $\nabla{W_{v}}$, the federated graph model $F$ (\ie model parameters $W$) and an auxiliary graph dataset $\mathcal{G}_{aux}$.
    Then the adversary attempts to recover node features $\hat X$ and the graph structure (represented as adjacency matrix $\hat A$) close to a private graph $G$ (\ie the ground truth node features and graph structures) in $\mathcal{G}_v$.
\end{definition}
    

\section{Method}
\label{sec:method}

\begin{figure*}[t]
	\centering
	\includegraphics[scale=0.45]{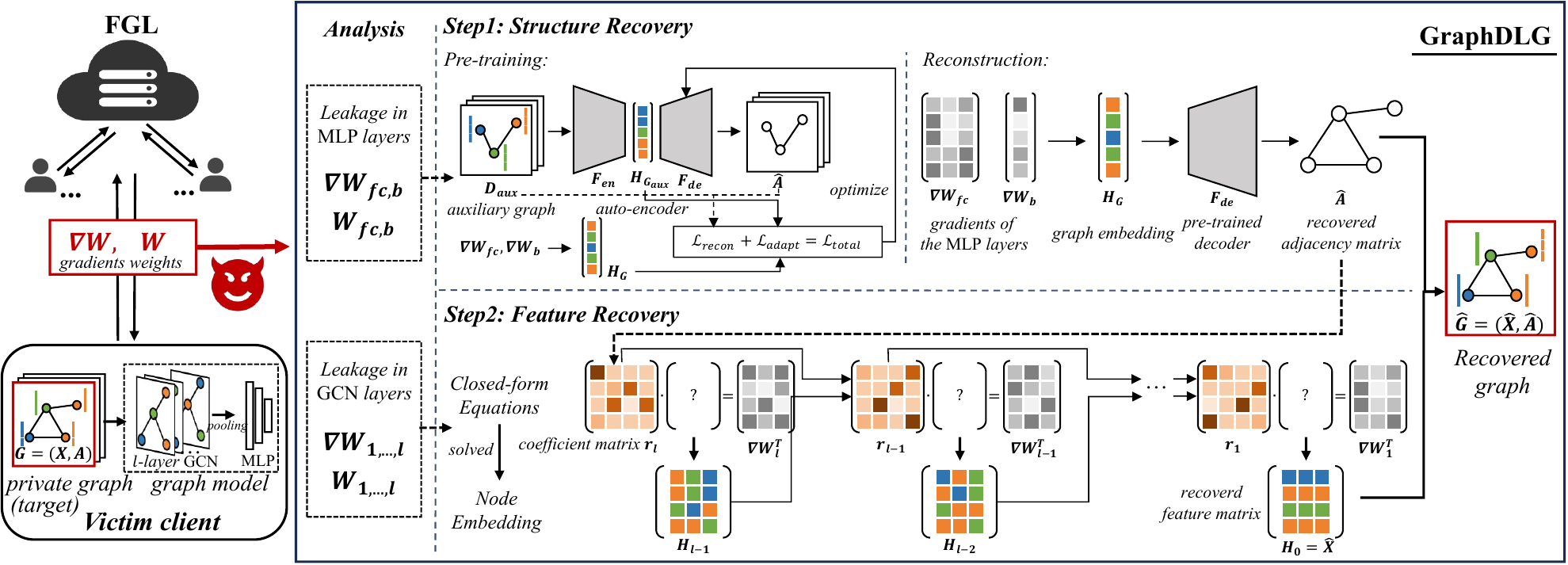}
	\caption{
        \small{ 
        Overview of \sysname. Firstly, a learned decoder can recover graph structure from graph embeddings leaked in MLP layers' gradients. 
        Then, node features can be recovered by recursively solving closed-form equations.
        Finally, the attacker obtained the whole graph.
        }
    }
	\label{fig:overview}
\end{figure*}

In this section, we propose \sysname, a novel framework to recover graphs from shared gradients in FGL (in \figref{fig:overview}).
We first analyze the data leakage through gradients from the theoretical view, providing a foundation for the main idea of \sysname (\secref{sec:method_analysis}).
Next, we devise an auto-encoder to recover the graph structure that utilizes the precisely leaked graph embeddings from our analysis and the auxiliary graphs (\secref{sec:method_structure}).
Finally, we integrate the recovered graph structure into the found closed-form recursive rules and propose a recovery algorithm for node features, thereby recovering an entire private graph of the victim FGL client (\secref{sec:method_feature}).


\subsection{Graph Data Leakage Analysis}
\label{sec:method_analysis}

As mentioned above, a key challenge to recovering graphs in the DLG is that the graph structure and node features are intertwined in the gradients by the forward process in FGL.
Exploring the relations between the graph structure and node features is beneficial to decoupling and recovering them.
Next, we briefly review the forward process in graph-level GNNs and analyze the information leaked through the shared gradients and model parameters.


\fakeparagraph{Forward Process} We begin the analysis with the model forward process, where three parts in the graph-level FGL (including GCN layers $H_{l}$, MLP layers $\hat{Y}$ and loss function $L$) are calculated as:

\begin{equation}\label{eq:modeling_gcn}
    \medop{
    H_l=\sigma_l(\bar A\overbrace{\sigma_{l-1}(\bar A\underbrace{\sigma_{l-2}(\bar A\psi(X,A)W_{l-2})}_{H_{l-2}}W_{l-1})}^{H_{l-1}}W_l)
    }
\end{equation}
\begin{equation}\label{eq:modeling_mlp}
    \medop{
        \hat Y=W_{fc}(M_{p}H_l)^\top+W_{b}
    }
    \hspace{9.0em}
\end{equation}
\begin{equation}\label{eq:modeling_loss}
    \medop{
        L=\mathcal{L} (\hat Y,Y)
        }
        \hspace{12.8em}
\end{equation}
In GCN layers, we follow the widely-used model in~\cite{iclr17:gcn} and rewrite the forward process in a recursive manner, which is more convenient for analysis.
Specifically, $H_{l}$, $H_{l-1}$ and $H_{l-2}$ are the different GCN layers with $\bar A=\Tilde{D}^{-\frac{1}{2}}\Tilde{A}\Tilde{D}^{-\frac{1}{2}}$ and $\psi(X,A)$ denotes input of the $(l-2)_{\mathrm{th}}$ GCN layer $H_{l-2}$.
In MLP layers, the pooling operation (\eg sum pooling or mean pooling) can be denoted by multiplying the matrix 
$M_p$ (details in \appref{sec:app_pool}). 
$W_{fc}$ and $W_b$ are parameters for the linear layer (our analysis can also be extended to multiple layers in following discussion).
After that, the loss function $\mathcal{L}(\cdot)$ is calculated.
In this work, we calculate the gradients of the model parameters $W$ in each layer, \ie, $\nabla{W}=\frac{\partial L}{\partial W}$, through which we can derive the closed-form relations between layer inputs and the corresponding gradients for both the MLP layers and the GCN layers.



\subsubsection{Leakage in MLP Layers}
Firstly, we can calculate the gradients of parameters $W_{fc}$ and $W_{b}$ in the MLP layer by treating the pooling operation as matrix multiplication as follows,

\begin{equation}\label{eq:mlp_g}
    \medop{
        \nabla{W_{fc}} = (\frac{\partial L}{\partial \hat Y})^\top M_p H_l, \quad \nabla{W_{b}} = (\frac{\partial L}{\partial \hat Y})^\top.
    }
\end{equation}
Then the pooled graph embedding $M_p H_l$, \ie, {input of the MLP layer}, can be recovered by any row of the gradients $\nabla{W_{fc}}$ and $\nabla{W_b}$,
\begin{equation}\label{eq:mlp_h_g}
    \medop{
        M_p H_l = \nabla{W_{fc}}^j / \nabla{W_{b}}^j
    },
\end{equation}
where $\nabla{W_{fc}}^j$ and $\nabla{W_{b}}^j$ denotes any row of $\nabla{W_{fc}}$ and $\nabla{W_{b}}$, respectively.
As for multiple MLP layers, once the last layer's input (\ie previous layers' output) is determined, the input of the first layer can be retrieved by solving linear equations recursively $W_{fc, t}X_{t-1}^\top+W_{b, t}=X_{t}$ $(t=1,\dots,l_{MLP})$.
\textit{{
As a result, the MLP layer can easily leak its input $H_{G}$ through the gradients of its parameters.
}}

\subsubsection{Leakage in GCN Layers}
Next, we further study the information that might be leaked through the gradients in GCN layers.
Similarly, we can derive gradients based on \equref{eq:modeling_gcn}$\sim$\equref{eq:modeling_loss}, and we have the following analytical expressions for the $l$-GCN layers.


\begin{lemma} [Gradients in an $l$-layer GCN]
    \label{lemma:gradients}
    For the GCN layers in FGL, the gradients $\nabla W_l$, $\nabla W_{l-1}$, $\nabla W_{l-2}$ can be formally expressed by the node embeddings from their previous layer $H_{l-1}$, $H_{l-2}$, $H_{l-3}$ respectively and the normalized graph adjacency matrix $\bar A$,
    \begin{equation}\label{eq:gcn_g_l}
        \medop{
            \nabla{W_{l}} = H_{l-1}^\top \bar{A}^\top (M_p^\top\frac{\partial L}{\partial \hat Y}W_{fc}\odot\sigma_l') 
        }
        \hspace{12.8em}
    \end{equation}
    \begin{equation}\label{eq:gcn_g_l-1}
       \medop{
            \nabla{W_{l-1}} = H_{l-2}^\top \bar{A}^\top (\bar{A}^\top (M_p^\top\frac{\partial L}{\partial \hat Y}W_{fc}\odot\sigma_l')W_l^\top\odot \sigma_{l-1}') 
        }
        \hspace{6em}
    \end{equation}
    \begin{equation}\label{eq:gcn_g_l-2}
        \medop{
            \nabla{W_{l-2}} = H_{l-3}^\top \bar{A}^\top (\bar{A}^\top(\bar{A}^\top (M_p^\top\frac{\partial L}{\partial \hat Y}W_{fc}\odot\sigma_l')W_l^\top\odot \sigma_{l-1}')W_{l-1}^\top\odot \sigma_{l-2}')
        }
    \end{equation}
    where $\sigma'$ represents the derivative of the activate function $\sigma$.
    The term $\frac{\partial L}{\partial \hat Y}$ in \equref{eq:gcn_g_l} can be derived from \equref{eq:mlp_g}, while $M_p$, $W_{fc}$ and $W_l$ are already known model parameters. 
    The proof of \lemmaref{lemma:gradients} is in \appref{sec:app_gradient}. The gradients of activate function $\sigma_{l}'$ can be derived from the output of $l$-th layer and more details are in \appref{sec:app_pool}.
\end{lemma}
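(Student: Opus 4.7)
The plan is to establish the three formulas by applying matrix-form backpropagation, starting from the loss and walking backward through the MLP block and then through each GCN layer in turn. I would begin by introducing pre-activations $Z_k := \bar A H_{k-1} W_k$ so that $H_k = \sigma_k(Z_k)$; this turns every activation derivative into a clean Hadamard product with $\sigma_k'$ and keeps the chain rule purely mechanical.

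First I would extract the sensitivity $\partial L / \partial H_l$ from the MLP block. Because $\hat Y = W_{fc}(M_p H_l)^\top + W_b$ (\equref{eq:modeling_mlp}), backpropagating through the pooling matrix $M_p$ and the linear layer $W_{fc}$ yields $\partial L / \partial H_l = M_p^\top (\partial L/\partial \hat Y) W_{fc}$. This is exactly the factor appearing inside every GCN-layer gradient stated in the lemma, so reusing it gives a uniform starting point.

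Next I would derive each $\nabla W_k$ in order. For $\nabla W_l$, applying the chain rule through $H_l = \sigma_l(Z_l)$ and $Z_l = \bar A H_{l-1} W_l$ gives $\nabla W_l = (\bar A H_{l-1})^\top \big((\partial L/\partial H_l) \odot \sigma_l'\big) = H_{l-1}^\top \bar A^\top \big(M_p^\top (\partial L/\partial \hat Y) W_{fc} \odot \sigma_l'\big)$, matching \equref{eq:gcn_g_l}. For $\nabla W_{l-1}$ I would push the sensitivity one layer deeper using $\partial L / \partial H_{l-1} = \bar A^\top (\partial L / \partial Z_l) W_l^\top$, then repeat the activation-and-outer-product step at layer $l-1$ to obtain \equref{eq:gcn_g_l-1}. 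Equation \equref{eq:gcn_g_l-2} follows from a third iteration of the same step, and the pattern extends to arbitrary depth by routine induction: each additional layer wraps the previously accumulated signal in a fresh factor $\bar A^\top(\cdot)W_{k+1}^\top \odot \sigma_k'$.

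The main obstacle I expect is purely bookkeeping: tracking transposes and the order in which the Hadamard products with $\sigma_k'$ interleave with multiplications by $\bar A^\top$ and $W_{k+1}^\top$. Because $\bar A$ left-multiplies while $W_k$ right-multiplies, every backward step produces a left $\bar A^\top$ and a right $W_{k+1}^\top$ wrapped around the previous sensitivity, and the element-wise factors $\sigma_k'$ do not commute with the adjacent matrix products — which is precisely why the nested expressions in \equref{eq:gcn_g_l-1} and \equref{eq:gcn_g_l-2} cannot be flattened. Once the transpose and grouping conventions are pinned down at the start, and the MLP-side quantity $\partial L / \partial \hat Y$ is taken from \equref{eq:mlp_g} as already recoverable, the derivation is mechanical.
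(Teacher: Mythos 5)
Your proposal is correct and arrives at exactly the three formulas of the lemma; the layer-by-layer backward decomposition is the same as the paper's, but the formal machinery differs. The paper works entirely with matrix differentials and traces: it states two Facts (namely $dL = \mathrm{tr}\bigl(\frac{\partial L}{\partial X}\, dX^\top\bigr)$ and $dL = \mathrm{tr}(dL)$), expands $dL$ through the MLP and GCN layers, and moves each activation derivative around inside the trace via the identity $\mathrm{tr}(A(B\odot C)) = \mathrm{tr}((A\odot B^\top)C)$, finally reading off each $\nabla W_k$ as the coefficient of $dW_k^\top$. You instead introduce pre-activations $Z_k = \bar A H_{k-1} W_k$ and propagate sensitivities backward using the two standard backprop rules: $\partial L/\partial Z = (\partial L/\partial H)\odot \sigma'$ for elementwise $H = \sigma(Z)$, and $\partial L/\partial X = A^\top(\partial L/\partial Z)B^\top$ for $Z = AXB$ with $A,B$ constant. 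These rules are precisely what the paper's trace manipulations establish inline, so the two derivations track the same objects; indeed your $\partial L/\partial Z_k$ coincides with the paper's coefficient matrix $\mathbf{r}_k$ up to the left factor $\bar A^\top$, since the paper sets $\mathbf{r}_k = \bar A^\top\bigl((\partial L/\partial H_k)\odot\sigma_k'\bigr)$. What your route buys: it is shorter, modular, and makes the recursive structure of the subsequent theorem transparent, because each backward step visibly wraps the previous sensitivity in $\bar A^\top(\cdot)W_{k+1}^\top$ with a fresh Hadamard factor $\sigma_k'$. What the paper's route buys: self-containedness — every step reduces to the two stated Facts and trace cyclicity, so nothing about matrix-valued backpropagation is taken on faith. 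To make your version equally airtight, add a one-line justification of the rule $\partial L/\partial X = A^\top(\partial L/\partial Z)B^\top$ (e.g., via $dL = \mathrm{tr}\bigl((\partial L/\partial Z)^\top A\, dX\, B\bigr)$ and cyclicity), since that is exactly where the paper's trace argument does its work; your treatment of the MLP block, giving $\partial L/\partial H_l = M_p^\top(\partial L/\partial\hat Y)W_{fc}$ despite the transpose in $\hat Y = W_{fc}(M_p H_l)^\top + W_b$, is already correct as stated.
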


Observing repeated components for each layer in GCNs, we can rewrite them and derive a closed-form recursive rule as follows.

    \begin{equation}\label{eq:gcn_r_l}
        \medop{
            \nabla{W_{l}} = H_{l-1}^\top \mathbf{r}_l, 
            \hspace{2.9em}
            \mathbf{r}_l=\bar{A}^\top(M_p^\top\frac{\partial L}{\partial \hat Y}W_{fc}\odot\sigma_l')
        } 
        \hspace{1.0em}
    \end{equation}
    \begin{equation}\label{eq:gcn_r_l-1}
        \medop{
            \nabla{W_{l-1}} = H_{l-2}^\top \mathbf{r}_{l-1}, 
            \hspace{1em}
            \mathbf{r}_{l-1}=\bar{A}^\top (\mathbf{r}_l W_l^\top\odot \sigma_{l-1}') 
        }
        \hspace{1.5em}
    \end{equation}
    \begin{equation}\label{eq:gcn_r_l-2}
        \medop{
            \nabla{W_{l-2}} = H_{l-3}^\top \mathbf{r}_{l-2}, 
            \hspace{1em}
            \mathbf{r}_{l-2}=\bar{A}^\top (\mathbf{r}_{l-1} W_{l-1}^\top\odot \sigma_{l-2}')
            \hspace{0.5em}
        }
    \end{equation}
    For the $i_{th}$ layer, we conjecture that a similar equation still holds,
    \begin{equation}\label{eq:gcn_r_i}
        \medop{
            \nabla{W_{i}} = H_{i-1}^\top \mathbf{r}_{i}, 
            \hspace{1em}
            \mathbf{r}_{i}=\bar{A}^\top (\mathbf{r}_{i+1} W_{i+1}^\top\odot \sigma_{i}'), i=1,\dots,l-1
        }.
    \end{equation}
    
    In this work, we can prove the above conjecture and derive the following recursive rule for GCN layers (proof in \appref{sec:app_gradient}).
\begin{theorem} [Recursive Rule for Gradients]
        \label{thm:gradients}
    The gradients $\nabla W_i$ in GCN layers can be depicted by its input embedding $H_{l-1}$ and a coefficient matrix $\mathbf{r_i}$ in a closed-form recursive equation, \ie,

    \begin{equation}\label{eq:gcn_r_i_thm1}
       \begin{aligned}
        \medop{
           \nabla{W_{i}} = H_{i-1}^\top \mathbf{r}_{i}
        },
       \end{aligned}
    \end{equation}
    where the coefficient matrix $\mathbf{r}_i$ is defined as follows,
    \begin{equation}\label{eq:r_i_thm1}
        \begin{aligned} 
        \medop{
            \mathbf{r}_{i}
        }&
        \medop{
            =
        }
                \left \{
                \begin{array}{ll}
                     \medop{
                        \bar{A}^\top(M_p^\top\frac{\partial L}{\partial \hat Y}W_{fc}\odot\sigma_i'),
                     } & \medop{i=l}\\
                     \medop{
                        \bar{A}^\top (\mathbf{r}_{i+1} W_{i+1}^\top\odot \sigma_{i}'),
                    } & \medop{i=1,\dots, l-1}
                \end{array} \right.
        \end{aligned}
    \end{equation}
    
\end{theorem}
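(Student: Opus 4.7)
The plan is to establish the identity by downward induction on the layer index $i$, from $i=l$ down to $i=1$, leveraging the pattern already exposed in \lemmaref{lemma:gradients}. The base case $i=l$ is exactly \equref{eq:gcn_g_l}, so there is nothing to do. For convenience I would first strengthen the inductive hypothesis to include the intermediate identity $\mathbf{r}_{i+1} = \bar{A}^\top \tfrac{\partial L}{\partial Z_{i+1}}$, where $Z_j := \bar{A} H_{j-1} W_j$ denotes the pre-activation of the $j$-th GCN layer so that $H_j = \sigma_j(Z_j)$. This strengthening is consistent with the three explicit cases computed in \lemmaref{lemma:gradients}, which serve both as a template and as a sanity check for what must be verified in general.

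Assuming the strengthened hypothesis at level $i+1$, the step to level $i$ amounts to two applications of the matrix chain rule. First, differentiating $Z_i = \bar{A} H_{i-1} W_i$ with respect to $W_i$ yields
\begin{equation*}
    \nabla W_i \;=\; (\bar{A} H_{i-1})^\top \tfrac{\partial L}{\partial Z_i} \;=\; H_{i-1}^\top\,\bar{A}^\top \tfrac{\partial L}{\partial Z_i},
\end{equation*}
which identifies $\mathbf{r}_i = \bar{A}^\top \tfrac{\partial L}{\partial Z_i}$. Second, propagating back one layer gives $\tfrac{\partial L}{\partial Z_i} = \tfrac{\partial L}{\partial H_i} \odot \sigma_i'$ (from the element-wise relation $H_i = \sigma_i(Z_i)$) and $\tfrac{\partial L}{\partial H_i} = \bar{A}^\top \tfrac{\partial L}{\partial Z_{i+1}} W_{i+1}^\top$ (from the linear-and-aggregation relation $Z_{i+1} = \bar{A} H_i W_{i+1}$). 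Combining these with the strengthened inductive hypothesis $\mathbf{r}_{i+1} = \bar{A}^\top \tfrac{\partial L}{\partial Z_{i+1}}$ gives $\tfrac{\partial L}{\partial H_i} = \mathbf{r}_{i+1} W_{i+1}^\top$, and therefore $\mathbf{r}_i = \bar{A}^\top (\mathbf{r}_{i+1} W_{i+1}^\top \odot \sigma_i')$, matching \equref{eq:r_i_thm1}. This also preserves the strengthened form $\mathbf{r}_i = \bar{A}^\top \tfrac{\partial L}{\partial Z_i}$ needed to advance the induction further, closing the argument.

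The main obstacle I anticipate is purely bookkeeping: keeping the matrix shapes and transpose conventions consistent as $\bar{A}$ migrates through successive chain-rule applications, and ensuring the Hadamard product with $\sigma_i'$ is placed correctly relative to $\bar{A}^\top$ and $W_{i+1}^\top$. Symmetry of $\bar{A} = \Tilde{D}^{-1/2}\Tilde{A}\Tilde{D}^{-1/2}$ makes $\bar{A}^\top$ and $\bar{A}$ interchangeable, which mitigates part of this. Beyond that, there is essentially one non-trivial identity to verify, namely the propagation of $\mathbf{r}$ from layer $i+1$ to layer $i$, and the rest is notational cleanup that parallels the concrete computations already carried out for layers $l$, $l{-}1$, and $l{-}2$ in \lemmaref{lemma:gradients}.
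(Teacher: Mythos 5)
Your proposal is correct and takes essentially the same route as the paper's proof: a downward induction on the layer index carried by a strengthened inductive invariant, with the base case read off from Lemma~\ref{lemma:gradients}. The paper's invariant $dL = \mathrm{tr}(\mathbf{r}_i W_i^\top\, dH_{i-1}^\top)$ is, via its Fact~\ref{lemma:trace}, exactly your identity $\partial L/\partial H_{i-1} = \mathbf{r}_i W_i^\top$ (equivalently $\mathbf{r}_i = \bar{A}^\top \partial L/\partial Z_i$ in your pre-activation notation), so your two chain-rule steps per layer are the same manipulations the paper performs with trace differentials and the Hadamard-trace identity, just phrased in standard backpropagation language.
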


\noindent\thmref{thm:gradients} implies that \textit{once graph structure $\bar{A}$ is obtained, $\mathbf{r}_i$ can be then calculated by solving a linear equations}, $\nabla{W_i}^\top=\mathbf{r}_i^\top H_{i-1}$ and then we can recursively compute $r_i$ and $H_i$ alternatively until input of the first GCN layer $H_0$ (\ie, node features $X$) is recovered.




\subsubsection{Takeaways}

Above analysis provide us two crucial insights. 

\noindent\textit{(i) Graph embeddings $H_{G}$ can be easily recovered from gradients.
    }
    {
    Thus, we can utilize the $H_{G}$, an intermediate closer to the original graph, rather than the output loss, to recover the original graph.  
    }

\noindent\textit{(ii) 
\thmref{thm:gradients} not only decouples node features and graph structure in gradients but also establishes a recovery order for them.}
{This simplification aids in the task of simultaneously recovering both graph structure and node features from gradients.
}
Based on the two results, we introduce our \sysname, which first recovers the graph structure from the easily obtained graph embeddings $H_G$ (in \secref{sec:method_structure}) and then recovers the node features using \thmref{thm:gradients} (in \secref{sec:method_feature}).

\subsection{Structure Recovery with Hetero-Priors}
Following the established recovery order, we design an auto-encoder based on FGL models and use its decoder to transform obtained graph embeddings $H_{G}$ (in \secref{sec:method_analysis}) into the graph structure ${\hat A}$.

\label{sec:method_structure}
\fakeparagraph{Main idea} 
    As above, we have successfully recovered the graph embeddings $H_{G}$ (\ie, final output of GCN layers), which integrates both the graph structure and node features.
    Thus, we can extract the graph structure from $H_G$. 
    Specifically, we design an auto-encoder to fully utilize the knowledge in the FGL model and priors in auxiliary graphs $\mathcal{G}_{aux}$, where the FGL model is fixed as the encoder to transform the original input graph into the embeddings $H_{G}$ and then the decoder is with the capability to recover the graph structure from the graph embeddings.
    We further design an adapter to handle the heterogeneity between the prior and target graphs.

\fakeparagraph{{Auto-Encoder for Graph Structure}}
We introduce the proposed auto-encoder to recover the graph structure as follows.

\begin{equation}\label{eq:gae}
   \medop{
        H_G = F_{en}(X, A), \quad \hat A = F_{de}(H_G)
    }
\end{equation}
\textit{(i) Encoder Module}: 
Here $F_{en}(\cdot)$ is the encoder, which shares the same parameters as the federated graph model in FGL except for the MLP layers.
This design aims to utilize the knowledge in the federated graph model to enhance recovery.
The encoder takes the graph's node features $X$ and adjacency matrix $A$ as inputs and outputs the graph embedding $H_{G}$.
\textit{(ii) Decoder Module}: 
Here $F_{de}(\cdot)$ is the decoder which takes graph embedding $H_G$ as input and outputs the recovered adjacency matrix $\hat A$.
We employ a three-layer MLP ending with the Sigmoid function as the decoder to transform the graph embedding vector back to the adjacency matrix.
\textit{(iii) Loss function}: 
    We use the auxiliary graph datasets $\mathcal{G}_{aux}$ to train the decoder $F_{de}$ with the reconstruction loss as follows,

\begin{equation}\label{eq:gae_loss_r}
    \medop{
        L_{recon} = \frac{1}{|\mathcal{G}_{aux}|}\sum_{(X, A)\in \mathcal{G}_{aux}}\left \| F_{de}(F_{en}(X, A)) - A\ \right \|^2
    }
\end{equation}
Thereby, the \sysname can use the auxiliary priors $\mathcal{G}_{aux}$ to train the decoder and recover the graph structure of the target graphs.

\fakeparagraph{Utilizing Heterogeneous Priors}
    As we have obtained the ground-truth embeddings $H_{G}$ from target graphs $\mathcal{G}_{v}$, we can align the distribution of the target graphs $\mathcal{G}_v$ and the auxiliary graphs $\mathcal{G}_{aux}$ using their graph embeddings from the same FGL model instead of the unavailable private original graphs.
    Specifically, we use the domain adaption~\cite{aaai08:mmd_transfer} and adopt the widely-used \textit{Maximum Mean Discrepancy} (MMD) as the distance metric to minimize divergences between graph embeddings from $\mathcal{G}_{aux}$ and $\mathcal{G}_{v}$, so that the decoder trained on auxiliaries $\mathcal{G}_{aux}$ can still perform well on $\mathcal{G}_{v}$.
    Formally, the distance metric $MMD(\cdot, \cdot)$ is defined as follows, 

\begin{equation}\label{eq:mmd}
    \medop{
        MMD(\mathcal{G}_{i}, \mathcal{G}_{j}) = \Vert \sum_{G_{i,k}\in \mathcal{G}_{i}}{\phi(H_{G_{i,k}})}/{|\mathcal{G}_{i}|}-\sum_{G_{j,k}\in \mathcal{G}_{j}}{\phi(H_{G_{j,k}})}/{|\mathcal{G}_{j}|}  \Vert
    },
\end{equation}
where $\phi$ denotes the mapping to a Reproducing Kernel Hilbert Space (RKHS).
We add $MMD(\mathcal{G}_{aux}, \mathcal{G}_{v})$ as the regularization to the reconstruction loss and the total loss function is as follows,

\begin{equation}\label{eq:gae_loss_total}
    \medop{
        L_{total} = L_{recon} + MMD(\mathcal{G}_{aux}, \mathcal{G}_{v})
    }
\end{equation}
Here, $\lambda$ serves as the weight coefficient for normalizing $L_{adapt}$. 

\begin{algorithm}[htb]
    \caption{Graph Node Features Recovery (GNFR)}
    \label{alg:attack}
    \SetKwInOut{Input}{input}
    \SetKwInOut{Output}{output}
    \Input{
        FGL models $W$, victim clients gradients $\nabla{W}$;\\
        recovered adjacency matrix $\hat{A}$;\\
    }
    \Output{The victim's node features $\hat{X}_0$;\\}
    $\Tilde{A}=\hat{A}+I$, $\bar A=\Tilde{D}^{-\frac{1}{2}}\Tilde{A}\Tilde{D}^{-\frac{1}{2}}$;\\
    \For{layer $k \leftarrow l$ to $1$}{
        \If{$k == l$}{
            Initialized $\mathbf{r}_k$ using \equref{eq:mlp_g} and \equref{eq:gcn_r_l};\\
        }
        \Else{
            Derive $\sigma_k$ from $\hat X_k$;\\
            $\mathbf{r}_k\leftarrow \bar{A}^\top(\mathbf{r}_{k+1}W_{k+1}^\top\odot \sigma_{k})$;\\
        }
        $B \leftarrow flatten(\nabla{W_k}^\top)$;\\
        $R \leftarrow \mathbf{r}_k^\top\otimes I_{\hat d_{X_{k-1}}}$;\\
        Solving linear equations $Rx=B$;\\
        $\hat X_{k-1}\leftarrow unflatten(x)$;\\
    }
    \Return $\hat X_0$;
\end{algorithm}


\fakeparagraph{Discussion} Next, we provide a brief discussion on the auxiliary dataset $\mathcal{G}_{aux}$. 
A key advantage of our approach is that it does not require auxiliary graphs to follow the same distribution as the victim’s dataset, making it broadly applicable in practice.
Specifically, an attacker may exploit its local training graphs, publicly available graph datasets, or randomly generated graph structures.
To validate this flexibility, we conduct experiments to examine three auxiliary data scenarios, \ie auxiliaries from the same distribution, from heterogeneous distributions, and from random generation.


\subsection{Feature Recovery via Closed-form Rules}
\label{sec:method_feature}
In this subsection, we introduce the \textit{Graph Node Features Recovery} (\textsl{GNFR}) algorithm, which utilizes the estimated graph structure $\hat{A}$ to recover the node features $\hat{X}$ based on our recursive closed-form rule in \thmref{thm:gradients}.
Specifically, we illustrate the graph node features recovery algorithm in \algref{alg:attack}.
The main idea of \algref{alg:attack} is that if we can recursively calculate node embeddings in each GCN layer of the FGL, we can ultimately recover the original node features $\medop{\hat X_0}$.
The algorithm starts from the final GCN layer and proceeds layer by layer.
For the $k_{th}$ GCN layer, the coefficient matrix $\mathbf{r}_k$ is computed based on values in the $(k+1)_{th}$ layer (in lines 3–7).
In lines 8-11, we calculate node embedding matrix of the previous layer (\ie $\medop{\hat X_{k-1}}$) by solving linear equations.
To convert the matrix equation into a standard linear system, we flatten $\medop{\nabla{W_k}^\top}$ into a vector in line 8 and convert the coefficient matrix into $\medop{\mathbf{r}_k^\top \otimes I_{d_{\hat X_{k-1}}}}$ in line 9.
Here $\otimes$ denotes the Kronecker product and $\medop{I_{d_{\hat X_{k-1}}}}$ is the identity matrix with dimensions $\medop{d_{\hat X_{k-1}} \times d_{\hat X_{k-1}}}$, and 
$\medop{d_{\hat X_{k-1}}}$ is the last dimension of $\medop{\hat X_{k-1}}$.
After solving the standard linear equations in line 10, we reshape the vectorized results $x$ to obtain the node embedding matrix $\medop{\hat  X_{k-1}}$ in line 11. 
Through the above iterative process in \algref{alg:attack}, we can effectively reconstruct node features, ultimately retrieving the complete graph from the shared gradients.

\section{Experiments}
\label{sec:experiment}
We conduct extensive experiments to verify the effectiveness of proposed \sysname, aiming to answer the following three questions.
\textbf{Q1}: State-of-the-art solutions are not tailored for the graph data, prompting us to evaluate their performance on graphs, \ie, \textit{can \sysname achieve better performance for graph recovery?}
\textbf{Q2}: Prior solutions overlook the heterogeneity between priors and target data, so we question that \textit{how heterogeneity affects graph recovery and whether our \sysname can improve recovery by the adapter designed to handle heterogeneous priors?}
\textbf{Q3}: Existing defense methods (\eg gradient compression~\cite{nips19:dlg} and differential privacy~\cite{iclr18:dp}) are commonly employed to counter the DLG attacks, so we question that \textit{to what extent are they against the proposed \sysname?}

\subsection{Experimental Setup} 


\fakeparagraph{Datasets} 
We use five graph classification datasets~\cite{tud}, including 
\textit{(i) Molecule datasets} (MUTAG, PTC\_MR, and AIDS), where each node represents an atom (with a one-hot vector as its feature), and edges denote chemical bonds.
\textit{(ii) Bio-informatics datasets} (ENZYMES and PROTEINS), where each node represents an amino acid with features indicating type and edges connecting neighboring amino acids.
Unlike previous experimental setups in DLG~\cite{nips21:gias,cvpr22:ggl} which randomly split the graph datasets, we adopt the Dirichlet distribution to establish the heterogeneity between auxiliaries and targets.
In ablation studies, we further conduct experiments in two scenarios: \textit{(i)} target and auxiliary graphs are from different datasets and \textit{(ii)} the auxiliary graphs are randomly generated.

\fakeparagraph{Baselines} 
We compare the proposed \sysname with eight baselines for DLG problem, including Random, DLG~\cite{nips19:dlg}, iDLG~\cite{corr20:idlg}, InverGrad~\cite{nips20:ig}, GI-GAN~\cite{nips21:gias,cvpr22:ggl}, GRA-GFA~\cite{tnse23:empirical_graph}, TabLeak~\cite{icml23:tableak} and  Graph Attacker~\cite{arxiv24:Graph_Attacker}.
More details of baselines are in \appref{sec:app_baselines}.

\fakeparagraph{Configurations}
In following experiments, we implement global graph learning model as a two-layer GCN with a linear layer as the classifier.
The hidden size of the GCN layers is set to $16$.
For our proposed \sysname, the hidden size of $F_{de}$ is set to $[100, 250]$, and the output size is determined by the maximum number of nodes in the target graphs.
We use the identity function to approximate $\phi$ since the encoder has projected the graph feature into a high-dimension space.
We set $\lambda$ to $0.2$ based on a parameter search, and the auto-encoder model is optimized using Adam with a learning rate of $0.001$.
For any baselines that include optimization process, the maximum number of recovery iterations is set to $100$.
Other hyperparameters follow the settings reported in the original paper.
Due to limited space, more setups are described in \appref{sec:app_setup} and more results of various GCN configurations (including batch sizes, GCN layers and hidden sizes) are presented in \appref{sec:app_others}.

\fakeparagraph{Evaluation Metrics}
\textit{(i) Node Features}: we use the mean square error (MSE) to evaluate recovery performance.
Since node features represent node types, we use node class accuracy (ACC) to evaluate results semantically.
\textit{(ii) Graph Structure}: we employ area under the ROC curve (AUC) and average precision (AP) as metrics, which are widely used to measure binary classification performance across various thresholds. 
For a straightforward interpretation, we also present the accuracy of edges using a threshold of $0.5$.
\textit{Specifically, a lower MSE and a higher AUC indicate better recovery performance.}

\subsection{Performance Comparison}
\label{sec:exp_baseline}


\begin{table}[]
\vspace{-1ex}
\setlength{\tabcolsep}{2pt}
\caption{Comparison of performance on feature and structure recoveries between \sysname and baselines. The best result is \textbf{bold} and the second result is \uline{underlined}.}
\label{tab:baseline}
\vspace{-2ex}
\resizebox{\columnwidth}{!}{%
\begin{tabular}{@{}c|c|cc|ccc@{}}
\toprule[1.5pt]
 &  & \multicolumn{2}{c|}{\textbf{Node Feature}} & \multicolumn{3}{c}{\textbf{Adjacency Matrix}} \\ \cmidrule(l){3-7} 
\multirow{-2}{*}{\textbf{Dataset}} & \multirow{-2}{*}{\textbf{Method}} & \textbf{MSE} & \textbf{ACC(\%)} & \textbf{AUC} & \textbf{AP} & \textbf{ACC (\%)} \\ \toprule[1.5pt]
 & Random & 0.3366 & 13.03 & 0.4933 & 0.1301 & 49.45 \\
 & DLG~\cite{nips19:dlg} & 1.2265 & 18.47 & 0.5226 & 0.1415 & 66.10 \\
 & iDLG~\cite{corr20:idlg} & 1.1063 & 19.66 & 0.5243 & 0.1409 & 65.88 \\
 & InverGrad~\cite{nips20:ig} & 0.9164 & 28.04 & 0.5183 & 0.1417 & 51.76 \\
 & GI-GAN~\cite{nips21:gias,cvpr22:ggl} & {\ul 0.1053} & 51.86 & 0.5081 & 0.1246 & 13.96 \\
 & GRA-GRF~\cite{tnse23:empirical_graph} & 0.1082 & {\ul 71.37} & 0.4891 & 0.1255 & 49.74 \\
   & TabLeak~\cite{icml23:tableak} & 0.3914 & 21.16 & {\ul 0.5323} & {\ul 0.1436} & {\ul 75.40} \\
 & Graph Attacker~\cite{arxiv24:Graph_Attacker} & 1.0366 & 27.74 & 0.5223 & 0.1287 & 35.32 \\
\multirow{-7}{*}{MUTAG} & \textbf{\sysname (ours)} & \textbf{0.0844} & \textbf{74.35} & \textbf{0.9042} & \textbf{0.7556} & \textbf{93.41} \\ \midrule
 & Random & 0.3334 & 6.95 & 0.4858 & 0.1661 & 48.75 \\
 & DLG~\cite{nips19:dlg} & 1.0289 & 12.33 & 0.5372 & 0.1992 & 62.63 \\
 & iDLG~\cite{corr20:idlg} & 1.0608 & 11.13 & 0.5307 & 0.1864 & 65.41 \\
 & InverGrad~\cite{nips20:ig} & 0.9154 & 16.19 & 0.5469 & 0.1939 & 52.46 \\
 & GI-GAN~\cite{nips21:gias,cvpr22:ggl} & 0.0586 & 5.81 & 0.4919 & 0.1529 & 28.05 \\
 & GRA-GRF~\cite{tnse23:empirical_graph} & {\ul 0.0372} & {\ul 70.45} & 0.4753 & 0.1506 & 48.64 \\
 & TabLeak~\cite{icml23:tableak} & 0.3308 & 9.75 & {\ul 0.5656} & {\ul 0.2040} & {\ul 73.32} \\ 
 & Graph Attacker~\cite{arxiv24:Graph_Attacker} & 0.9893 & 15.15 & 0.5495 & 0.1779 & 38.29 \\
\multirow{-7}{*}{PTC\_MR} & \textbf{\sysname (ours)} & \textbf{0.0329} & \textbf{72.53} & \textbf{0.8893} & \textbf{0.6808} & \textbf{91.08} \\ \midrule
 & Random & 0.3329 & 2.44 & 0.5022 & 0.2057 & 50.19 \\
 & DLG~\cite{nips19:dlg} & 1.0834 & 5.78 & 0.5475 & 0.2192 & 61.07 \\
 & iDLG~\cite{corr20:idlg} & 1.1559 & 5.99 & 0.5437 & 0.2187 & 61.49 \\
 & InverGrad~\cite{nips20:ig} & 0.8535 & 8.79 & {\ul 0.5526} & 0.2221 & 44.88 \\
 & GI-GAN~\cite{nips21:gias,cvpr22:ggl} & 0.0262 & 6.17 & 0.5072 & 0.1793 & 30.53 \\
 & GRA-GRF~\cite{tnse23:empirical_graph} & {\ul 0.0238} & {\ul 53.79} & 0.4754 & 0.1719 & 48.45 \\
 & TabLeak~\cite{icml23:tableak} & 0.3015 & 3.67 & 0.5505 & {\ul 0.2229} & {\ul 72.13} \\ 
 & Graph Attacker~\cite{arxiv24:Graph_Attacker} & 0.9150 & 7.86 & 0.5383 & 0.1901 & 38.20 \\
\multirow{-7}{*}{AIDS} & \textbf{\sysname (ours)} & \textbf{0.0225} & \textbf{62.18} & \textbf{0.7677} & \textbf{0.5113} & \textbf{83.23} \\ \midrule
 & Random & 0.3315 & 32.53 & 0.5000 & 0.1725 & 49.91 \\
 & DLG~\cite{nips19:dlg} & 2.8092 & 37.43 & 0.5273 & 0.1834 & 62.56 \\
 & iDLG~\cite{corr20:idlg} & 1.5751 & 36.40 & 0.5263 & 0.1817 & 64.53 \\
 & InverGrad~\cite{nips20:ig} & 1.1449 & 40.14 & 0.5290 & {\ul 0.1836} & 55.83 \\
 & GI-GAN~\cite{nips21:gias,cvpr22:ggl} & {\ul 0.2083} & 41.19 & 0.4978 & 0.1627 & 58.98 \\
 & GRA-GRF~\cite{tnse23:empirical_graph} & 0.2229 & {\ul 52.26} & 0.4819 & 0.1595 & 49.32 \\
 & TabLeak~\cite{icml23:tableak} & 0.5706 & 34.88 & {\ul 0.5310} & 0.1835 & {\ul 72.30} \\ 
 & Graph Attacker~\cite{arxiv24:Graph_Attacker} & 1.3590 & 40.64 & 0.5251 & 0.1734 & 37.69 \\
\multirow{-7}{*}{ENZYMES} & \textbf{\sysname (ours)} & \textbf{0.1772} & \textbf{60.60} & \textbf{0.6615} & \textbf{0.3911} & \textbf{85.59} \\ \midrule
 & Random & 0.3354 & 31.71 & 0.4969 & 0.1733 & 50.00 \\
 & DLG~\cite{nips19:dlg} & 1.5237 & 38.05 & 0.5286 & 0.1879 & 63.68 \\
 & iDLG~\cite{corr20:idlg} & 1.4736 & 36.67 & 0.5288 & 0.1875 & 64.41 \\
 & InverGrad~\cite{nips20:ig} & 1.1160 & 41.47 & {\ul 0.5301} & {\ul 0.1889} & 55.03 \\
 & GI-GAN~\cite{nips21:gias,cvpr22:ggl} & {\ul 0.2192} & {\ul 49.15} & 0.5123 & 0.1710 & 43.26 \\
 & GRA-GRF~\cite{tnse23:empirical_graph} & 0.2198 & 40.44 & 0.4802 & 0.1603 & 48.90 \\
 & TabLeak~\cite{icml23:tableak} & 0.5904 & 35.11 & 0.5255 & 0.1856 & {\ul 72.54} \\
 & Graph Attacker~\cite{arxiv24:Graph_Attacker} & 1.2923 & 42.51 & 0.5242 & 0.1745 & 37.46 \\
\multirow{-7}{*}{PROTEINS} & \textbf{\sysname (ours)} & \textbf{0.1935} & \textbf{62.23} & \textbf{0.6630} & \textbf{0.3744} & \textbf{85.24} \\ \bottomrule[1.5pt]
\end{tabular}%
}
\vspace{-5.6ex}
\end{table}

\begin{table*}[htb]
\caption{\small{Visualization of the recovery performance for \sysname and baselines}.}
\label{tab:baseline_visual}
\vspace{-2ex}
\resizebox{0.95\textwidth}{!}{%
\begin{tabular}{@{}ccccccccccc@{}}
\toprule[1.5pt]
\textbf{} & \textbf{} & \textbf{Random} & \textbf{DLG} & \textbf{iDLG} & \textbf{InverGrad} & \textbf{GI-GAN} & \textbf{GRA-GRF} & \textbf{Tableak} & \textbf{Graph-Attacker}& \textbf{\sysname (ours)} \\ \toprule[1.5pt]
\textbf{Node Feature} & \multicolumn{1}{c|}{\textbf{MSE/ACC(\%)}} & 0.3039/7.14 & 1.3186/14.29 & 1.0636/21.43 & 0.8969/35.71 & 0.1008/57.14 & 0.1068/\textbf{78.57} & 0.4203/21.43 & 1.0295/28.57 & \textbf{0.0816/78.57} \\
\textbf{Adjacency Matrix} & \multicolumn{1}{c|}{\textbf{AUC/AP/ACC(\%)}} & 0.4926/0.1604/46.94 & 0.6058/0.1902/66.33 & 0.5108/0.1599/59.69 & 0.4865/0.1585/44.90 & 0.5120/0.1563/17.35 & 0.5863/0.1817/53.06 & 0.5382/0.1614/74.49 & 0.4978/0.1525/31.12 & \textbf{0.9622/0.8832/93.88} \\
\textbf{\begin{tabular}[c]{@{}c@{}}Ground-truth/\\ Reconstructed graph\end{tabular}} & \multicolumn{1}{c|}{\adjustbox{valign=c}{\includegraphics[scale=0.135]{./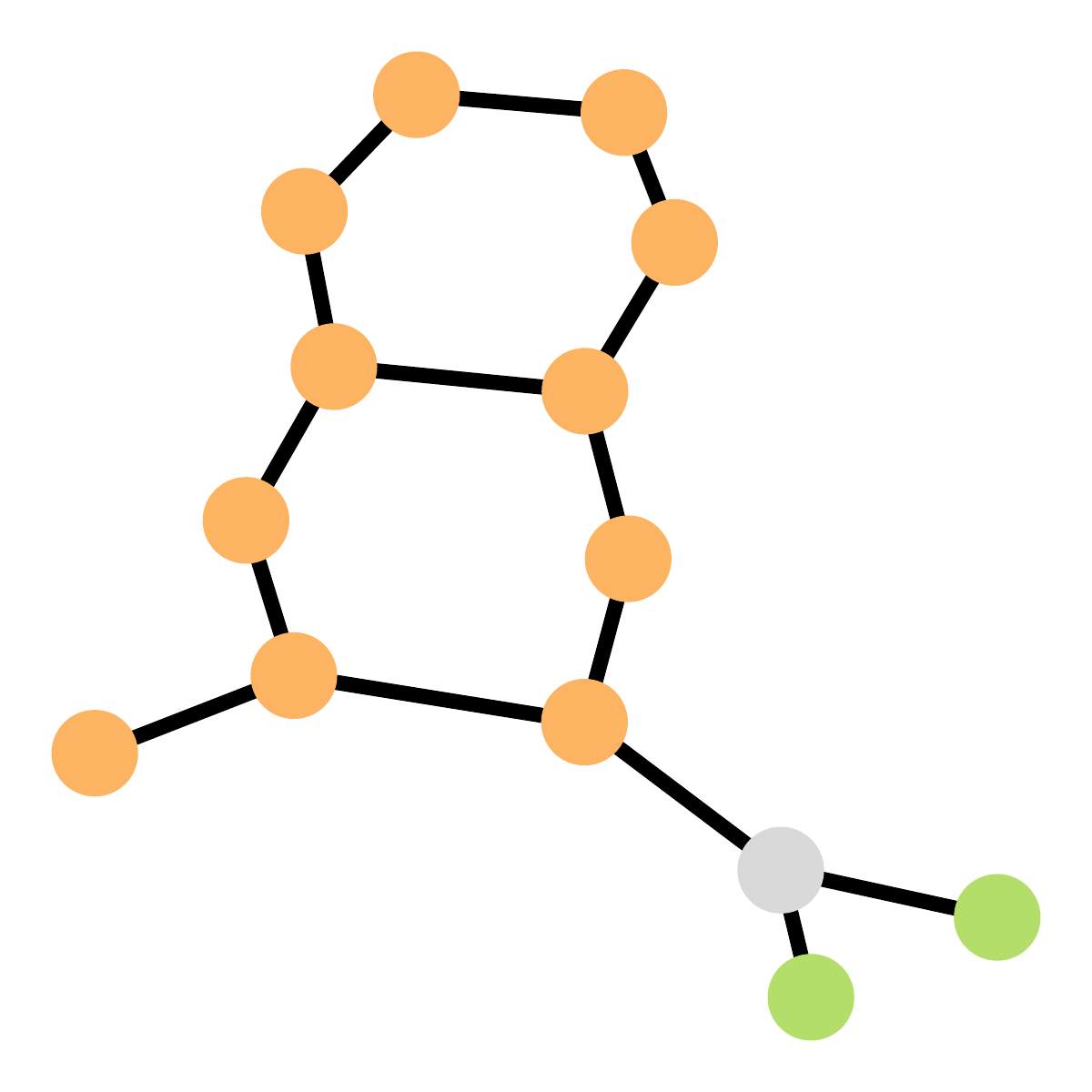}}} & \adjustbox{valign=c}{\includegraphics[scale=0.135]{./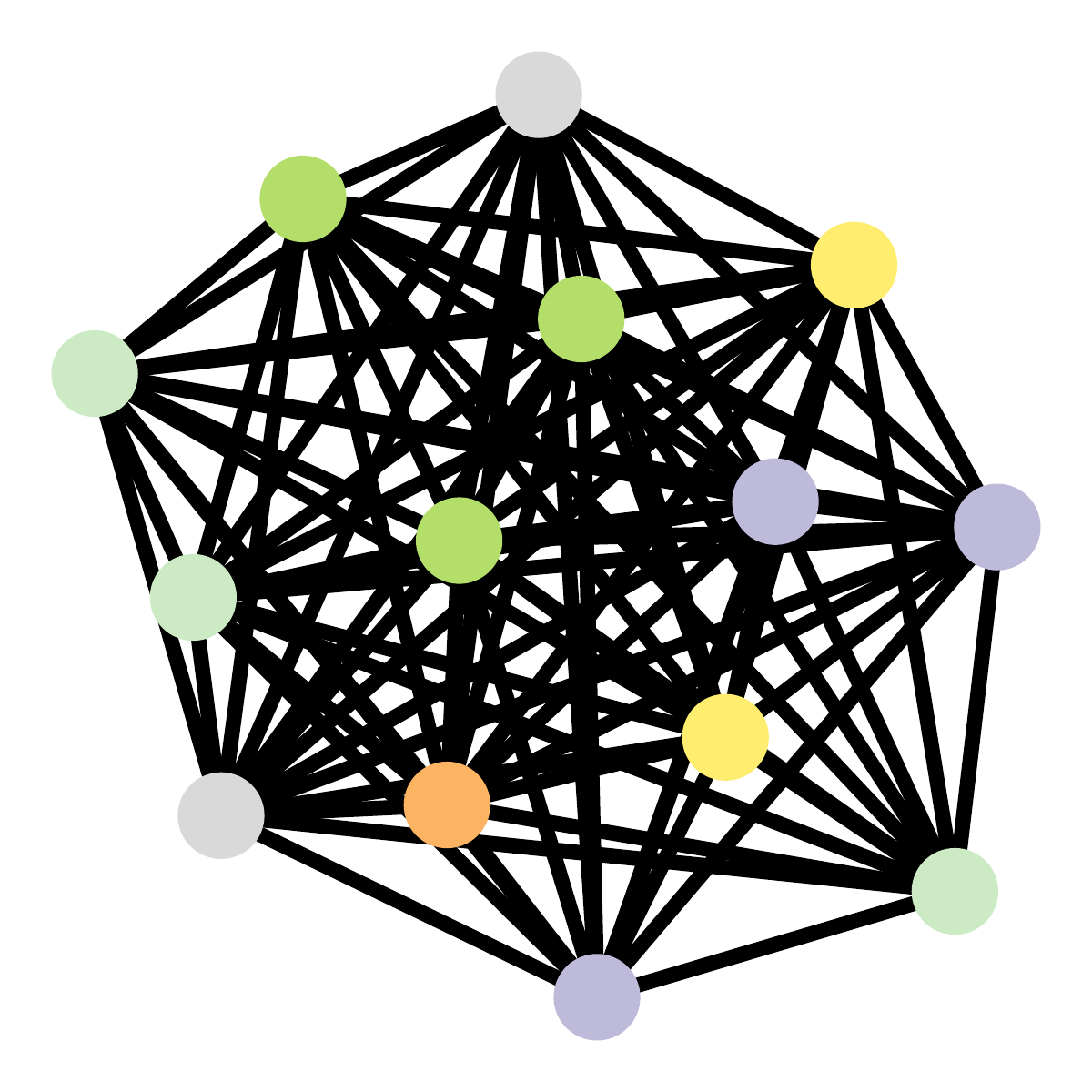}} & \adjustbox{valign=c}{\includegraphics[scale=0.135]{./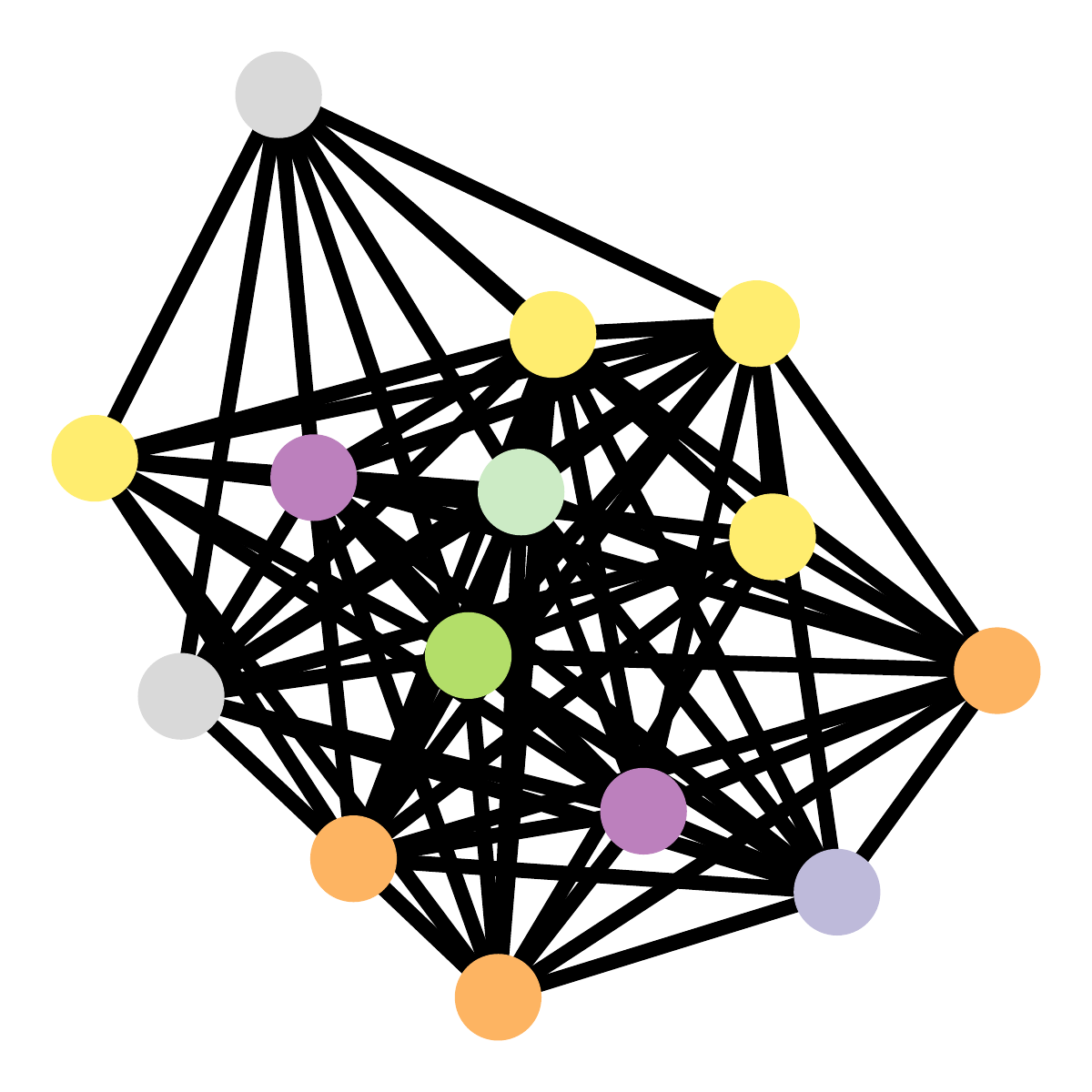}} & \adjustbox{valign=c}{\includegraphics[scale=0.135]{./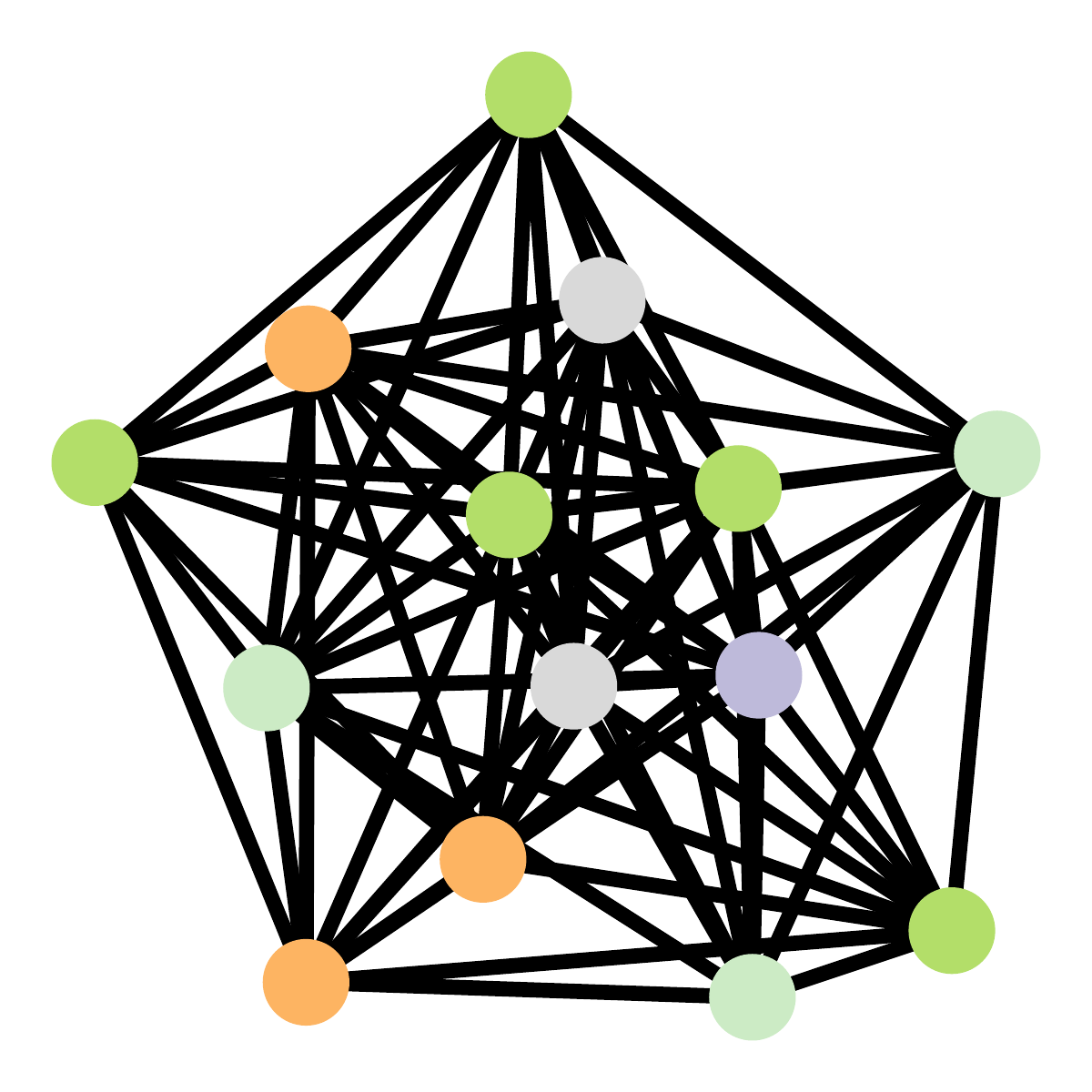}} & \adjustbox{valign=c}{\includegraphics[scale=0.135]{./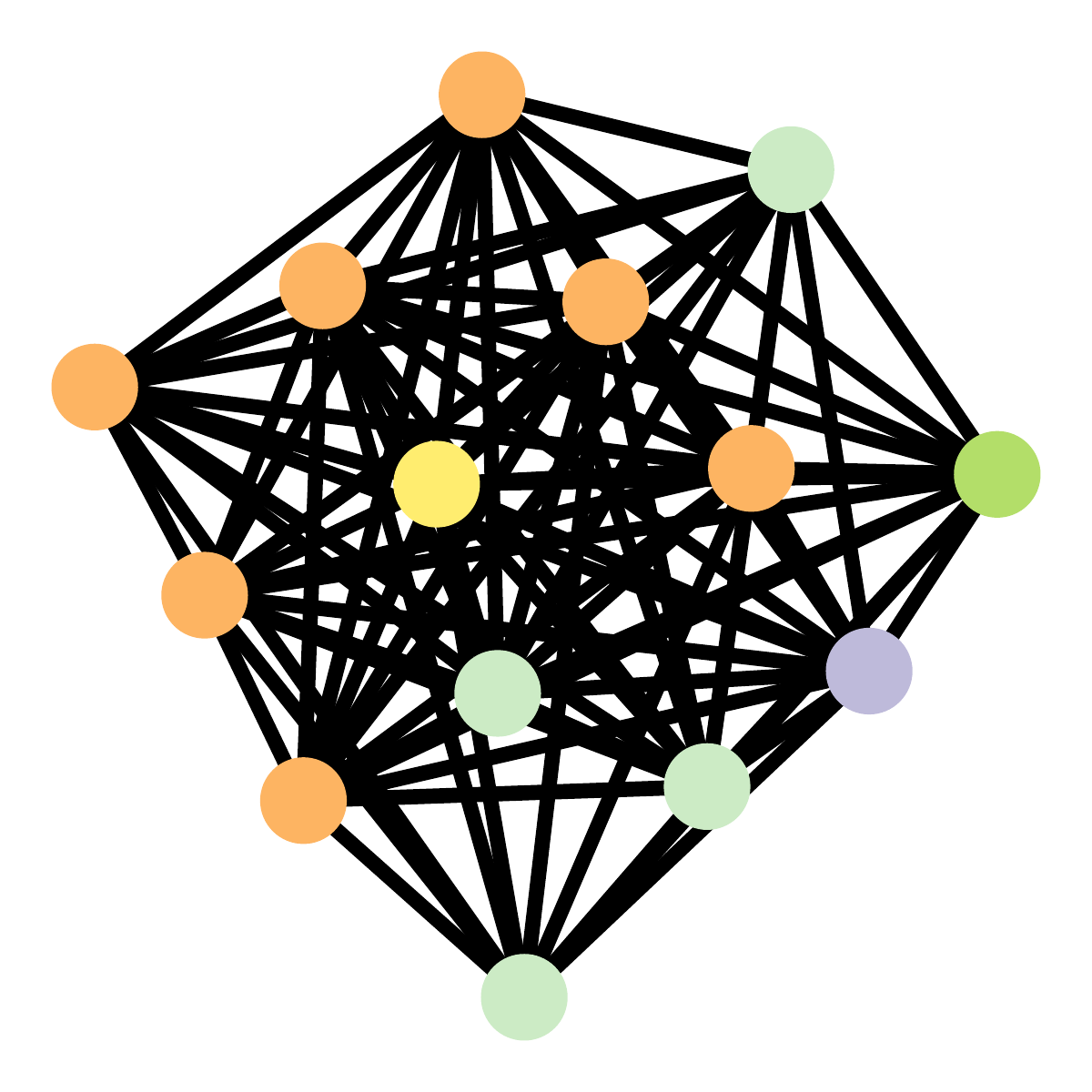}} & \adjustbox{valign=c}{\includegraphics[scale=0.135]{./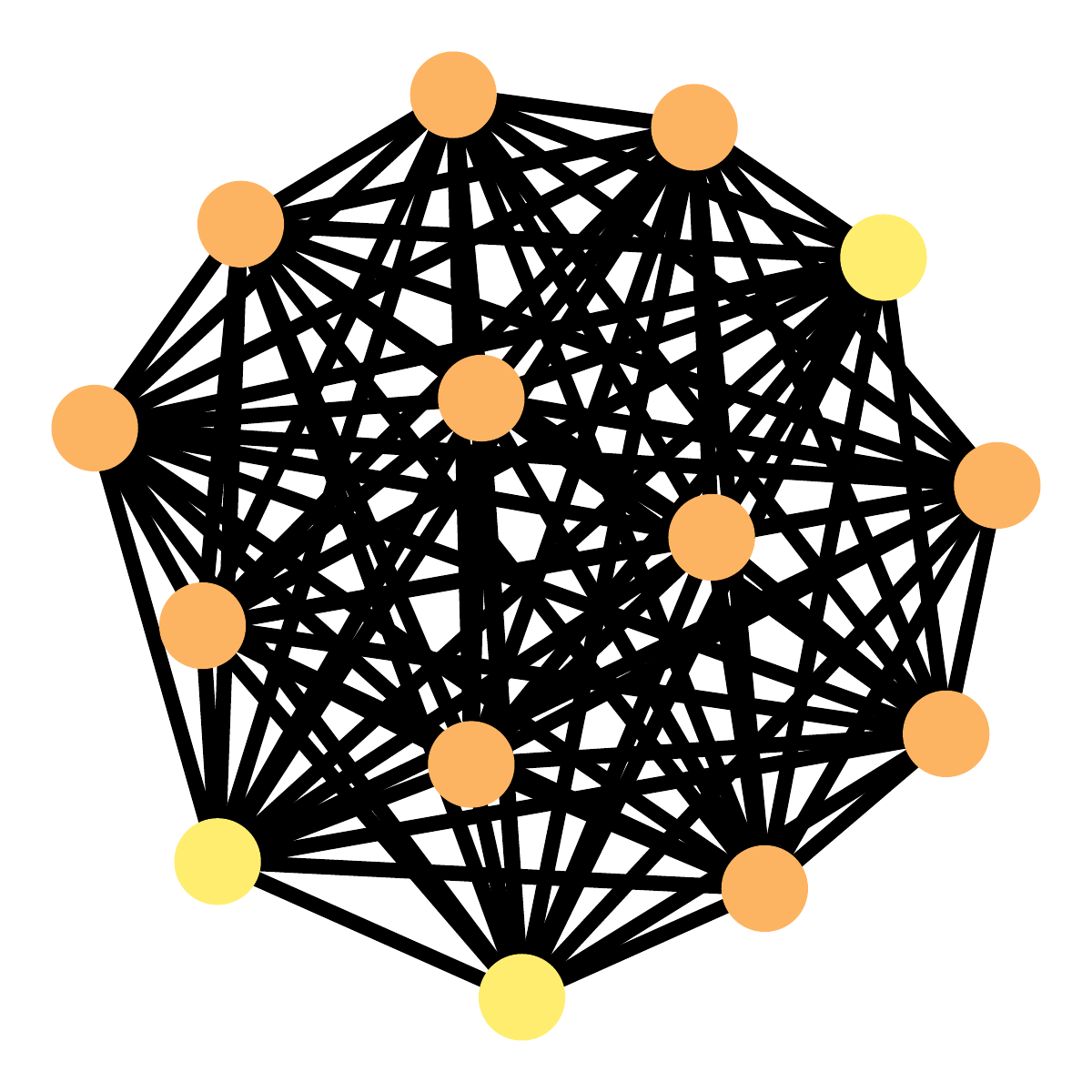}} & \adjustbox{valign=c}{\includegraphics[scale=0.135]{./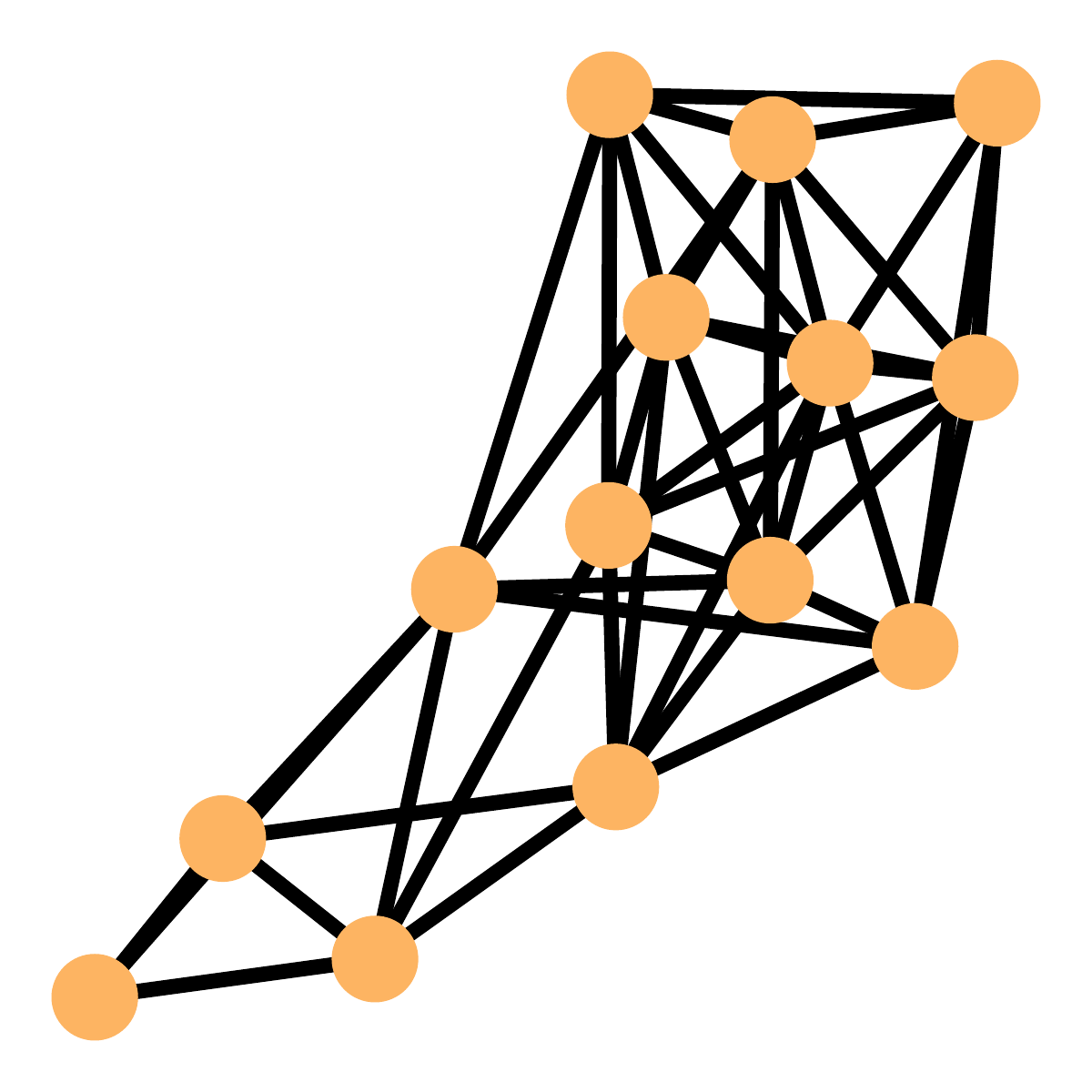}} & \adjustbox{valign=c}{\includegraphics[scale=0.135]
{./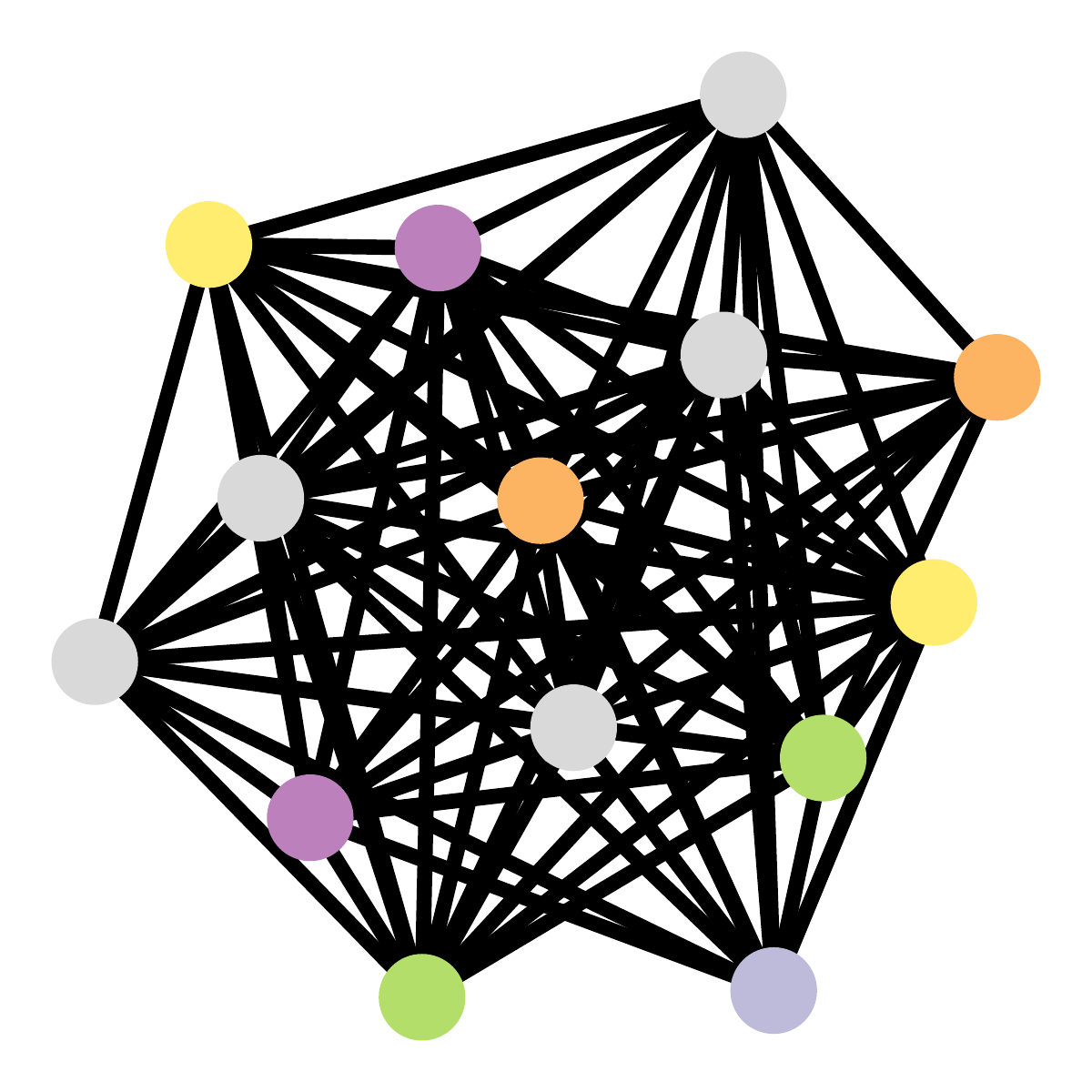}} & \adjustbox{valign=c}{\includegraphics[scale=0.135]
{./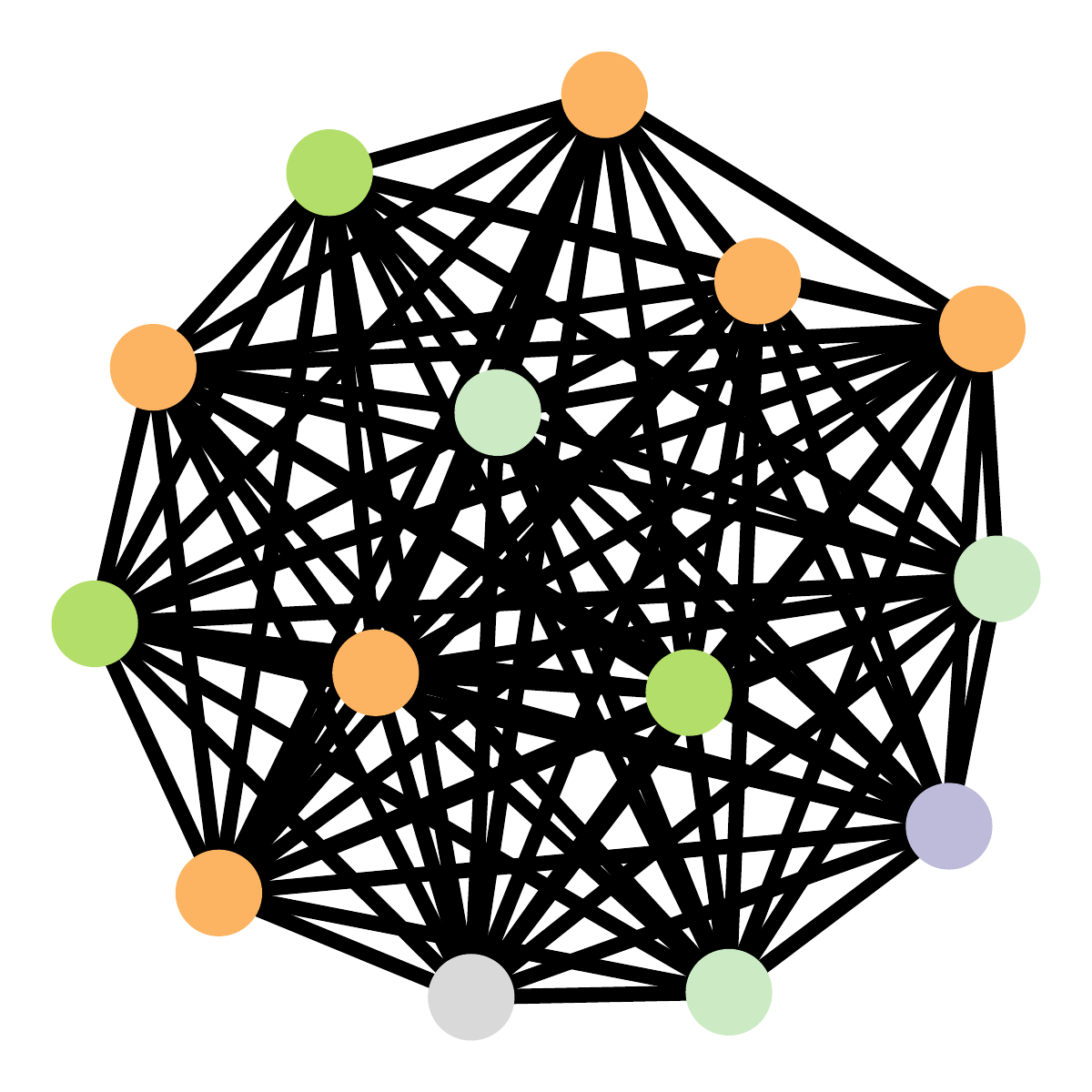}} & \adjustbox{valign=c}{\includegraphics[scale=0.135] 
{./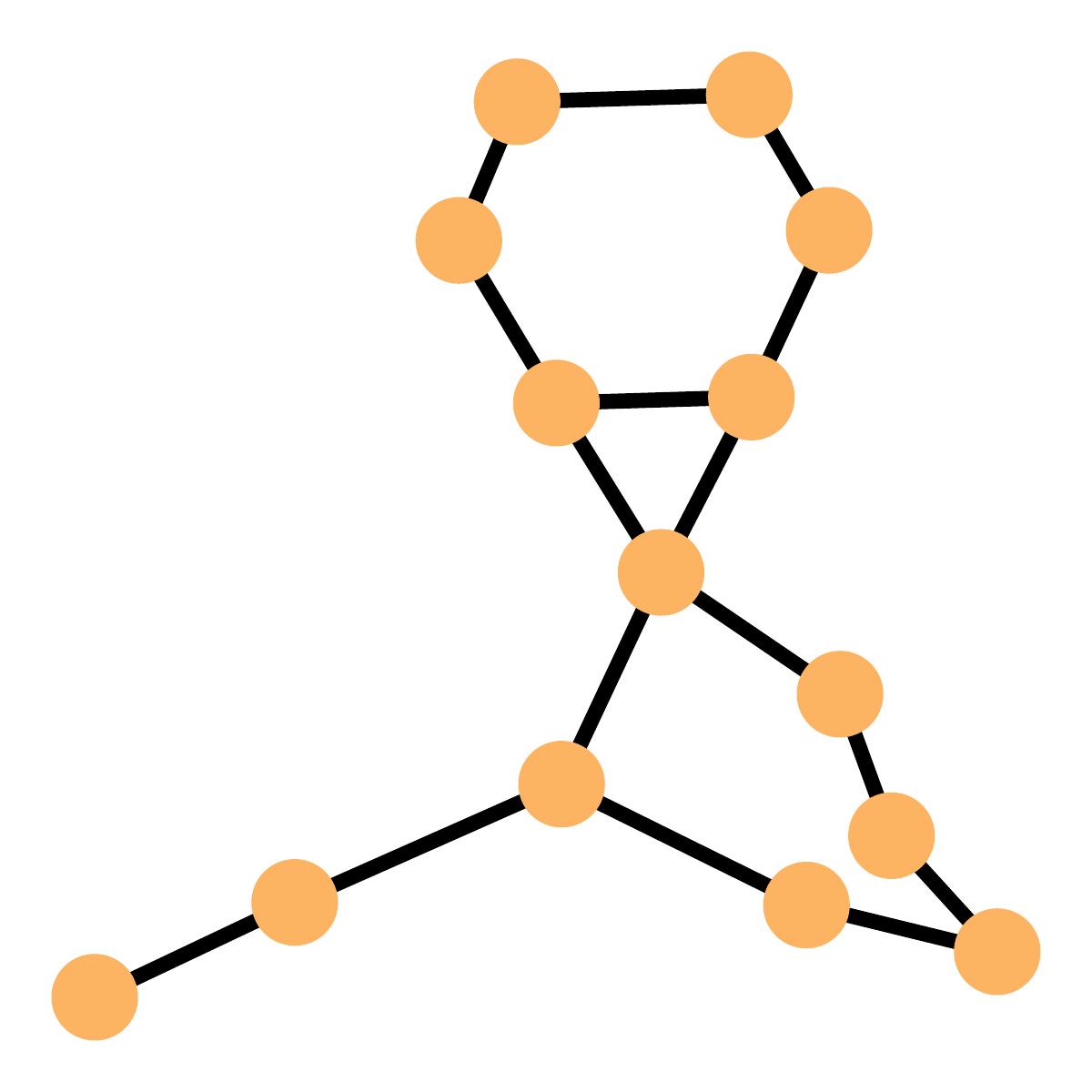}} \\ \bottomrule[1.5pt]
\end{tabular}%
}
\vspace{-1.5ex}
\end{table*}

To answer \textbf{Q1}, we compare our method with eight baselines in terms of recovering graph structure and features.
The results are presented in \tabref{tab:baseline}.
Overall, our \sysname outperforms the baselines in structure and feature recoveries, with a reduction of over $5.46\%$ in node features by MSE and an increase of over $25.04\%$ in the adjacency matrix by AUC.
A case study is also provided in \tabref{tab:baseline_visual} to demonstrate this superiority.
Detailed analysis is as follows.

\fakeparagraph{Results on Graph Structure}
Regarding the graph structure recovery, significant improvements are observed across all metrics compared to baselines.
As depicted in \tabref{tab:baseline}, the AUC scores of the baselines are around $0.5$ across all datasets, indicating performance akin to random guessing, while our proposed \sysname achieves AUC values ranging from $0.66$ to $0.9$.
This validates the effectiveness of our design in \secref{sec:method}, where we focus on initially recovering only the graph structure with leaked graph embeddings and priors, thereby simplifying the recovery process.
As baseline methods rely on iterative optimizations to recover edges from gradients, this task remains highly challenging due to the mixed feature and structural information in gradients, leading to poor performance.

\fakeparagraph{Results on Node Features}
We have observed notable improvements in feature recovery, as shown in \tabref{tab:baseline}, particularly due to a reduction in MSE.
Methods employing priors, such as GI-GAN, GRA-GRF, and our proposed \sysname, consistently outperform those based solely on optimization process with gradients, reducing MSE errors by at least $0.1\times$.
Among these three methods, our \sysname stands out by solving closed-form equations that clarify the relationship between gradients and features, leading to an average MSE reduction of $17.6\%$.
Our method also achieves the highest accuracy in terms of ACC over all datasets, particularly excelling with the AIDS dataset, which includes 37 node types.

\fakeparagraph{Case Study}
The visualizations of graph recovery in \tabref{tab:baseline_visual} demonstrate the strong capability of our approach in recovering the graph data.
Specifically, the AUC of our \sysname is 0.96, whereas others range from 0.49$\sim$to 0.60.
From the visualizations, we can observe that other baselines tend to reconstruct fully connected graphs, which are far away from the ground-truth graph structure to be reconstructed. 
Furthermore, the visualization results support that higher AUC, AP, and ACC metrics consistently correspond to better graph structure recovery performance.
Similar observations can also be found in additional cases, as shown in \appref{sec:app_visualization}.

\subsection{Ablation Studies}

\begin{figure}[t]
        \subfloat[\scriptsize{AUC of recovered structure}]{
		\includegraphics[scale=0.235]{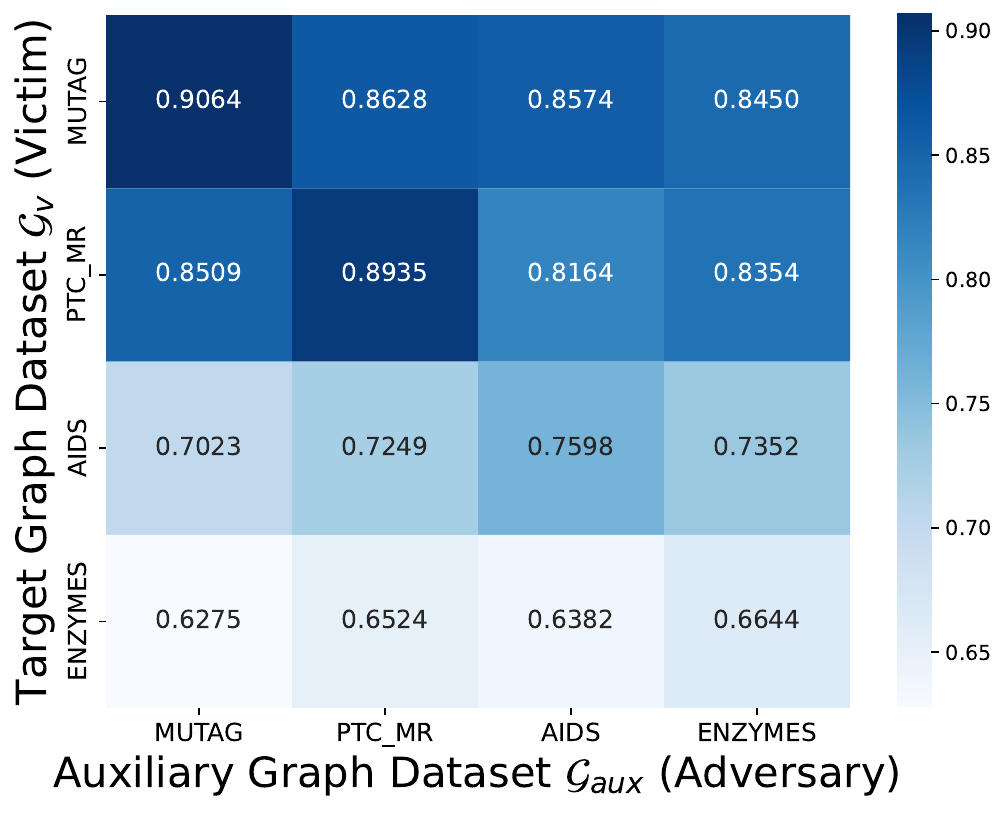}
		\label{fig:abla_auc}
	}
	\subfloat[\scriptsize{AP of recovered structure}]{
		\includegraphics[scale=0.235]{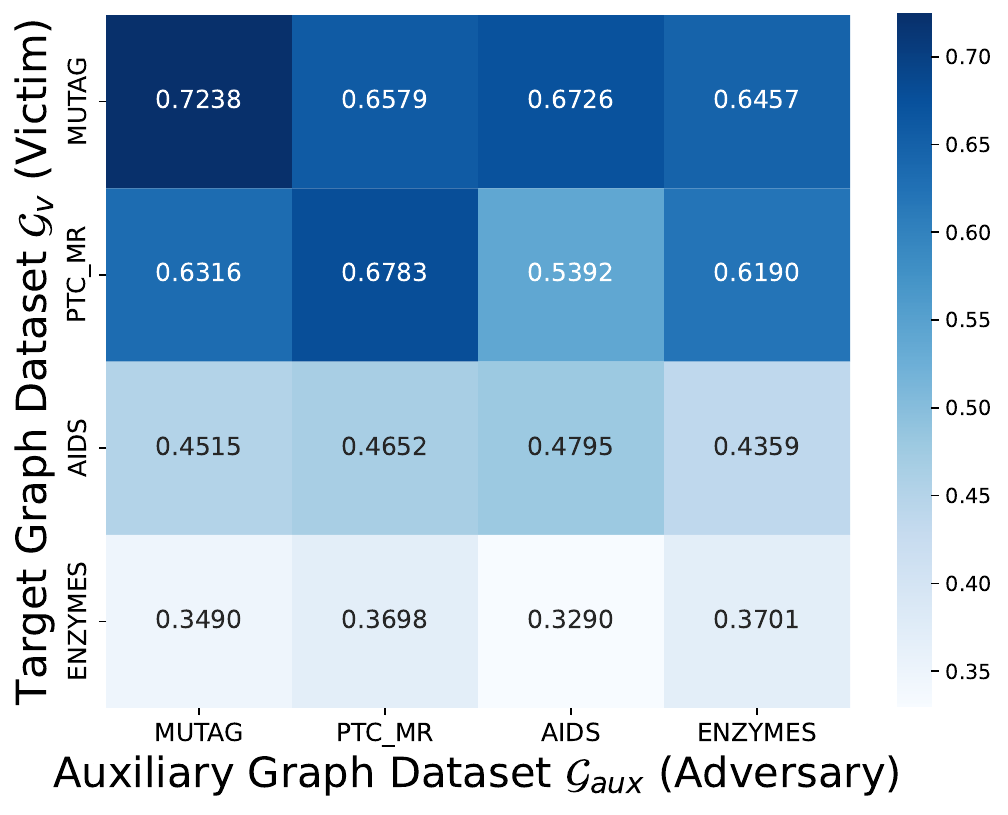}
		\label{fig:abla_ap}
	}
    \caption{\small{Structure recovery performance with hetero-priors, \ie target and auxiliary graphs datasets are different.}}
	\label{fig:abla}
    
\end{figure}


To answer \textbf{Q2}, we utilize four datasets to explore more heterogeneous setting scenarios, where the adversary employs \sysname with diverse auxiliary graphs (\eg from MUTAG) to recover victim clients' private graphs from a different dataset (\eg AIDS).
Unlike previous studies, which only used prior auxiliaries from the same target dataset, we fill the gap by exploring the effects of heterogeneous priors. 
Our findings show that this heterogeneity affects the recovery of graph data, and we further confirm that our designed adapter enhances recovery effectiveness in such scenarios.

\fakeparagraph{Impact of Hetero-Priors}
We assess the recovery performance using \sysname without the designed adapter to determine the impact of heterogeneous priors. 
As depicted in \figref{fig:abla}, the AUC and AP metrics of recovered structures reach their peak values for the same target dataset only when the target and auxiliary datasets are drawn from the same distribution (\ie, values along the diagonal of the heatmap), showing that diversity between priors and target graphs could hinder the recovery of the graph structure.
We also observe that \sysname consistently achieves higher AUC and AP scores than all other baselines, when using various datasets (MUTAG, PTC\_MR, AIDS, ENZYMES) as the auxiliary graphs.

\fakeparagraph{Results on Random Auxiliary Graph}
We conduct experiments where the MUTAG dataset serves as target dataset, and the auxiliaries are randomly generated graphs.
Specifically, we employ the widely-used Erdős–Rényi graph model~\cite{ER60_Graph_Model} to generate graph structures ($\mathcal{G}(0.5)$ denotes that any two nodes are connected independently with a probability of 0.5), while node features are sampled from Gaussian or Uniform distributions.
As shown in \tabref{tab:random}, when the MUTAG auxiliary dataset is replaced with these randomly generated ones, the adjacency matrix accuracy of our method decreases slightly from 93.41\% to 87.73\% and it still outperforms other baselines.
Moreover, the accuracy of node features of our \sysname remains stable, showing its robustness in feature recovery against variations in recovery results of the graph structures.


\begin{table}[h]
    \setlength{\tabcolsep}{2pt} 
    \caption{\small{Results on randomly generated auxiliary graphs.}} \label{tab:random} 
    \resizebox{0.95\columnwidth}{!}{
    \begin{tabular}{@{}c|c|cc|ccc@{}} 
    \toprule[1.5pt] 
      \multirow{2}{*}{\textbf{Auxilary Feature}} &
      \multirow{2}{*}{\textbf{Auxilary Structure}} &
      \multicolumn{2}{c|}{\textbf{Node Feature}} &
      \multicolumn{3}{c}{\textbf{Adjacency Matrix}} \\ \cline{3-7} 
     &
       &
      \textbf{MSE} &
      \multicolumn{1}{c|}{\textbf{ACC(\%)}} &
      \textbf{AUC} &
      \textbf{AP} &
      \textbf{ACC(\%)} \\ 
    
    \toprule[1.5pt]
    MUTAG               & MUTAG  & 0.0844 & 74.35 & 0.9042 & 0.7556 & 93.41 \\ \hline
    $\mathcal{N}(0,1)$  & $\mathcal{G}(0.1)$  & 0.0844 & 74.35 & 0.5917 & 0.1691 & 87.73 \\ \hline
    $\mathcal{N}(0,1)$  &  $\mathcal{G}(0.05)$ & 0.0844 & 74.35 & 0.6155 & 0.1765 & 87.73 \\ \hline
    $\mathcal{U}[-1,1]$ &  $\mathcal{G}(0.1)$ & 0.0844 & 74.35 & 0.5762 & 0.1645 & 87.73 \\ \hline
    $\mathcal{U}[-1,1]$ & $\mathcal{G}(0.05)$ & 0.0844 & 74.35 & 0.6080 & 0.1862 & 87.73 \\
    \bottomrule[1.5pt] 
    \end{tabular}
    }
\end{table}

\fakeparagraph{Effectiveness of Adapter}
Since the adapter primarily impacts structure recovery, we present the results of an ablation study on the recovered structure in \tabref{tab:abla_adapt}.
By introducing the adapter $L_{adapt}$ into our proposed method, we observe higher AUC and AP values across all settings, with increases of up to $5.19\%$ in AUC and up to $16.43\%$ in AP. 
This demonstrates the effectiveness and robustness of the proposed regularization, which reduces the disparity between target and auxiliary graphs,  allowing the decoder to perform better on target graphs.
The above observations suggest that the graph data leakage from gradients is risky, even when target and auxiliary graphs are from heterogenous datasets.

\begin{table}[htb]
\caption{\small{Ablation results on the impact of adapter in \sysname}.}
\label{tab:abla_adapt}
\resizebox{0.80\columnwidth}{!}{%
    \begin{tabular}{@{}c|c|c|cc@{}}
    \toprule[1.5pt]
    \textbf{Targets} & \textbf{Auxiliaries} & \textbf{Method} & \textbf{AUC} & \textbf{AP} \\
    \toprule[1.5pt]
    \multirow{6}{*}{MUTAG} & \multirow{2}{*}{PTC\_MR} & w/o $L_{adapt}$ & 0.8628 & 0.6579 \\
     &  & \textbf{\sysname (\textbf{ours})} & \textbf{0.8778} & \textbf{0.6898} \\ \cmidrule(l){2-5} 
     & \multirow{2}{*}{AIDS} & w/o $L_{adapt}$ & 0.8574 & 0.6726 \\
     &  & \textbf{\sysname (\textbf{ours})} & \textbf{0.8589} & \textbf{0.6833} \\ \cmidrule(l){2-5} 
     & \multirow{2}{*}{ENZYMES} & w/o $L_{adapt}$ & 0.8450 & 0.6457 \\
     &  & \textbf{\sysname (\textbf{ours})} & \textbf{0.8604} & \textbf{0.6522} \\ \midrule
    \multirow{6}{*}{PTC\_MR} & \multirow{2}{*}{MUTAG} & w/o $L_{adapt}$ & 0.8509 & 0.6316 \\
     &  & \textbf{\sysname (\textbf{ours})} & \textbf{0.8543} & \textbf{0.6348} \\ \cmidrule(l){2-5} 
     & \multirow{2}{*}{AIDS} & w/o $L_{adapt}$ & 0.8164 & 0.5392 \\
     &  & \textbf{\sysname (\textbf{ours})} & \textbf{0.8588} & \textbf{0.6278} \\ \cmidrule(l){2-5} 
     & \multirow{2}{*}{ENZYMES} & w/o $L_{adapt}$ & 0.8354 & 0.6190 \\
     &  & \textbf{\sysname (\textbf{ours})} & \textbf{0.8614} & \textbf{0.6447} \\ \midrule
    \multirow{6}{*}{AIDS} & \multirow{2}{*}{MUTAG} & w/o $L_{adapt}$ & 0.7023 & 0.4515 \\
     &  & \textbf{\sysname (\textbf{ours})} & \textbf{0.7048} & \textbf{0.4611} \\ \cmidrule(l){2-5} 
     & \multirow{2}{*}{PTC\_MR} & w/o $L_{adapt}$ & 0.7249 & 0.4652 \\
     &  & \textbf{\sysname (\textbf{ours})} & \textbf{0.7561} & \textbf{0.4830} \\ \cmidrule(l){2-5} 
     & \multirow{2}{*}{ENZYMES} & w/o $L_{adapt}$ & 0.7352 & 0.4359 \\
     &  & \textbf{\sysname (\textbf{ours})} & \textbf{0.7416} & \textbf{0.4401} \\ \midrule
    \multirow{6}{*}{ENZYMES} & \multirow{2}{*}{MUTAG} & w/o $L_{adapt}$ & 0.6275 & 0.3490 \\
     &  & \textbf{\sysname (\textbf{ours})} & \textbf{0.6278} & \textbf{0.3499} \\ \cmidrule(l){2-5} 
     & \multirow{2}{*}{PTC\_MR} & w/o $L_{adapt}$ & 0.6524 & 0.3698 \\
     &  & \textbf{\sysname (\textbf{ours})} & \textbf{0.6562} & \textbf{0.3795} \\ \cmidrule(l){2-5} 
     & \multirow{2}{*}{AIDS} & w/o $L_{adapt}$ & 0.6382 & 0.3290 \\
     &  & \textbf{\sysname (\textbf{ours})} & \textbf{0.6555} & \textbf{0.3727} \\ \bottomrule[1.5pt]
    \end{tabular}%
}
\end{table}

\subsection{Evaluation under Defenses}

\begin{figure}[htb]
	  \centering
	\includegraphics[width=1.0\columnwidth,trim=0 0 0 60,clip]{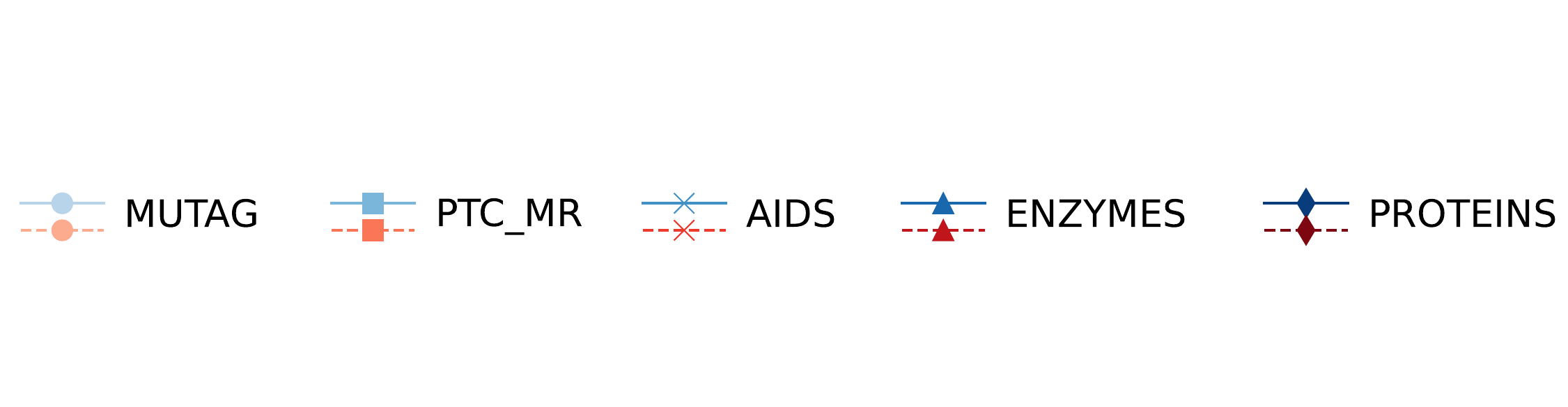}
    
    \subfloat[\fontsize{7}{6}\selectfont{Results on feature recovery \\ with gradient compression}]{
		\includegraphics[scale=0.22]{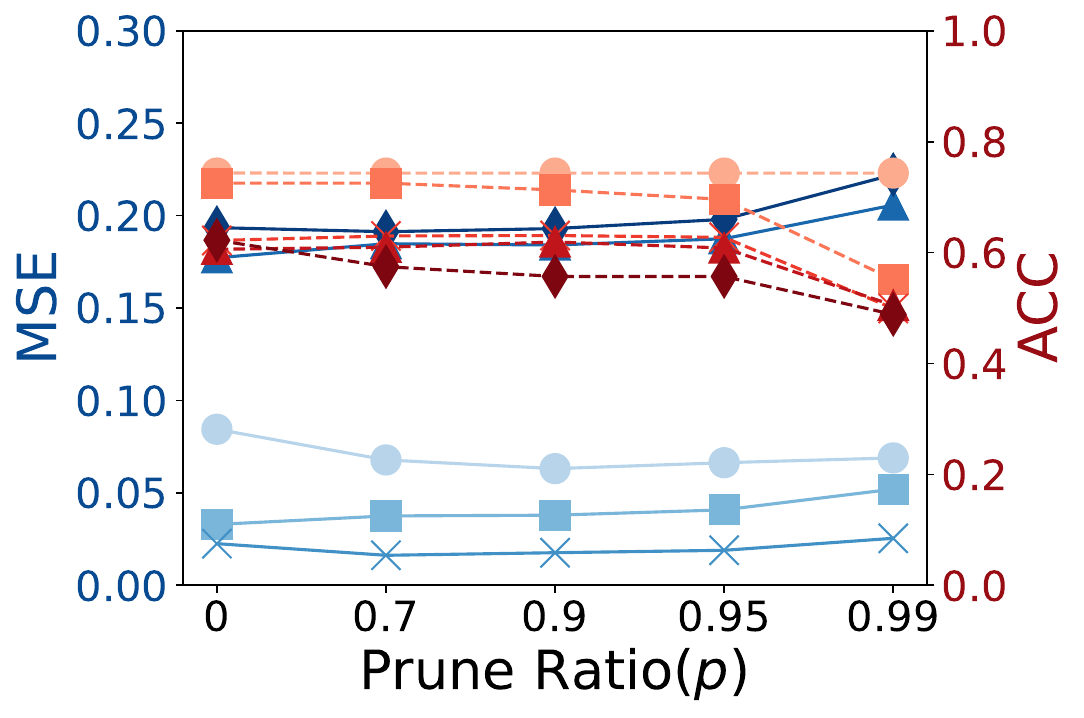}
		\label{fig:gc_feature}
	}
    \subfloat[\fontsize{7}{6}\selectfont{Results on feature recovery  \\ with differential privacy}]{
		\includegraphics[scale=0.22]{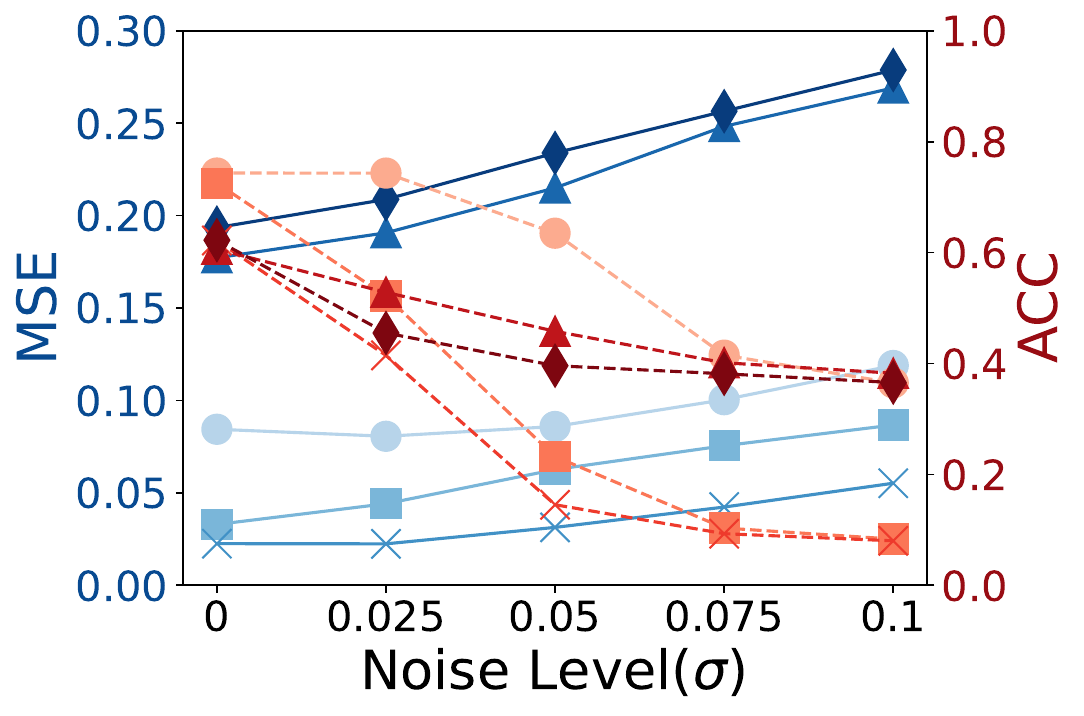}
		\label{fig:dp_feature}
	}
	\hfill
	  \centering
	\subfloat[\fontsize{7}{6}\selectfont{Results on structure recovery  \\ with gradient compression}]{
		\includegraphics[scale=0.22]{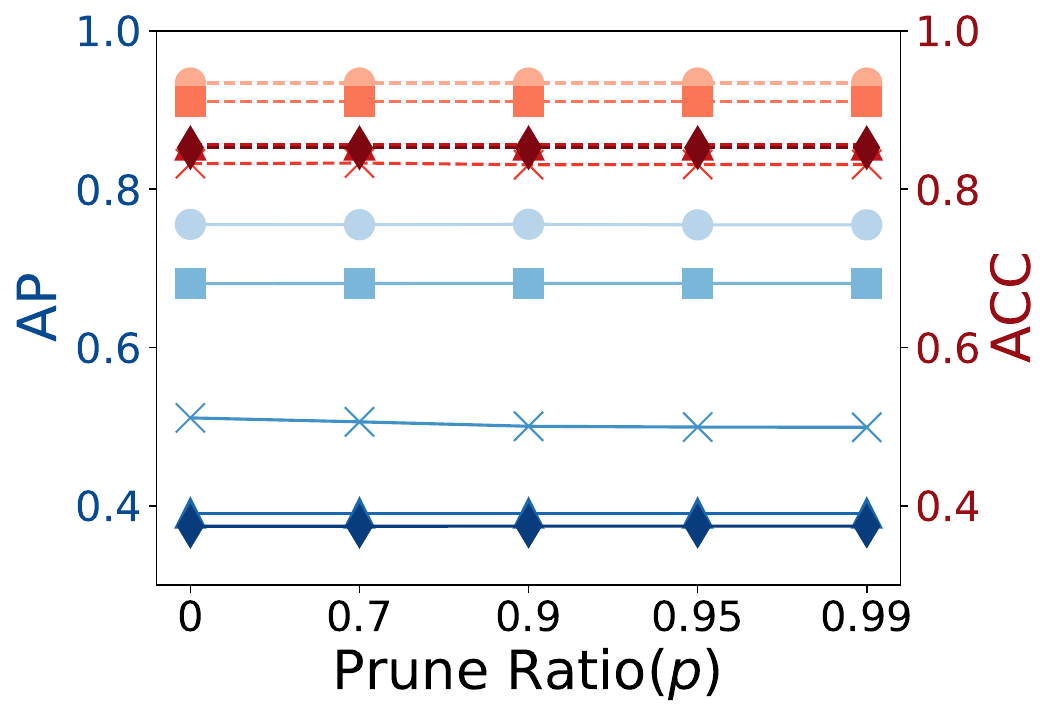}
		\label{fig:gc_structure}
	}
	\subfloat[\fontsize{7}{6}\selectfont{Results on structure recovery \\ with differential privacy}]{
		\includegraphics[scale=0.22]{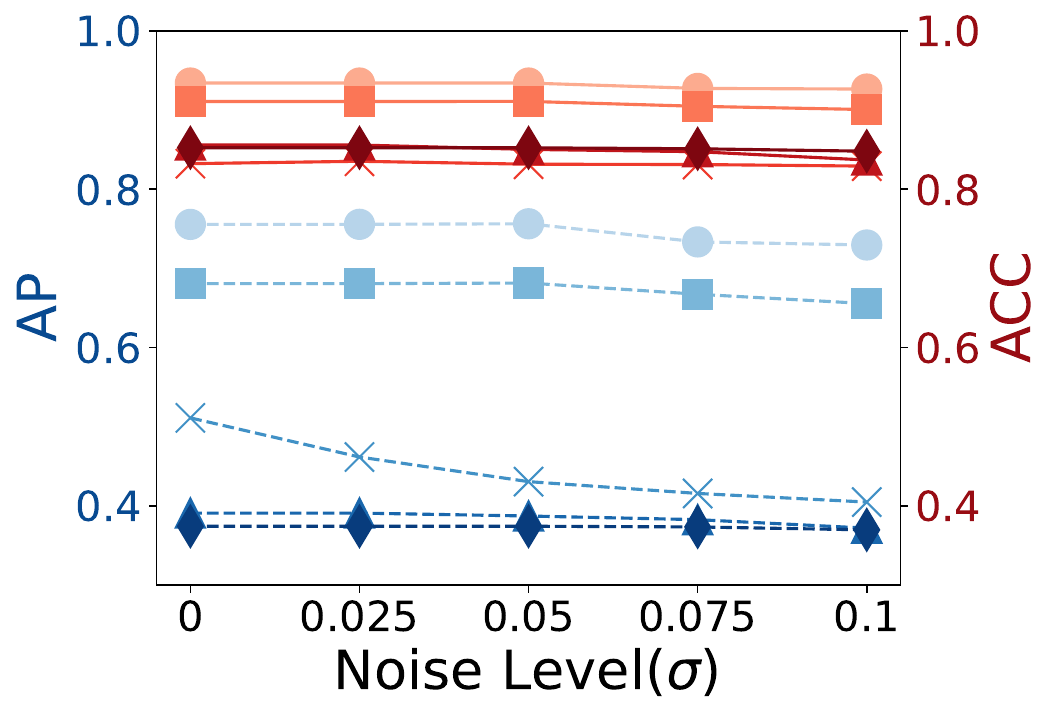}
		\label{fig:dp_structure}
	}
	\caption{\small{The effectiveness of different defense strategies.}}
	\label{fig:defense}
\end{figure}

	

To answer \textbf{Q3}, we evaluate the performance of \sysname under two commonly used defenses: \textit{(i)} Gradient compression\cite{nips19:dlg}, which prunes small values in gradients with prune ratio $p$; \textit{(ii)} Differential privacy\cite{iclr18:dp}, which adds noise to gradients, and here we use Laplacian noise with variance $\sigma$. 
The results are shown in \figref{fig:defense},
where solid lines denote MSE and dashed lines denote ACC.

\fakeparagraph{Defense with Gradient Compression}
The results are in \figref{fig:gc_feature}  indicate that the performance of feature recovery only exhibits a noticeable decrease when the prune ratio reaches as high as $0.99$. 
However, even with a high prune ratio, it is difficult to observe a performance drop in structure recovery, as shown in \figref{fig:gc_structure}.
This suggests that gradient compression has limited effectiveness against \sysname, potentially because the small values in gradients do not play a primary role in the privacy leakage exploited by \sysname.

\fakeparagraph{Defense with Differential Privacy}
\figref{fig:dp_feature} and \figref{fig:dp_structure} are results of different noise variances, where a greater variance means adding more noise, \ie, a higher level of privacy protection.
It is evident that as the noise level increases, the reconstruction performance decreases, particularly concerning the ACC of node features.
Yet, the decrease in AP and ACC of recovered graph structure is not pronounced, indicating that while differential privacy may be less effective in protecting graph structures than node features, requiring stronger defense designs.
We further discuss the potential defense methods for \sysname in \appref{sec:app_defense}.

\section{Conclusion}
\label{sec:conclusion}

In this paper, we systematically study the deep leakage from gradients (DLG) in federated graph learning (FGL).
Firstly, we uncover the relationship between graph structure and node features in DLG on graphs through the closed-form rules.
Then, we propose a novel framework, \sysname, to recover the original graph structure and node features of private training graphs from shared gradients in FGL.
\sysname advances the state-of-the-art by successfully disentangling structural and feature information from gradients based on the derived closed-form recursive rules.
We further enhance the reconstruction performance by incorporating the auto-encoder model with an adaptation regularization designed to leverage the heterogeneous auxiliary graph dataset.
Extensive experiments on five datasets demonstrate significant performance improvements compared with existing works, highlighting the substantial risks of data leakage from gradients in graph-level FGL.


\bibliographystyle{ACM-Reference-Format}
\balance
\bibliography{ref}

\newpage

\appendix

\section{appendix}
\label{sec:appendix}


\subsection{More Details for Data Leakage Analysis}
\label{sec:app_pool}

\fakeparagraph{Pooling as matrix operations}
Common pooling methods can be represented as the following matrix operation on the node embedding matrix of the $l$-th layer $H_l\in \mathbb{R}^{|V|\times d_{H}}$.
Specifically, for the sum pooling where $H_G = \sum_{v\in G}h_v$, it can be described as the sum of each row (\ie, each node) in $H_l$.
This operation is performed using the all-ones vector $M_{p\_sum}^\top  = \mathbf{1}^{|V|}\in \mathbb{R}^{|V|}$ and $H_G=M_{p\_sum}H_l$.
Similarly, for mean pooling where $H_G = \frac{1}{|V|}\sum_{v\in G}h_v$, it can be calculated as $H_G=\frac{1}{|V|}M_{p\_sum}H_l$.
As for the max pooling, we initially approximate the maximum function by $max(X)=\frac{1}{K}\log\sum_{i=1}^{D}e^{KX_i}, X\in \mathbb{R}^{D}$, where $K$ is an integer.
By incorporating it into max pooling where $H_G = \max_{v\in G}h_v$, the max pooling can be expressed as $\medop{H_G=\frac{1}{K}f_{\log}(M_{p\_sum}f_{\exp}(KH_l))}$, where $f_{\log}(\cdot)$ and $f_{\exp}(\cdot)$ represent the element-wise functions $\log(\cdot)$ and $\exp(\cdot)$, which only impacts $\nabla W_l$ in \lemmaref{lemma:gradients}.
Thus, we have $\medop{\nabla W_l=H_{l-1}^\top  \bar{A}^\top  ((M_p^\top (\frac{\partial L}{\partial \hat Y}W_{fc}\odot F_{\log}')\odot F_{\exp}')\odot\sigma_l')}$ and the recursive rule remains valid.

\fakeparagraph{Discussion on deriving $\sigma'$}
$\sigma'$ represents the derivation of the activation function, which in GNN is typically the ReLU function.
Since we calculate the node embeddings in a reverse manner, we can derive $\sigma'_{l-1}$ from its output $H_{l}$.
Specifically, for the ReLU function, the derivative is $1$ for each positive element in $H_{l}$ and $0$ otherwise.
In the last layer, obtaining node embeddings from the pooling result is challenging, so we initialize it as all ones.
This aims to leverage as much information as possible, considering that other components in the equations, such as the adjacency matrix, are often sparse.
While this may introduce some error, the experimental results in the paper demonstrate that our proposed approach achieves better performance compared with other baselines.


\subsection{Proofs of \lemmaref{lemma:gradients} and \thmref{thm:gradients}}
\label{sec:app_gradient}

We first introduce two basic facts~\cite{book17:matrix} that are widely used in matrix derivatives to aid the gradient computation in this work.

\begin{fact}
\label{lemma:trace}
The full differential of a function $f:\mathbb{R}^{n\times m}\rightarrow \mathbb{R}$ w.r.t. an $n\times m$ matrix $X$ can be expressed by trace and matrix derivatives as,
\begin{equation}
    \label{eq:trace}
    \medop{
        df(X) = \text{tr} (\frac{\partial f(X)}{\partial X}^\top  dX) = \text{tr} (\frac{\partial f(X)}{\partial X} dX^\top )
    }
\end{equation}
Here $\frac{\partial f(X)}{\partial X}$ is the derivative matrix of $f$ with respect to $X$ and $dX$ denotes the infinitesimal change in $X$.
$\text{tr}(\cdot)$ is the trace operation.
\end{fact}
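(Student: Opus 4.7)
The plan is to prove the identity by unpacking the definition of the total differential entry-by-entry and then recognizing the resulting double sum as a Frobenius inner product, which admits a trace representation. First I would fix the convention that the derivative matrix $\partial f(X)/\partial X$ is the $n\times m$ matrix whose $(i,j)$ entry equals $\partial f/\partial X_{ij}$, matching the shape of $X$. With this convention, the scalar-valued total differential expands as
\begin{equation*}
df(X) \;=\; \sum_{i=1}^{n}\sum_{j=1}^{m} \frac{\partial f(X)}{\partial X_{ij}}\, dX_{ij},
\end{equation*}
which is just the standard multivariable chain rule applied to the $nm$ scalar arguments of $f$.

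Next I would rewrite this double sum as a Frobenius inner product, $df(X) = \bigl\langle \partial f(X)/\partial X,\, dX\bigr\rangle_F$, and invoke the elementary identity $\langle A,B\rangle_F = \text{tr}(A^\top B)$ for any two matrices of matching shape. Applying this with $A = \partial f(X)/\partial X$ and $B = dX$ yields the first equality $df(X) = \text{tr}\bigl((\partial f(X)/\partial X)^\top dX\bigr)$. For the second equality I would use the two standard trace identities $\text{tr}(M) = \text{tr}(M^\top)$ and the cyclic property, or equivalently the symmetry $\langle A,B\rangle_F = \langle B,A\rangle_F$, to move the transpose from the derivative matrix onto $dX$, giving $\text{tr}\bigl((\partial f(X)/\partial X)^\top dX\bigr) = \text{tr}\bigl((\partial f(X)/\partial X)\, dX^\top\bigr)$.

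Since every ingredient is a standard fact of finite-dimensional linear algebra, I do not anticipate any substantial obstacle; the only care needed is being explicit about the shape convention for $\partial f(X)/\partial X$ so that the transposes and the Frobenius/trace equivalence line up correctly. As the paper treats this as a Fact cited from a matrix-calculus reference and uses it only as a bookkeeping tool for later gradient derivations, the proposal above is essentially a textbook derivation and would be presented succinctly rather than elaborated in detail.
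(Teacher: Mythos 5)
Your proposal is correct: the entrywise expansion of $df$ via the chain rule, its identification with the Frobenius inner product $\langle \partial f/\partial X, dX\rangle_F = \text{tr}\bigl((\partial f/\partial X)^\top dX\bigr)$, and the passage to $\text{tr}\bigl((\partial f/\partial X)\, dX^\top\bigr)$ via $\text{tr}(M)=\text{tr}(M^\top)$ are all sound, and your explicit fixing of the shape convention (derivative matrix has the same shape as $X$) is exactly the care this identity requires. Note that the paper itself offers no proof of this statement --- it is labeled a Fact and cited from a matrix-calculus reference, serving only as a tool for the gradient derivations in Lemma~1 and Theorem~1 --- so your textbook derivation is precisely the argument the paper implicitly defers to, not a departure from it.
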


\begin{fact}
\label{lemma:start}
For a function $f:\mathbb{R}^{n\times m}\rightarrow \mathbb{R}$ with respect to an $n\times m$ matrix $X$, the following equation holds:
\begin{equation}
    \medop{ 
       df(X) = \text{tr} (df(X))
    }
\end{equation}
\end{fact}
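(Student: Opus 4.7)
The plan is to prove this essentially by observing that $f$ is scalar-valued, so every differential quantity on the left-hand side is a real number, and the trace operator restricted to $1 \times 1$ matrices (or scalars) is the identity. The statement therefore reduces to a tautology once the dimensions are tracked carefully.

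Concretely, I would first note that since $f : \mathbb{R}^{n \times m} \to \mathbb{R}$, the value $f(X)$ lies in $\mathbb{R}$ for every admissible $X$. Consequently, the total differential $df(X)$, defined pointwise as the linear map $dX \mapsto \langle \nabla f(X), dX \rangle$ evaluated at an infinitesimal perturbation, yields a scalar quantity in $\mathbb{R}$. I would then invoke the elementary fact that any scalar $a \in \mathbb{R}$ can be identified with the $1 \times 1$ matrix $[a]$, for which $\mathrm{tr}([a]) = a$ by the definition of the trace as the sum of diagonal entries. Applying this with $a = df(X)$ yields $\mathrm{tr}(df(X)) = df(X)$, which is the claimed identity.

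A secondary justification I would include, to make the statement useful in the sequel, is that the right-hand side can also be interpreted as $\mathrm{tr}$ applied to a $1 \times 1$ matrix obtained from any valid inner-product representation of $df(X)$; for example, if one writes $df(X) = \sum_{i,j} \bigl(\partial f / \partial X_{ij}\bigr)\, dX_{ij}$, then this sum equals $\mathrm{tr}\bigl((\partial f/\partial X)^\top dX\bigr)$, and applying $\mathrm{tr}$ to a scalar leaves it invariant. This simultaneously shows consistency with Fact~\ref{lemma:trace} and confirms that the stated identity is well-defined regardless of the representation chosen.

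The main obstacle here is conceptual rather than technical: one must be careful to distinguish the trace of a matrix-valued expression from the trace of a scalar, and to argue that $df(X)$ as written is unambiguously a scalar before the trace is applied. Once this dimensional accounting is made explicit, the proof is a one-line consequence of $\mathrm{tr}(a) = a$ for $a \in \mathbb{R}$, and no further computation is required.
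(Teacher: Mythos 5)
Your proof is correct: since $f$ is scalar-valued, $df(X)$ is a real number, and identifying a scalar with a $1\times 1$ matrix gives $\mathrm{tr}(df(X)) = df(X)$ immediately, which is exactly the standard justification for this identity. The paper itself states this fact without proof (citing a matrix-calculus reference), and its only role is to license rewriting the scalar differential $dL$ under a trace so that the cyclic and Hadamard-product properties of Fact~\ref{lemma:trace} can be applied; your argument, including the consistency check via $df(X)=\mathrm{tr}\bigl((\partial f/\partial X)^\top dX\bigr)$, captures precisely this intended content.
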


Based on above two facts, we can prove the \lemmaref{lemma:gradients} in \secref{sec:method_analysis}.
\setcounter{lemma}{\value{savelemma}}
\begin{lemma} [Gradients in an $l$-layer GCN]
    For the GCN layers in FGL, the gradients $\nabla W_l$, $\nabla W_{l-1}$, $\nabla W_{l-2}$ can be formally expressed by the node embeddings from their previous layer $H_{l-1}$, $H_{l-2}$, $H_{l-3}$ respectively and the normalized graph adjacency matrix $\bar A$,

        \begin{equation}\label{eq:gcn_g_l_app}
        \medop{ 
        \nabla{W_{l}} = H_{l-1}^\top \bar{A}^\top (M_p^\top\frac{\partial L}{\partial \hat Y}W_{fc}\odot\sigma_l') 
        }
        \hspace{13.6em}
    \end{equation}
    \begin{equation}\label{eq:gcn_g_l-1_app}
       \medop{
            \nabla{W_{l-1}} = H_{l-2}^\top \bar{A}^\top (\bar{A}^\top (M_p^\top\frac{\partial L}{\partial \hat Y}W_{fc}\odot\sigma_l')W_l^\top\odot \sigma_{l-1}') 
        }
        \hspace{6em}
    \end{equation}
    \begin{equation} \label{eq:gcn_g_l-2_app}
        \medop{
            \nabla{W_{l-2}} = H_{l-3}^\top \bar{A}^\top (\bar{A}^\top(\bar{A}^\top (M_p^\top\frac{\partial L}{\partial \hat Y}W_{fc}\odot\sigma_l')W_l^\top\odot \sigma_{l-1}')W_{l-1}^\top\odot \sigma_{l-2}')
        }
    \end{equation}
    
    
\end{lemma}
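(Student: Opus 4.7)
\textbf{Proof proposal for \lemmaref{lemma:gradients}.}
The plan is to apply the chain rule backwards through the network using matrix differential calculus, leveraging \factref{lemma:trace} and \factref{lemma:start} to convert every differential into a trace and then read off the gradient. I will proceed in three waves: first derive $\frac{\partial L}{\partial H_l}$ from the MLP/loss part; then compute $\nabla W_l$ using the GCN forward rule $H_l = \sigma_l(\bar{A}H_{l-1}W_l)$; then backpropagate $\frac{\partial L}{\partial H_l} \to \frac{\partial L}{\partial H_{l-1}} \to \frac{\partial L}{\partial H_{l-2}}$ and repeat the gradient-extraction step to obtain $\nabla W_{l-1}$ and $\nabla W_{l-2}$.

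For the MLP/loss stage, starting from $\hat Y = W_{fc}(M_p H_l)^\top + W_b$ and $L = \mathcal{L}(\hat Y, Y)$, I take the differential with respect to $H_l$ and wrap it in a trace: $dL = \mathrm{tr}\bigl((\tfrac{\partial L}{\partial \hat Y})^\top W_{fc}(M_p\, dH_l)^\top\bigr)$. Cyclic permutation of the trace and reading off the coefficient of $dH_l$ via \factref{lemma:trace} yields $\tfrac{\partial L}{\partial H_l} = M_p^\top \tfrac{\partial L}{\partial \hat Y} W_{fc}$, which is exactly the prefactor that appears in the claimed formulas. The computation of $\tfrac{\partial L}{\partial \hat Y}$ itself follows from the loss $\mathcal{L}$ and is treated as a known quantity, as \equref{eq:mlp_g} shows it can be recovered from $\nabla W_b$.

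To pass from $\tfrac{\partial L}{\partial H_l}$ to $\nabla W_l$, let $Z_l = \bar A H_{l-1} W_l$ so $H_l = \sigma_l(Z_l)$. Then $dH_l = \sigma_l' \odot dZ_l$, and applying \factref{lemma:trace} together with the standard identity $\mathrm{tr}\bigl(A^\top (B \odot C)\bigr) = \mathrm{tr}\bigl((A \odot B)^\top C\bigr)$ converts the Hadamard product into an ordinary product under the trace. This gives $\tfrac{\partial L}{\partial Z_l} = \tfrac{\partial L}{\partial H_l} \odot \sigma_l'$, and then $dZ_l = \bar A H_{l-1}\, dW_l$ combined with one more cyclic permutation yields $\nabla W_l = H_{l-1}^\top \bar A^\top \bigl(\tfrac{\partial L}{\partial H_l} \odot \sigma_l'\bigr)$. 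Substituting the expression for $\tfrac{\partial L}{\partial H_l}$ produces \equref{eq:gcn_g_l_app} verbatim.

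For $\nabla W_{l-1}$ and $\nabla W_{l-2}$ I will backpropagate one and two steps further. Differentiating $Z_l = \bar A H_{l-1} W_l$ with respect to $H_{l-1}$ and re-applying the trace trick gives $\tfrac{\partial L}{\partial H_{l-1}} = \bar A^\top \bigl(\tfrac{\partial L}{\partial H_l} \odot \sigma_l'\bigr) W_l^\top$; feeding this into the same template used for $\nabla W_l$ (with $l$ replaced by $l-1$) yields \equref{eq:gcn_g_l-1_app}. One further iteration, using $\tfrac{\partial L}{\partial H_{l-2}} = \bar A^\top \bigl(\tfrac{\partial L}{\partial H_{l-1}} \odot \sigma_{l-1}'\bigr) W_{l-1}^\top$, produces \equref{eq:gcn_g_l-2_app}. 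The main obstacle I anticipate is bookkeeping: making sure every transpose lands on the correct factor, that the Hadamard products are pulled through the trace in the right orientation, and that the pooling matrix $M_p$ (and, in the max-pooling case, the extra $f_{\log}', f_{\exp}'$ factors noted in \appref{sec:app_pool}) is handled consistently. Once the $l$-th layer case is written out carefully, the lower layers follow by an entirely mechanical induction, which is exactly the pattern formalized in \thmref{thm:gradients}.
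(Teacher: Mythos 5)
Your proposal is correct and takes essentially the same route as the paper's proof: both rely on the matrix-differential/trace formalism of \factref{lemma:trace} and \factref{lemma:start}, the Hadamard-trace identity to absorb the activation derivatives $\sigma'$, and cyclic permutation of the trace to read off each $\nabla W_i$, descending recursively from the MLP head through the GCN layers. The only difference is bookkeeping—you explicitly name the intermediate gradients $\partial L/\partial H_{l}$, $\partial L/\partial H_{l-1}$, $\partial L/\partial H_{l-2}$ in backpropagation style, whereas the paper carries them implicitly as the coefficient matrices of $dH_{i}^\top$ inside a single evolving trace expression—which is cosmetic rather than a genuinely different argument.
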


\begin{proof}
Following \factref{lemma:start}, we calculate the full differential of the loss function w.r.t $W_l$ as $ d L = \text{tr}(dL)= \text{tr}(\frac{dL}{d\hat Y}d \hat Y)$.
By \equref{eq:modeling_gcn}-\equref{eq:modeling_mlp}, the above loss function can be further transformed as follows:
\begin{equation}
    \begin{aligned}
    \medop{d L 
        =\text{tr}(\frac{dL}{d\hat Y} W_{fc} d(H_l^\top ) M_p^\top )  
        =\text{tr}(M_p^\top \frac{dL}{d\hat Y} W_{fc} d(\sigma_l(W_l^\top  H_{l-1}^\top \bar A^\top )) )
     }
    \end{aligned} 
\end{equation}

As the activation function $\sigma$ includes the element-wise Hadamard product, we can use the property of the trace of the product, $tr(A(B\odot C))=tr((A\odot B^\top ) C)$, to further derive the loss function as:
\begin{equation}
    \label{eq:fornext}
    \begin{aligned}
    &\medop{d L 
        = \text{tr}(M_p^\top \frac{dL}{d\hat Y} W_{fc} ((\sigma_l^\top )' \odot d(W_l^\top  H_{l-1}^\top \bar A^\top ))) 
    }
    \\
    &
    \medop{ = 
    \text{tr}((M_p^\top \frac{dL}{d\hat Y} W_{fc} \odot \sigma_l') d(W_l^\top  H_{l-1}^\top \bar A^\top ))  
     = \text{tr}(H_{l-1}^\top \bar A^\top (M_p^\top \frac{dL}{d\hat Y} W_{fc} \odot \sigma_l') dW_l^\top )
        }
    \end{aligned}
\end{equation}


According to \factref{lemma:trace}, we have following result,
\begin{equation}
    \medop{\nabla{W_l}=\frac{\partial L}{\partial W_l}
               =H_{l-1}^\top \bar A^\top (M_p^\top \frac{dL}{d\hat Y} W_{fc} \odot \sigma_l')
    }
\end{equation}

Extending $H_{l-1}$ in the last line of  \equref{eq:fornext} to the previous layer using the propagation rule of GCN, we can further calculate the full differential of the loss function with respect to $W_{l-1}$:
\begin{equation}
    \begin{aligned}
    &\medop{d L 
        = \text{tr}(\bar A^\top (M_p^\top \frac{dL}{d\hat Y} W_{fc} \odot \sigma_l') W_l^\top dH_{l-1}^\top )}
    \\
    &\medop{
        = \text{tr}(\bar A^\top (M_p^\top \frac{dL}{d\hat Y} W_{fc} \odot \sigma_l') W_l^\top d(\sigma_{l-1}(W_{l-1}^\top H_{l-2}^\top \bar A^\top ))) 
    }
    \end{aligned}
\end{equation}

Similar to \equref{eq:fornext}, we can derive $dW_{l-1}$ as follows:
\begin{equation}\label{eq:fornextl-2}
    \begin{aligned}
    &\medop{d L 
        = \text{tr}(\bar A^\top (M_p^\top \frac{dL}{d\hat Y} W_{fc} \odot \sigma_l') W_l^\top ((\sigma_{l-1}^\top)' \odot d(W_{l-1}^\top H_{l-2}^\top \bar A^\top )))
    } 
    \\
    & \medop{
    = \text{tr}((\bar A^\top (M_p^\top \frac{dL}{d\hat Y} W_{fc} \odot \sigma_l') W_l^\top \odot \sigma_{l-1}') d(W_{l-1}^\top H_{l-2}^\top \bar A^\top ))
    }
    \\
    & \medop{ 
        = \text{tr}(H_{l-2}^\top \bar A^\top (\bar A^\top (M_p^\top \frac{dL}{d\hat Y} W_{fc} \odot \sigma_l') W_l^\top \odot \sigma_{l-1}') dW_{l-1}^\top )
    }
    \\
    \end{aligned}
\end{equation}

Therefore, we can get the following equation based on \factref{lemma:trace}:
\begin{equation}
    \medop{\nabla{W_{l-1}}=\frac{\partial L}{\partial W_{l-1}}
               =H_{l-2}^\top  \bar{A}^\top  (\bar{A}^\top  (M_p^\top \frac{\partial L}{\partial \hat Y}W_{fc}\odot\sigma_l')W_l^\top \odot \sigma_{l-1}')
         }
\end{equation}

Similarly, gradients for $W_{l-2}$ can be obtained by extending $H_{l-2}$,
\begin{equation}\label{eq:base_case_trace}
    \begin{aligned}
    &\medop{d L 
        = \text{tr}(\bar A^\top(\bar A^\top (M_p^\top \frac{dL}{d\hat Y} W_{fc} \odot \sigma_l') W_l^\top \odot \sigma_{l-1}') W_{l-1}^\top dH_{l-2}^\top ))
        }\\
        &\medop{= \text{tr}(\mathbf{r}_{l-1} W_{l-1}^\top dH_{l-2}^\top ) 
        = \text{tr}(\mathbf{r}_{l-1} W_{l-1}^\top d(\sigma_{l-2}(W_{l-2}^\top H_{l-3}^\top \bar A^\top)) ),
        }
    \end{aligned}
\end{equation}
where $\mathbf{r}_{l-1}$ is defined as $\bar A^\top(\bar A^\top (M_p^\top \frac{dL}{d\hat Y} W_{fc} \odot \sigma_l') W_l^\top \odot \sigma_{l-1}')$ to enhance the clarity of the equation. 
Once again, leveraging the noted property of the Hadamard operation, we derive $dW_{l-2}$ as follows:
\begin{equation}\label{eq:induc_hadam}
    \begin{aligned}
    &\medop{d L 
        = \text{tr}(\mathbf{r}_{l-1} W_{l-1}^\top ((\sigma_{l-2}^\top)'\odot d(W_{l-2}^\top H_{l-3}^\top \bar A^\top)) )
    }
    \\
    &\medop{
        = \text{tr}((\mathbf{r}_{l-1} W_{l-1}^\top \odot \sigma_{l-2}')d(W_{l-2}^\top H_{l-3}^\top \bar A^\top) )
        = \text{tr}(H_{l-3}^\top \bar A^\top(\mathbf{r}_{l-1} W_{l-1}^\top \odot \sigma_{l-2}')dW_{l-2}^\top ).
        }\\
    \end{aligned}
\end{equation}

Then, we obtain the gradients $\nabla{W_{l-2}}$,
\begin{equation}
    \medop{
        \nabla{W_{l-2}}= \frac{\partial L}{\partial W_{l-2}}
               =H_{l-3}^\top \bar A^\top(\bar A^\top(\bar A^\top (M_p^\top \frac{dL}{d\hat Y} W_{fc} \odot \sigma_l') W_l^\top \odot \sigma_{l-1}')W_{l-1}^\top \odot \sigma_{l-2}')
        }.
        \notag
\end{equation}
\end{proof}

Next, we prove the \thmref{thm:gradients} in \secref{sec:method_analysis}, focusing on extending the recursive rule observed in \lemmaref{lemma:gradients} to any layer of GCNs.

\begin{table*}[bth]
\caption{Visualization of the reconstruction performance for \sysname and baselines.
The first graph for each row represents the ground truth.
In each graph, nodes of different colors indicate different node types.}
\vspace{-1em}
\label{tab:app_visualization}
\resizebox{0.85\textwidth}{!}{%
\begin{tabular}{@{}ccccccccccc@{}}
\toprule[1.5pt]
\textbf{} & \textbf{} & \textbf{Random} & \textbf{DLG} & \textbf{iDLG} & \textbf{InverGrad} & \textbf{GI-GAN} & \textbf{GRA-GRF} & \textbf{TabLeak} & \textbf{Graph-Attacker} & \textbf{\sysname (ours)} \\ \toprule[1.5pt]
\textbf{Node Feature} & \multicolumn{1}{c|}{\textbf{MSE/ACC(\%)}} & 0.3441/20.00 & 1.0625/40.00 & 0.5000/60.00 & 0.3879/60.00 & 0.2680/20.00 & 0.2209/60.00  & 0.4717/40.00 & 0.9629/40.00 & \textbf{0.0988/100.00} \\
\textbf{Adjacency Matrix} & \multicolumn{1}{c|}{\textbf{AUC/AP}} & 0.4444/0.7336 & 0.8730/0.9135 & 0.6984/0.8822 & 0.8254/0.9115 & 0.4444/0.6984 & 0.3651/0.6735 & 0.8254/0.9041 & 0.7897/0.8706 & \textbf{1.0000/1.0000} \\
\textbf{\begin{tabular}[c]{@{}c@{}}Ground-truth/\\ Reconstructed graph\end{tabular}} & \multicolumn{1}{c|}{\adjustbox{valign=c}{\includegraphics[scale=0.105]{./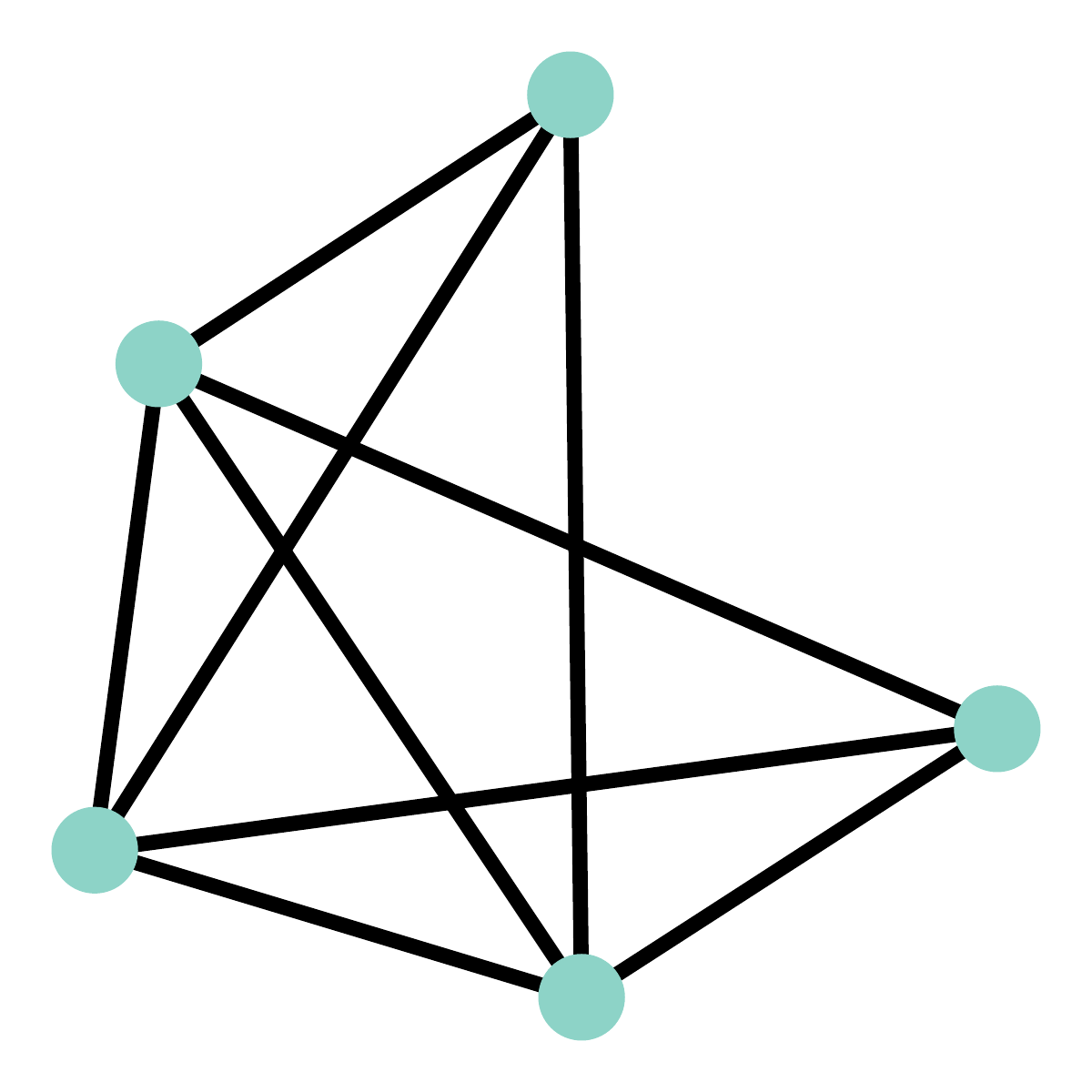}}} & \adjustbox{valign=c}{\includegraphics[scale=0.105]{./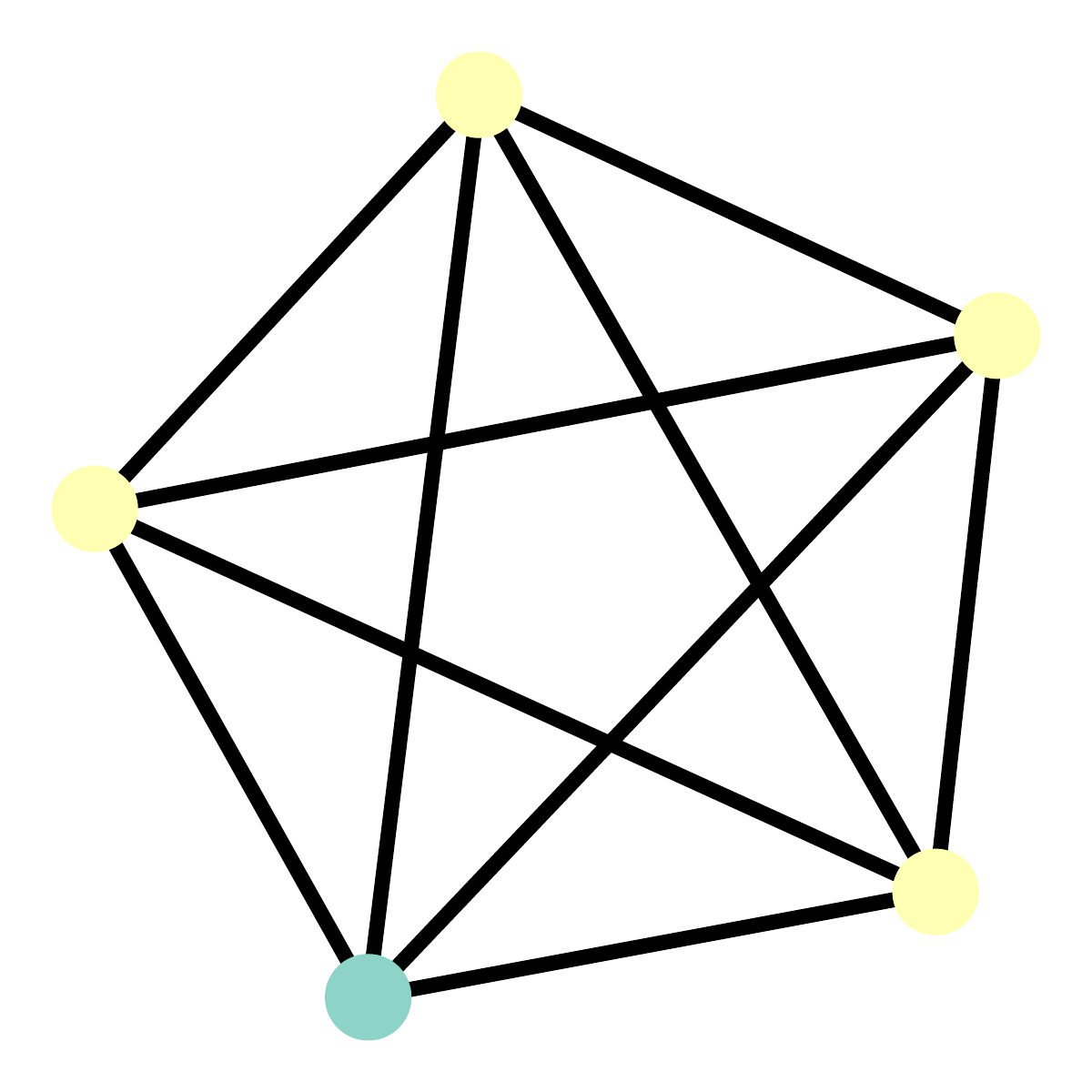}} & \adjustbox{valign=c}{\includegraphics[scale=0.105]{./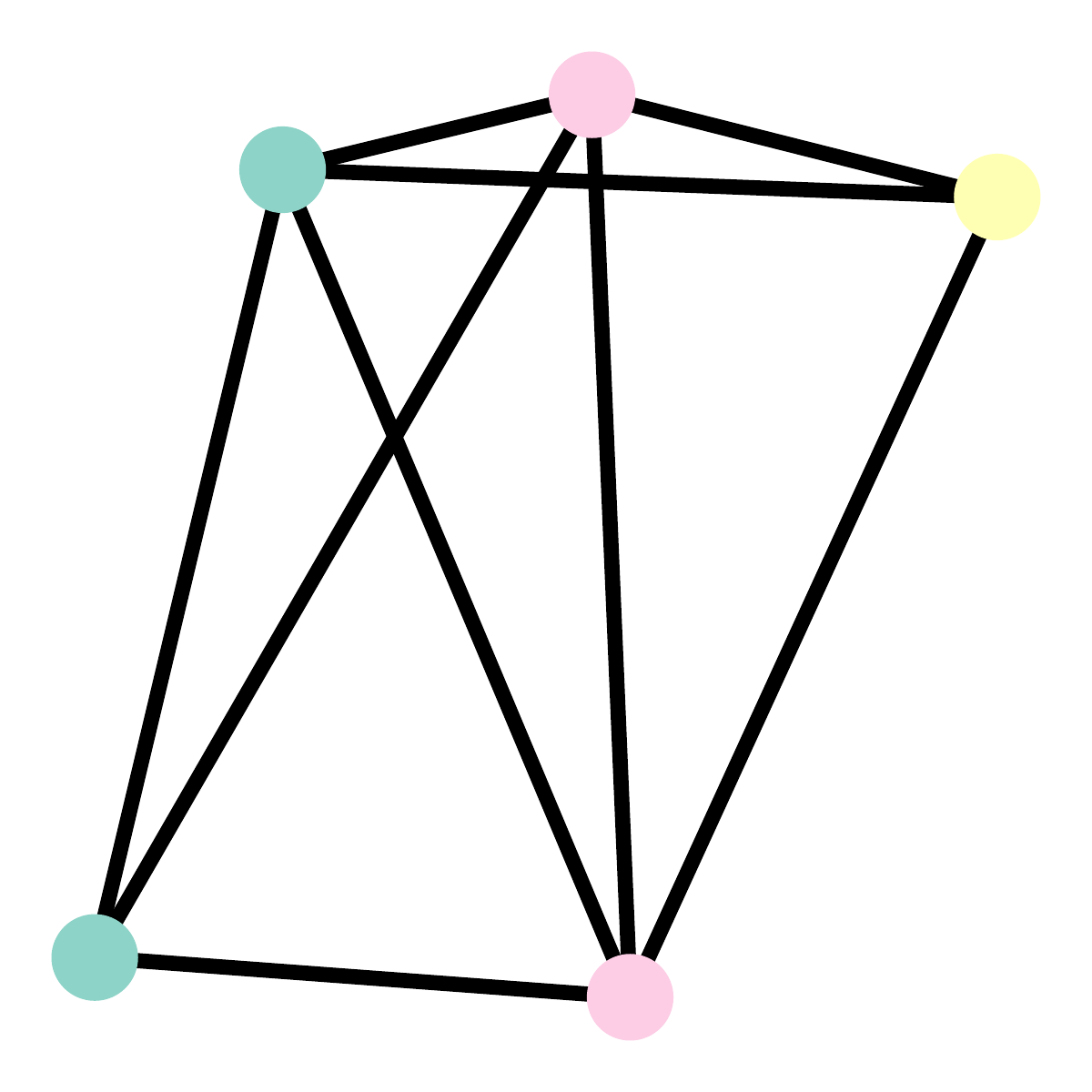}} & \adjustbox{valign=c}{\includegraphics[scale=0.105]{./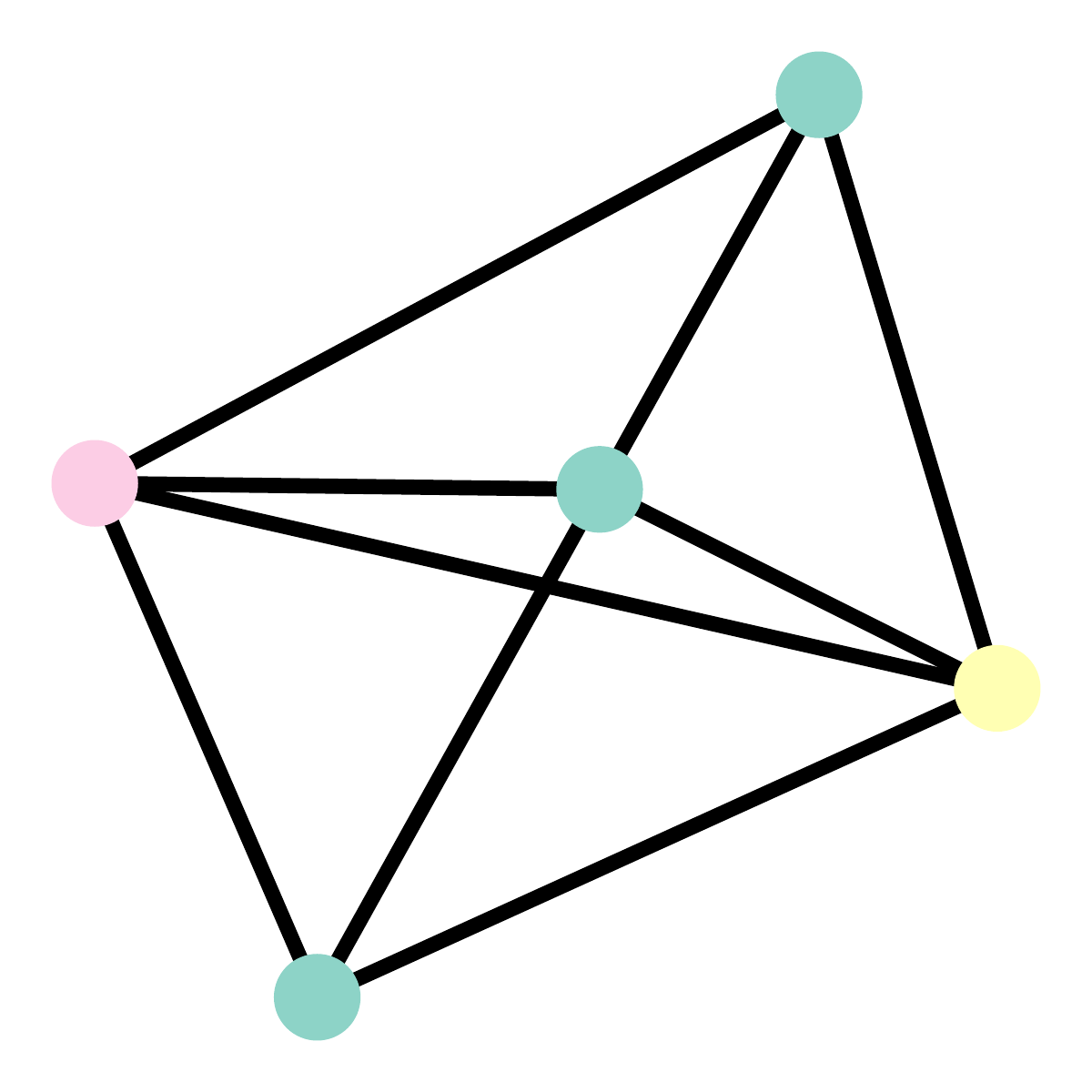}} & \adjustbox{valign=c}{\includegraphics[scale=0.105]{./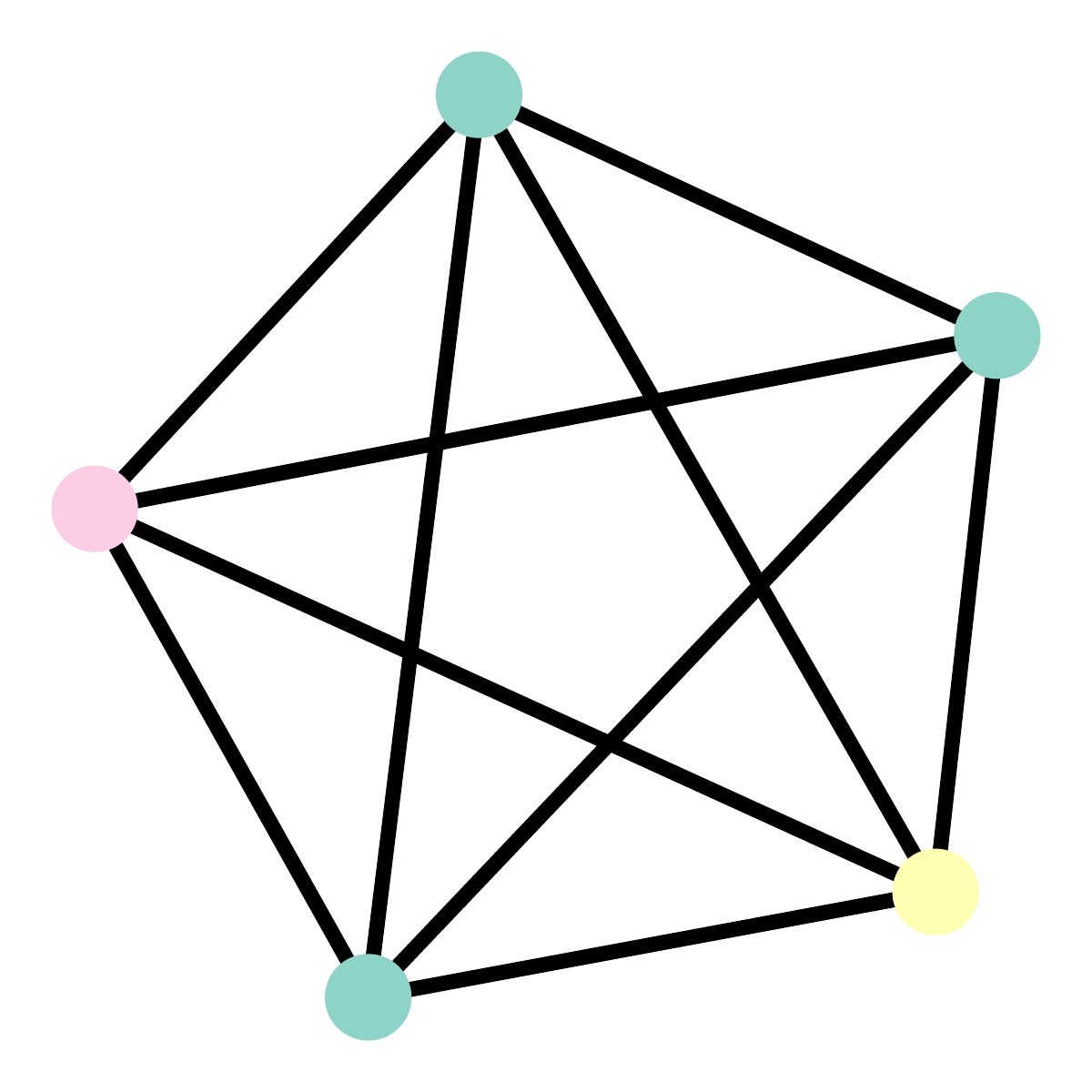}} & \adjustbox{valign=c}{\includegraphics[scale=0.105]{./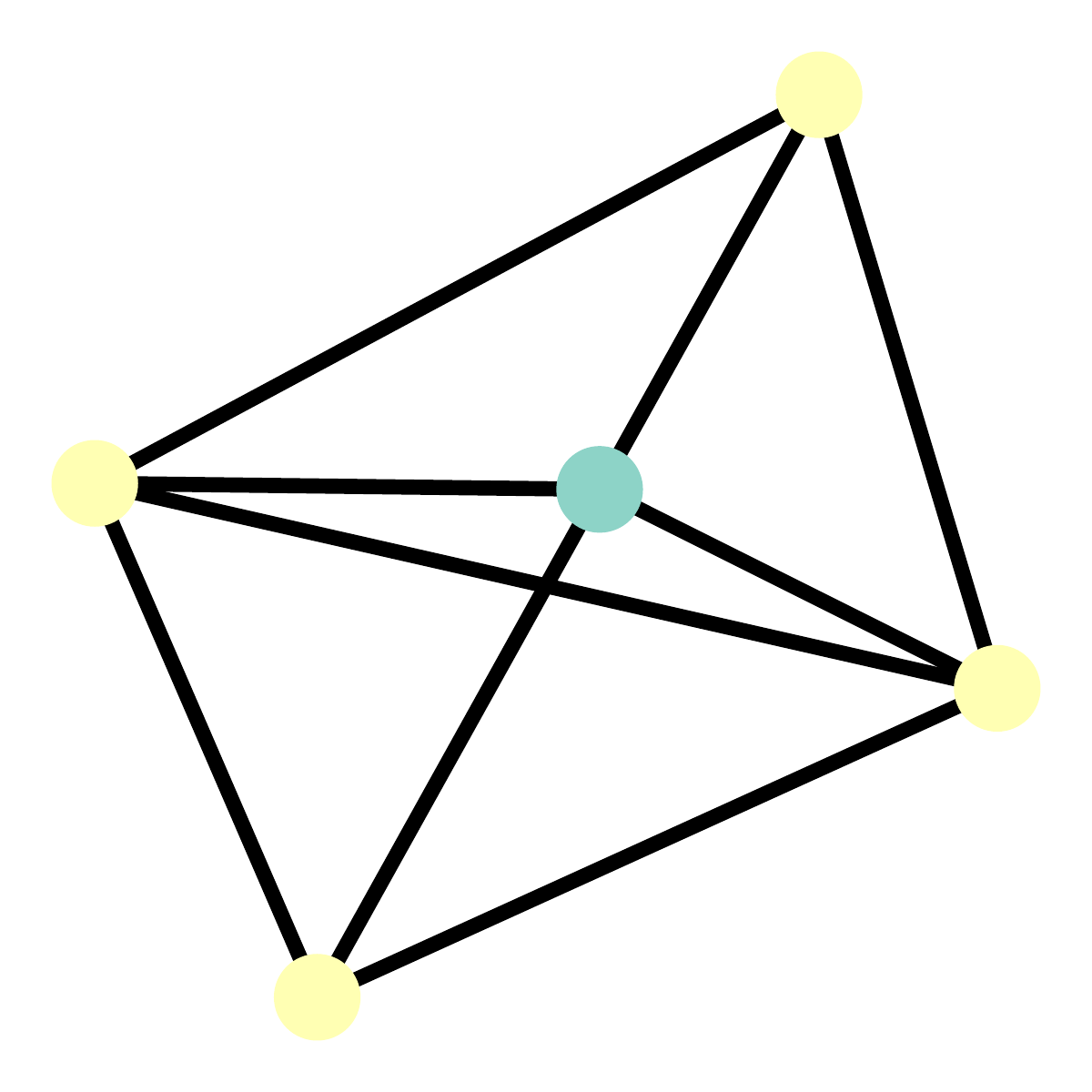}} & \adjustbox{valign=c}{\includegraphics[scale=0.105]{./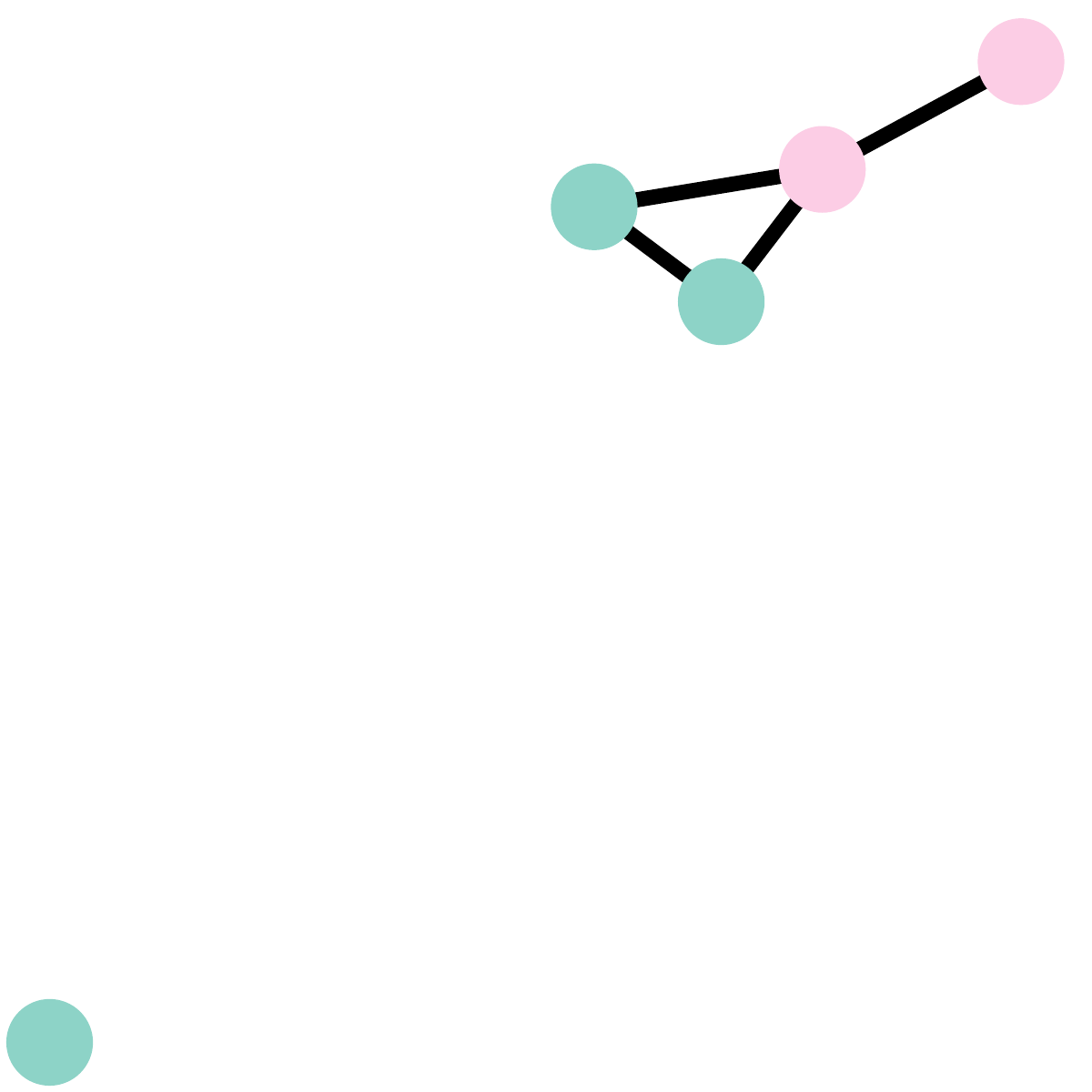}} & 
\adjustbox{valign=c}{\includegraphics[scale=0.105]{./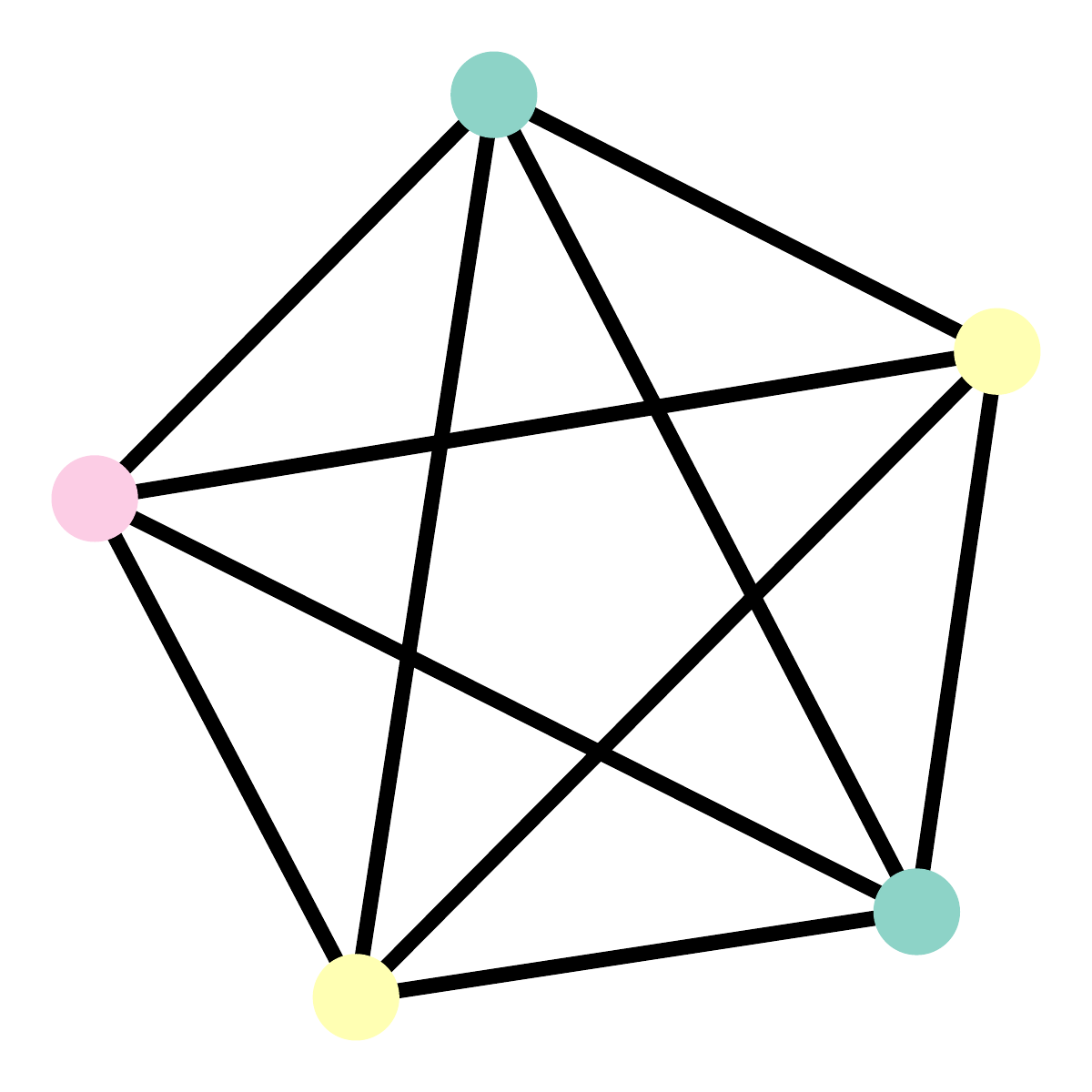}} &
\adjustbox{valign=c}{\includegraphics[scale=0.105]{./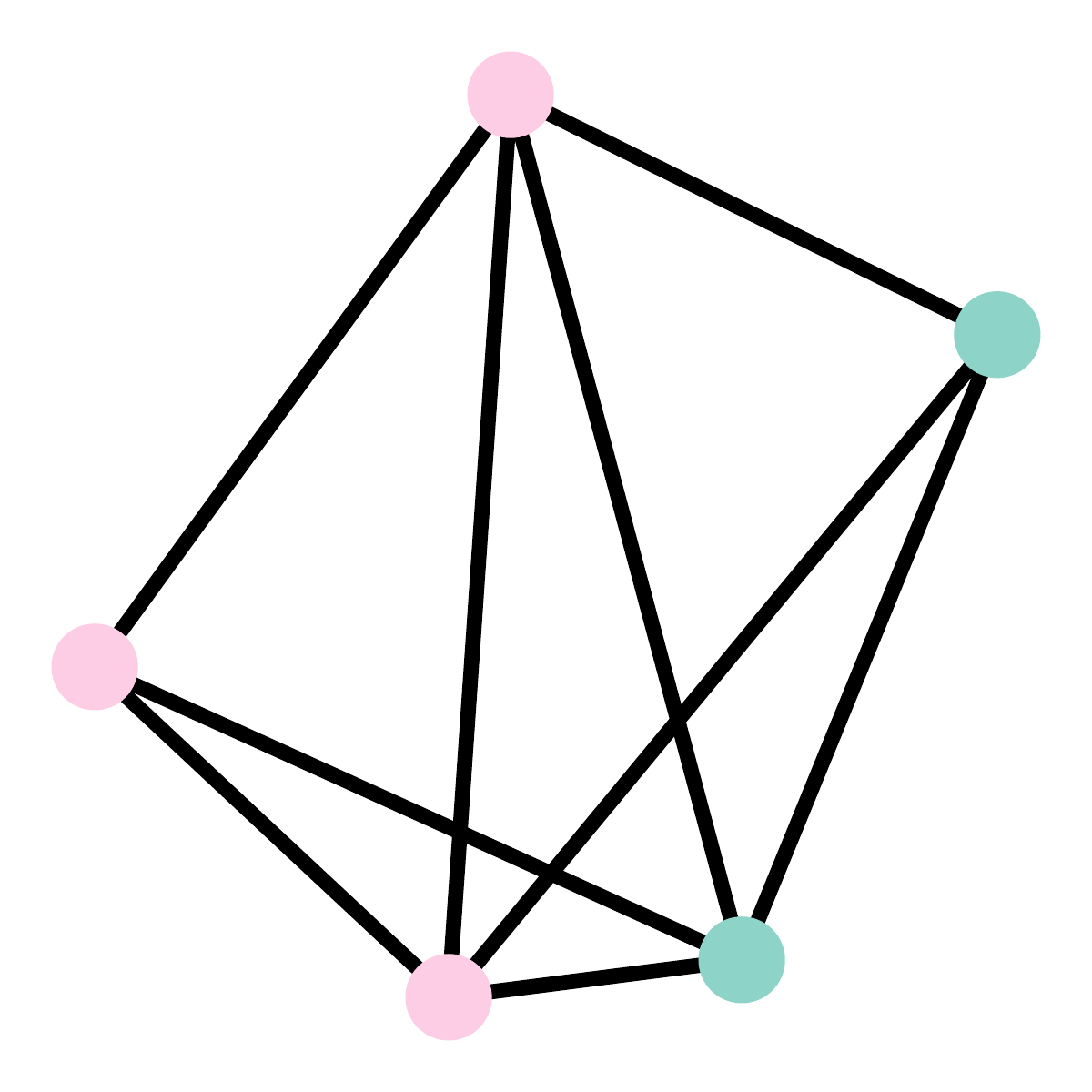}} &
\adjustbox{valign=c}{\includegraphics[scale=0.105]{./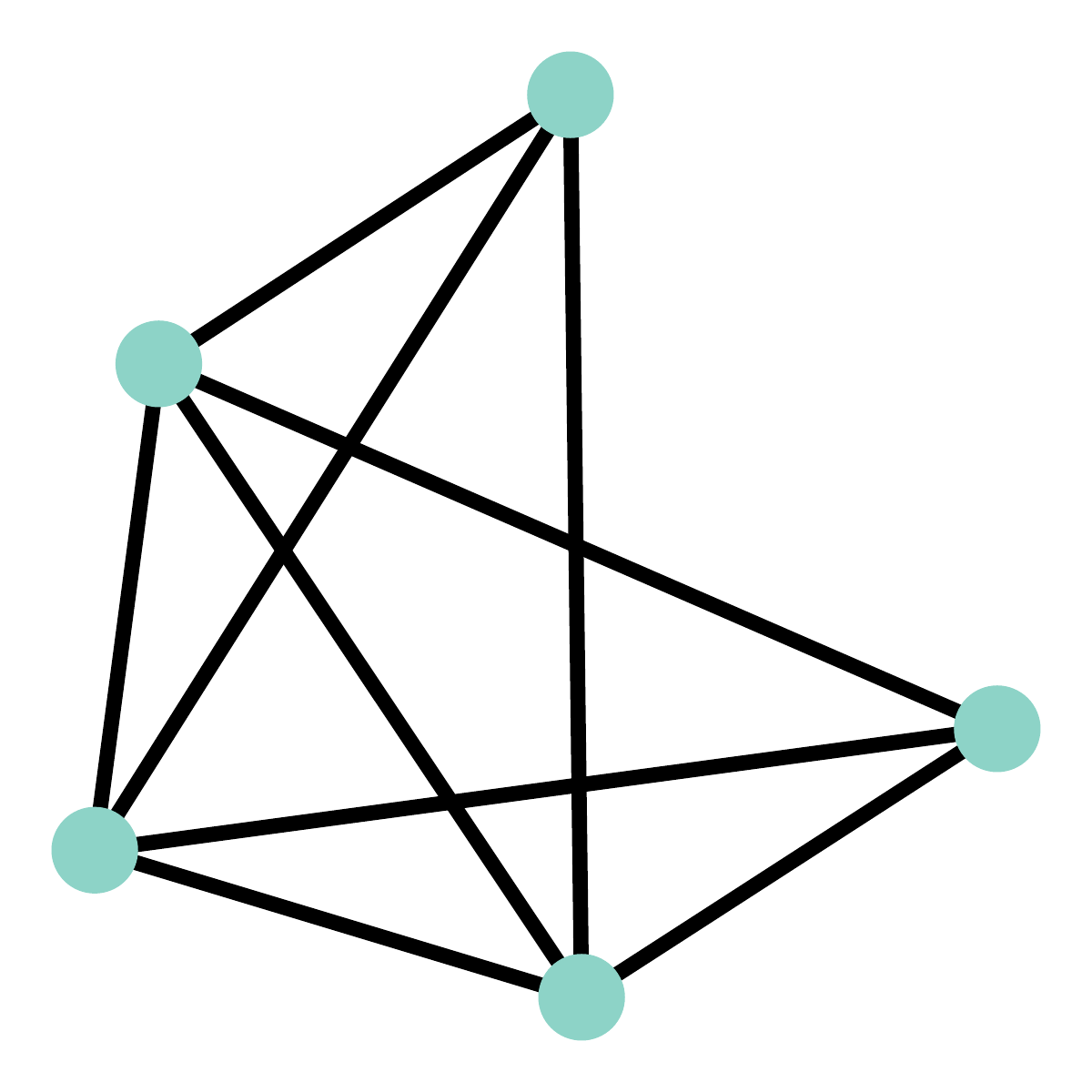}} \\ \midrule
\textbf{Node Feature} & \multicolumn{1}{c|}{\textbf{MSE/ACC(\%)}} & 0.3836/33.33 & 2.4573/33.33 & 2.1012/50.00 & 1.4808/50.00 & 0.2096/66.67 & \textbf{0.1744}/50.00 & 0.5025/16.67 & 1.7374/0.00 & 0.2007/\textbf{66.67} \\
\textbf{Adjacency Matrix} & \multicolumn{1}{c|}{\textbf{AUC/AP}} & 0.4514/0.6142 & 0.4444/0.7148 & 0.7604/0.8430 & 0.6354/0.6981/63.89 & 0.4583/0.6488 & 0.5000/0.6667 & 0.8264/0.9009 & 0.6250/0.7302 & \textbf{0.9306/0.9677} \\
\textbf{\begin{tabular}[c]{@{}c@{}}Ground-truth/\\ Reconstructed graph\end{tabular}} & \multicolumn{1}{c|}{\adjustbox{valign=c}{\includegraphics[scale=0.105]{./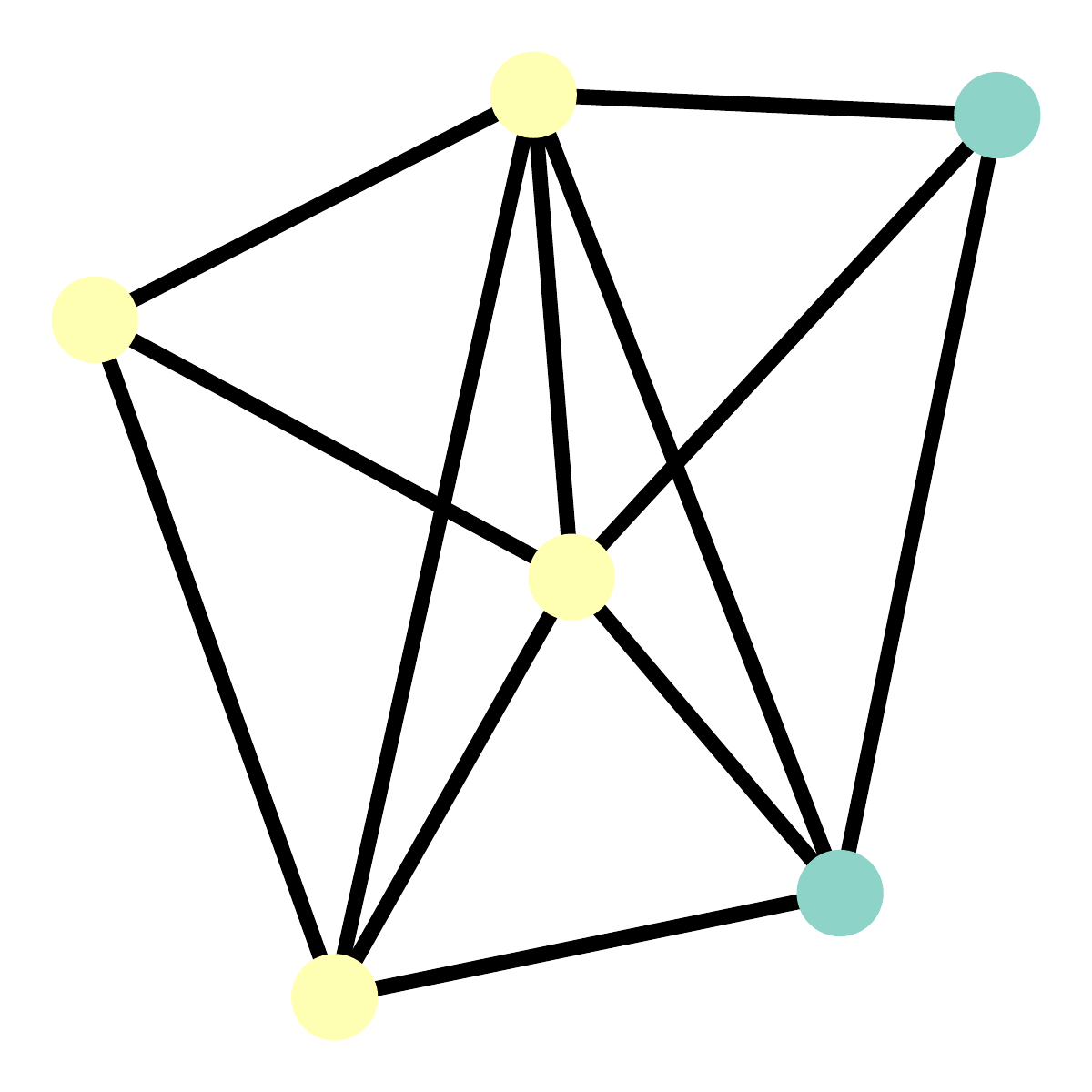}}} & \adjustbox{valign=c}{\includegraphics[scale=0.105]{./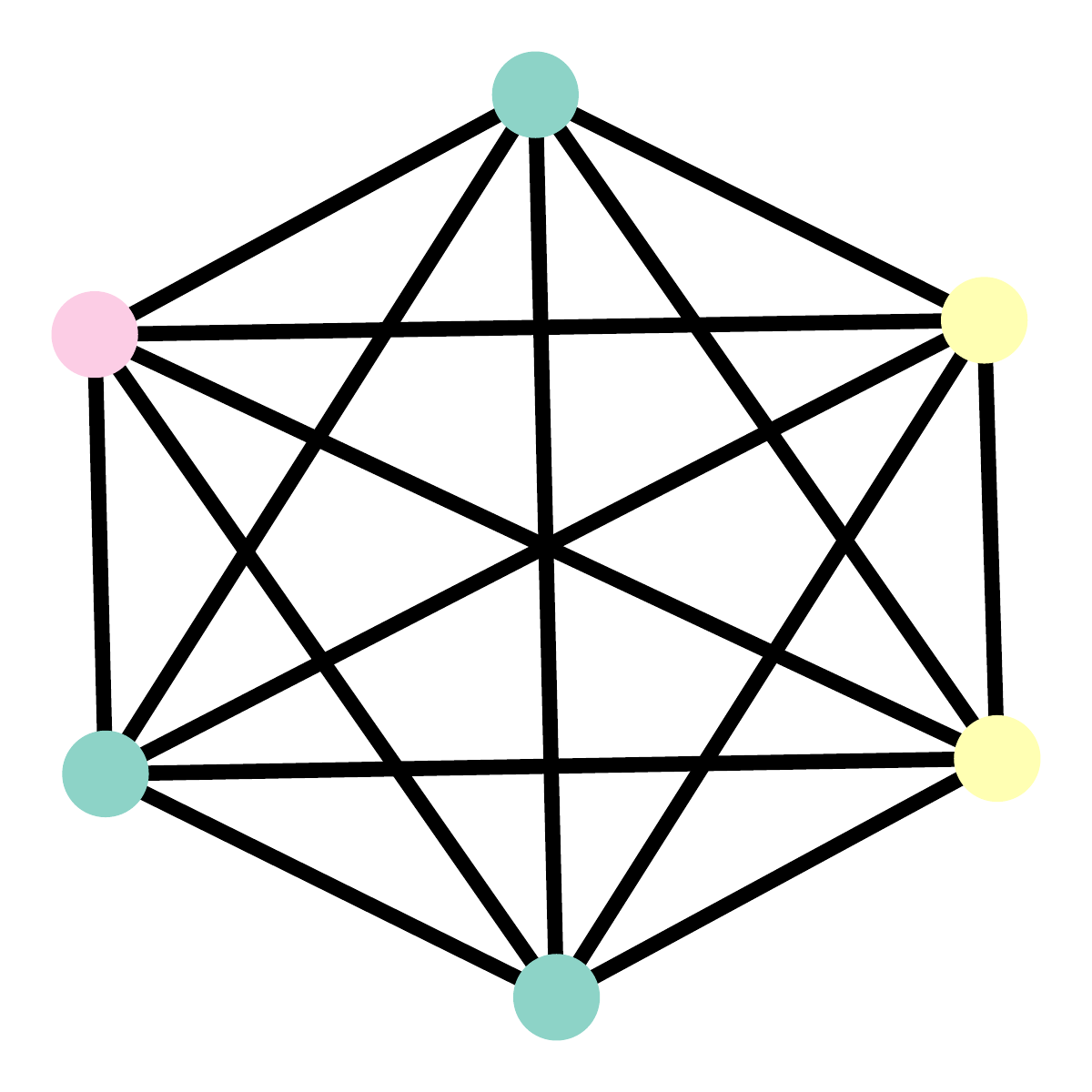}} & \adjustbox{valign=c}{\includegraphics[scale=0.105]{./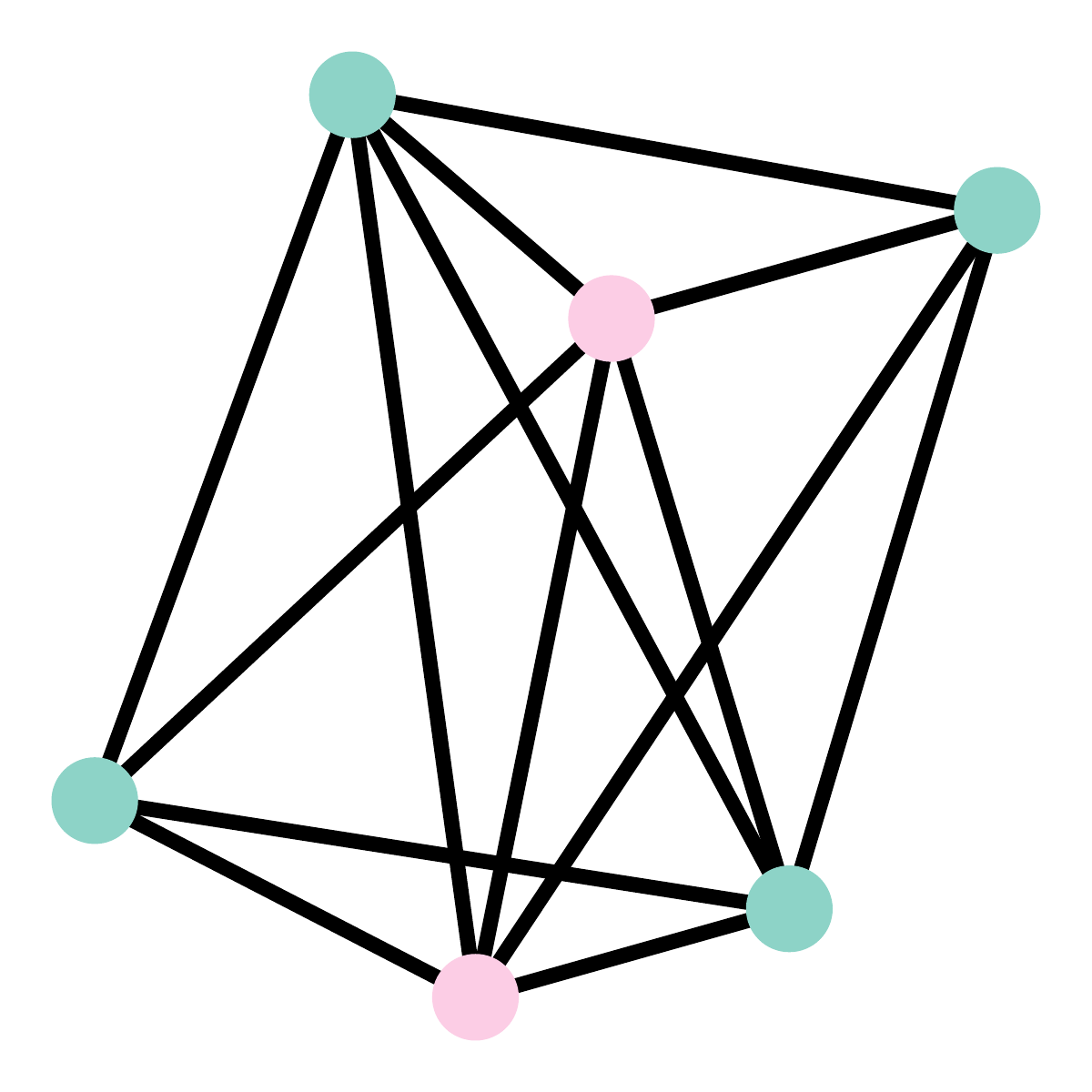}} & \adjustbox{valign=c}{\includegraphics[scale=0.105]{./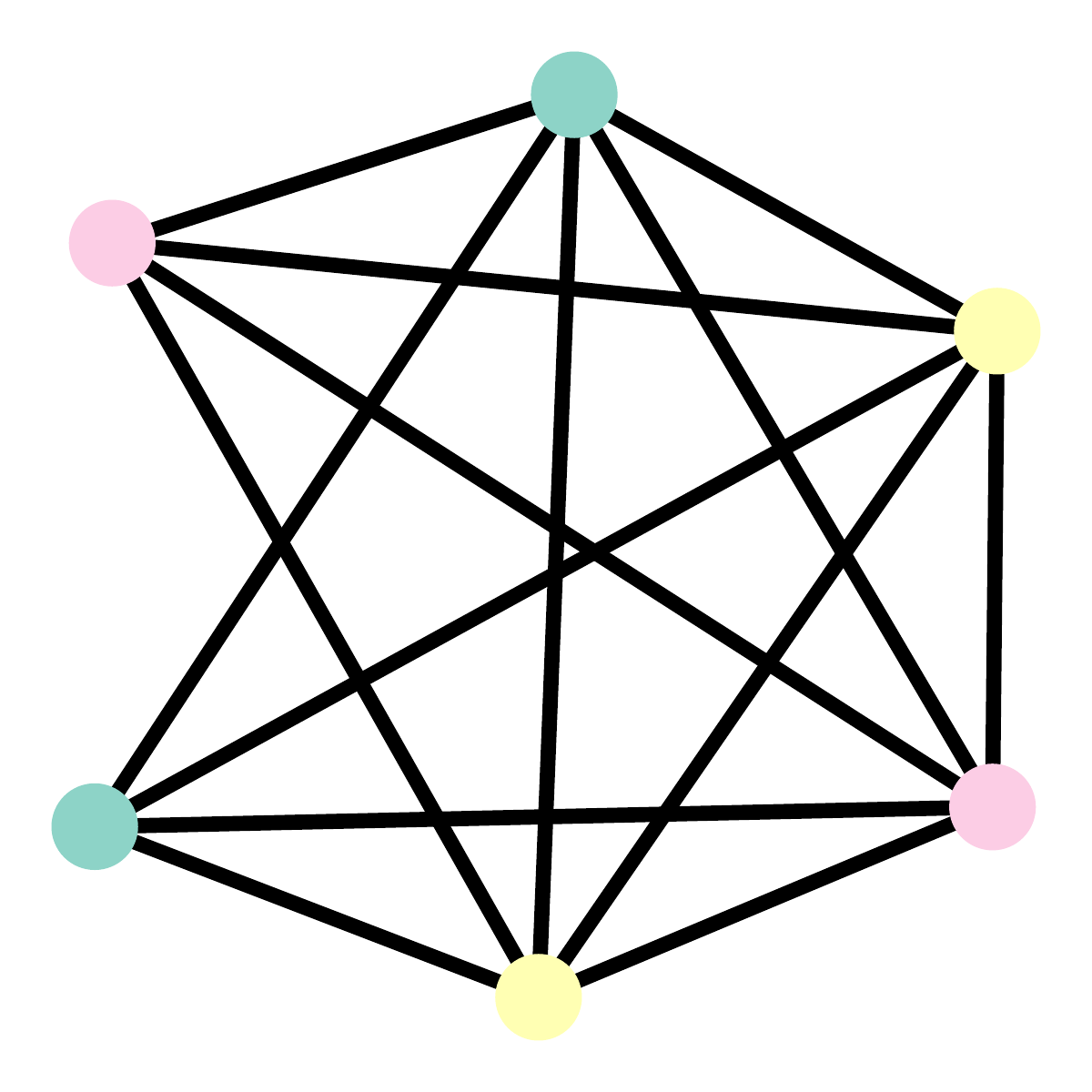}} & \adjustbox{valign=c}{\includegraphics[scale=0.105]{./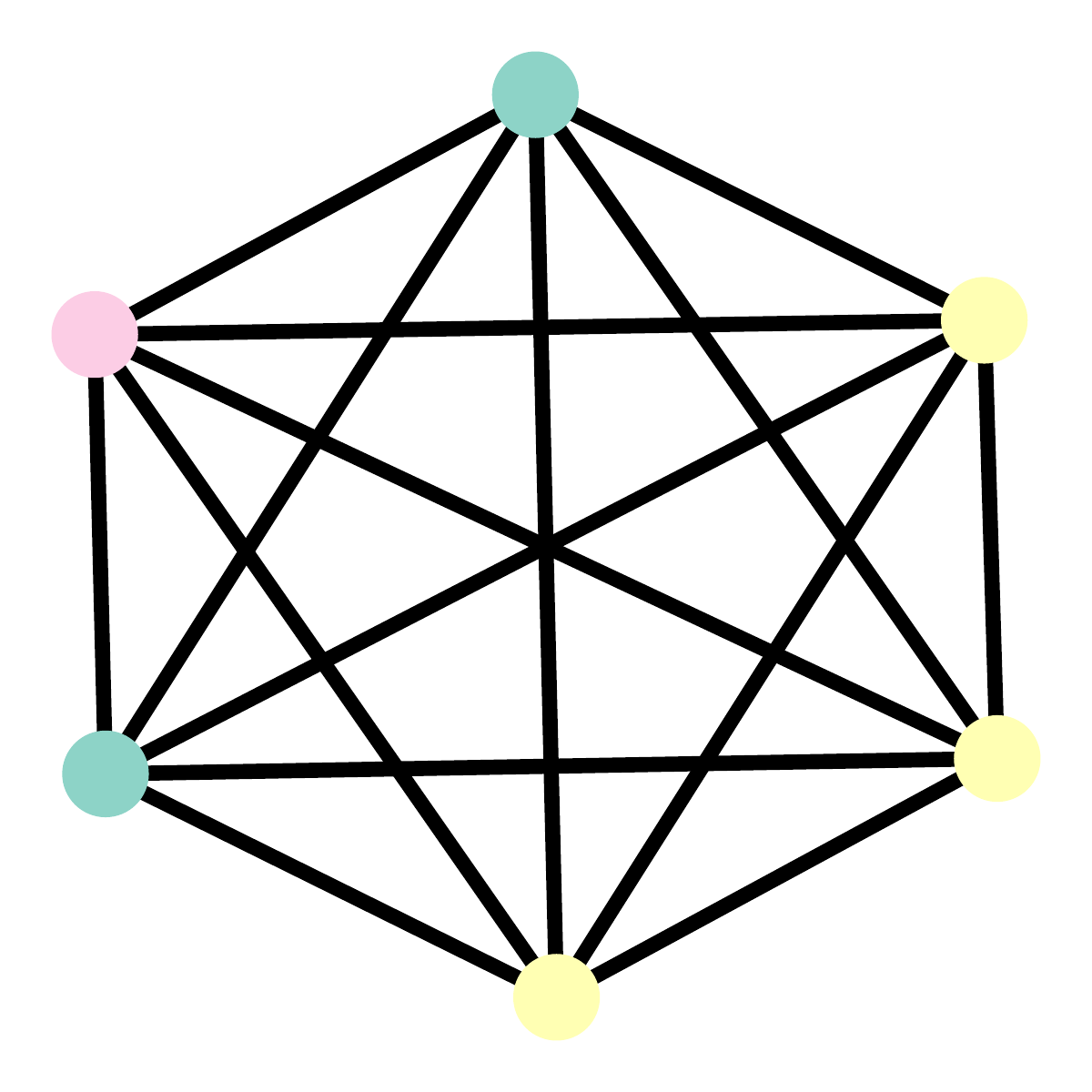}} & \adjustbox{valign=c}{\includegraphics[scale=0.105]{./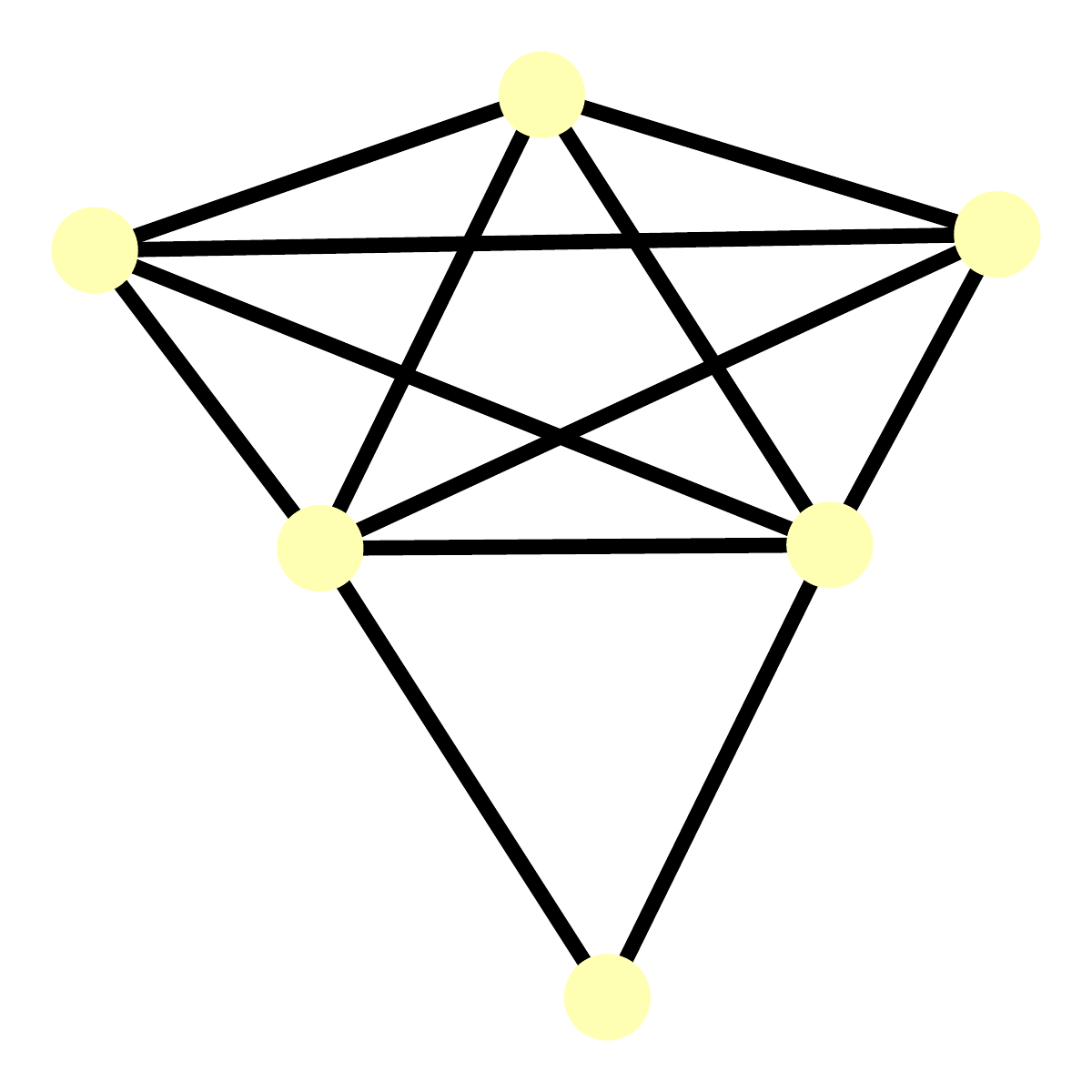}} & \adjustbox{valign=c}{\includegraphics[scale=0.105]{./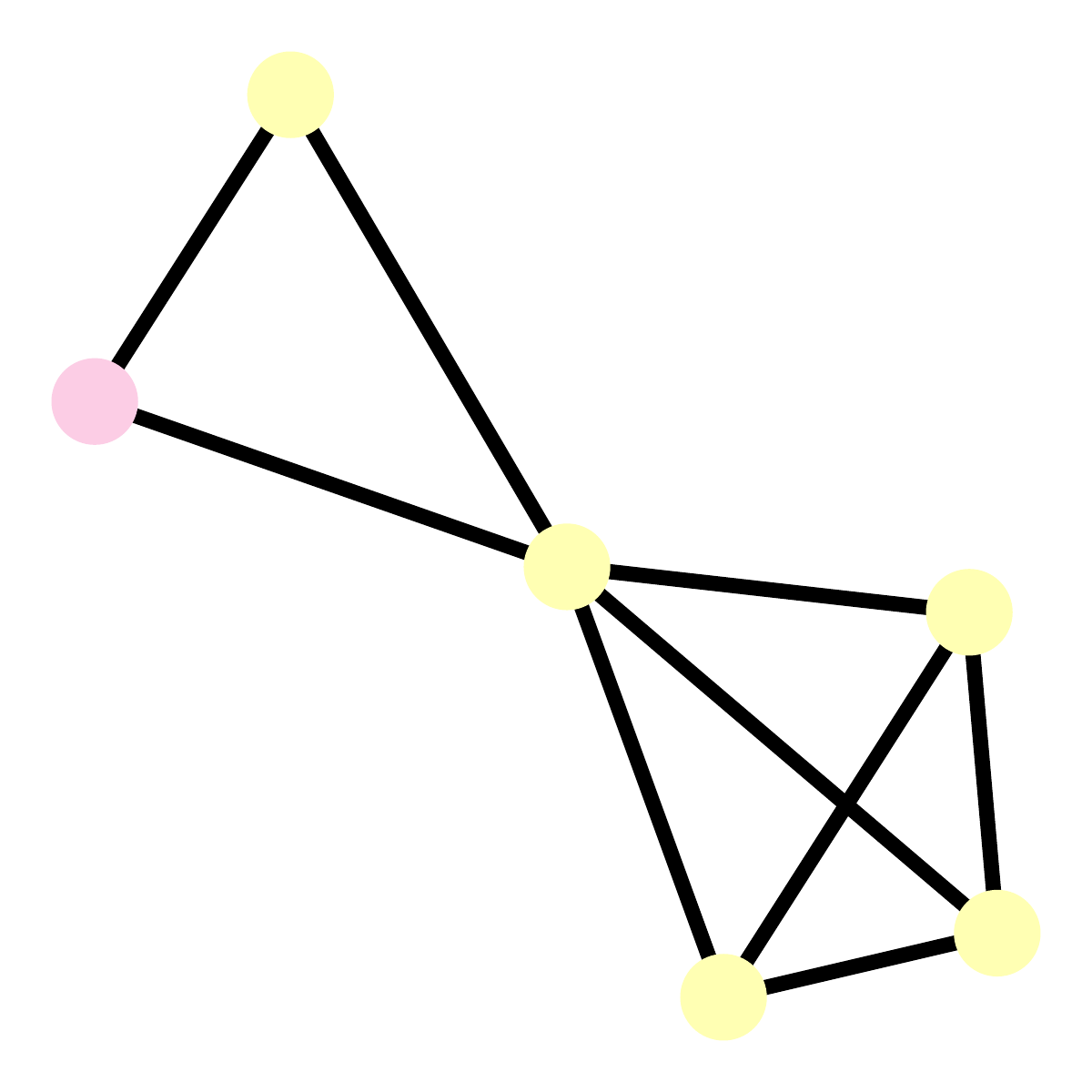}} &
\adjustbox{valign=c}{\includegraphics[scale=0.105]{./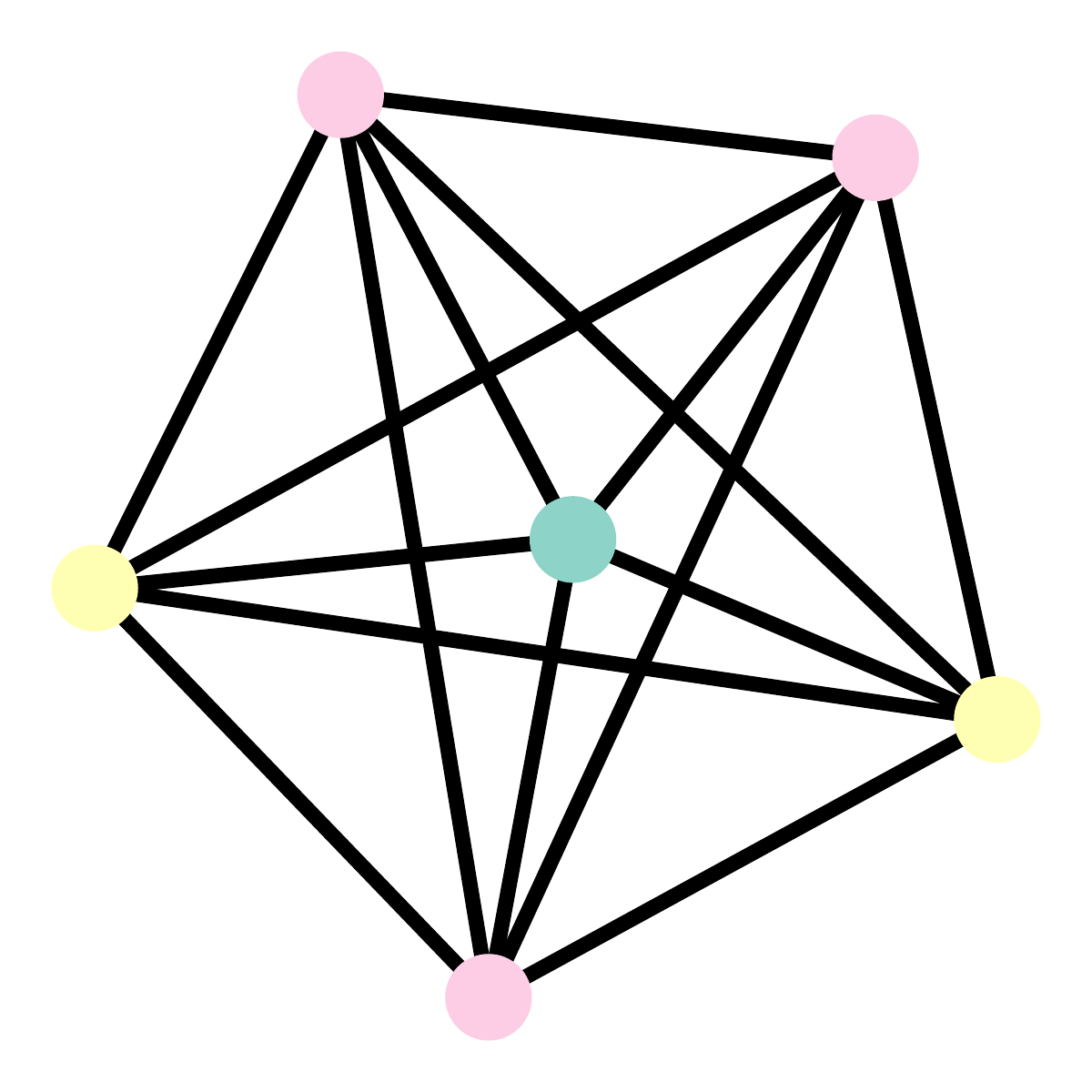}} &
\adjustbox{valign=c}{\includegraphics[scale=0.105]{./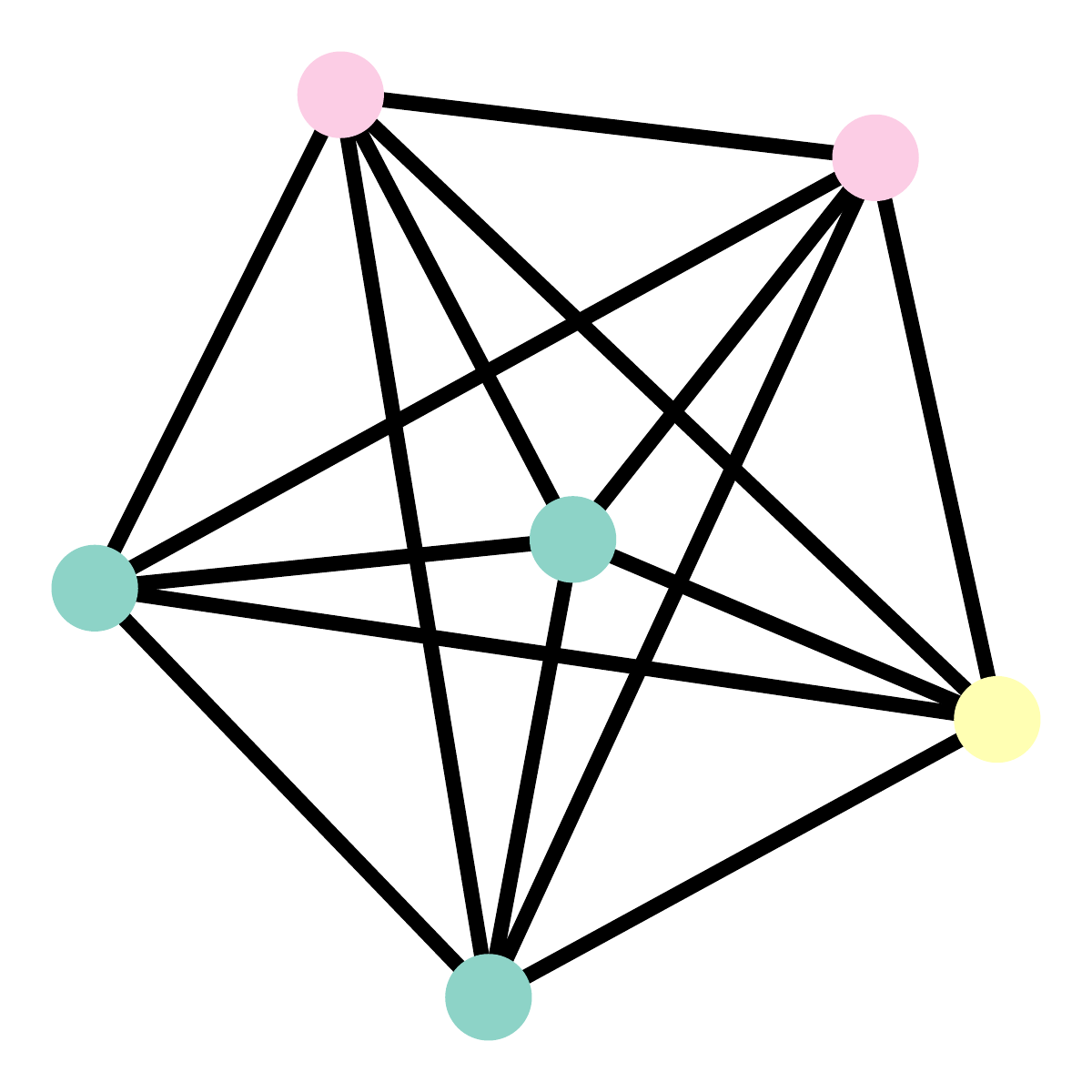}} &
\adjustbox{valign=c}{\includegraphics[scale=0.105]{./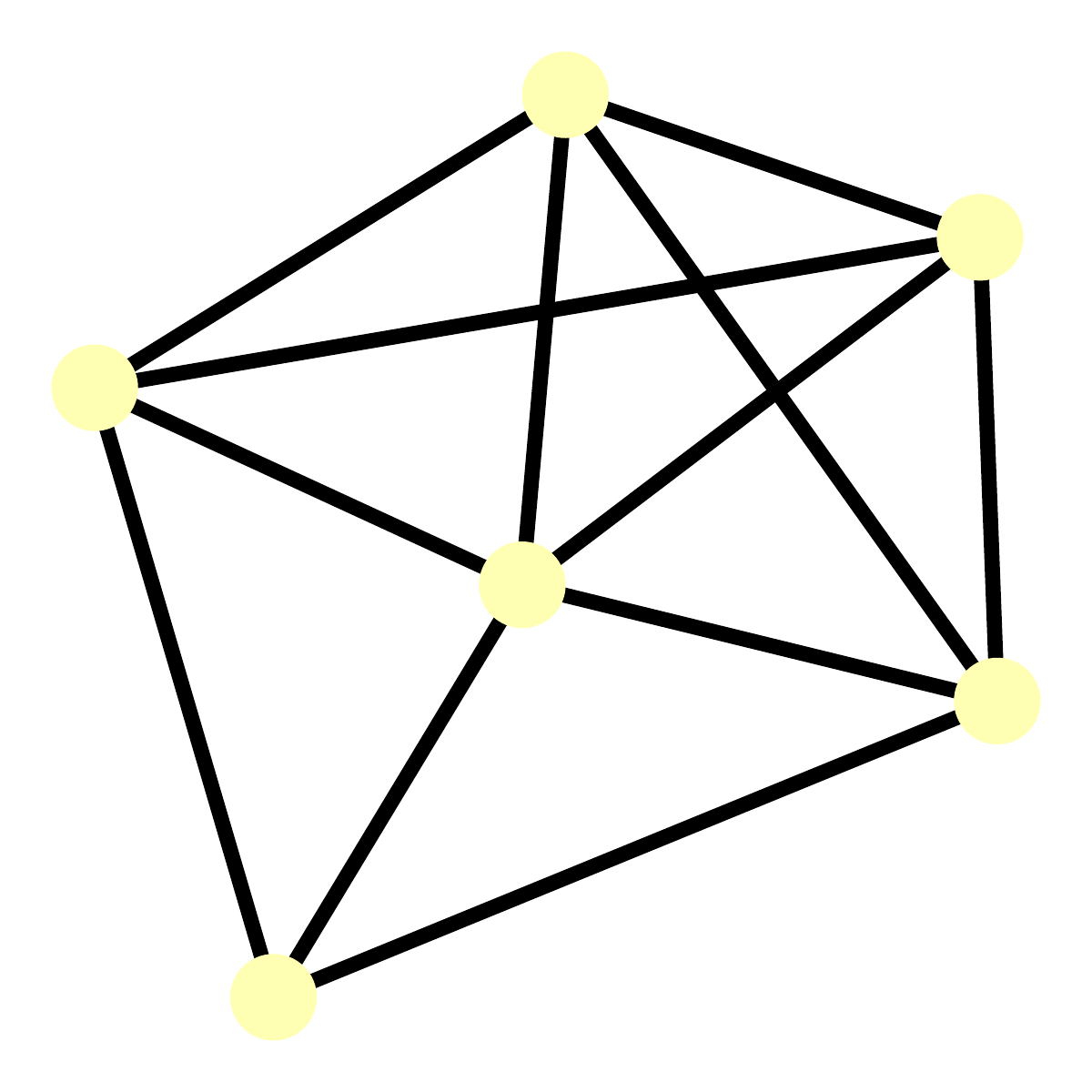}} \\ \midrule
\textbf{Node Feature} & \multicolumn{1}{c|}{\textbf{MSE/ACC(\%)}} & 0.3224/26.32 & 0.9537/21.05 & 1.2781/15.79 & 0.7949/31.58 & 0.0994/57.89 & 0.1072/\textbf{84.21} & 0.3175/26.32 & 0.8379/21.05 & \textbf{0.0745/84.21} \\
\textbf{Adjacency Matrix} & \multicolumn{1}{c|}{\textbf{AUC/AP}} & 0.4494/0.1083 & 0.5209/0.1221 & 0.5515/0.1661 & 0.5374/0.1504 & 0.5110/0.1243 & 0.4209/0.1101/49.86 & 0.5528/0.1396 & 0.5119/0.1245 & \textbf{0.9865/0.9203} \\
\textbf{\begin{tabular}[c]{@{}c@{}}Ground-truth/\\ Reconstructed graph\end{tabular}} & \multicolumn{1}{c|}{\adjustbox{valign=c}{\includegraphics[scale=0.105]{./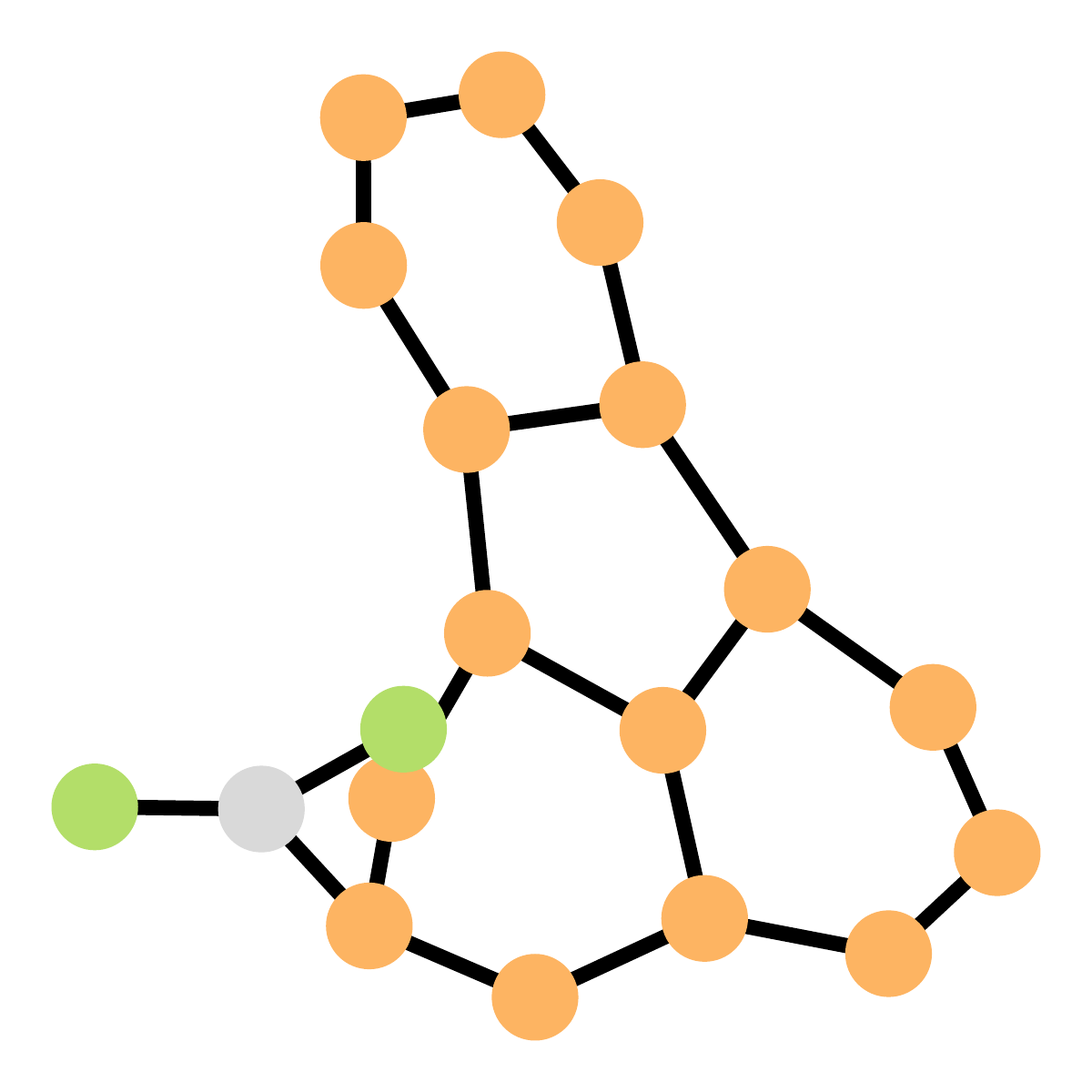}}} & \adjustbox{valign=c}{\includegraphics[scale=0.105]{./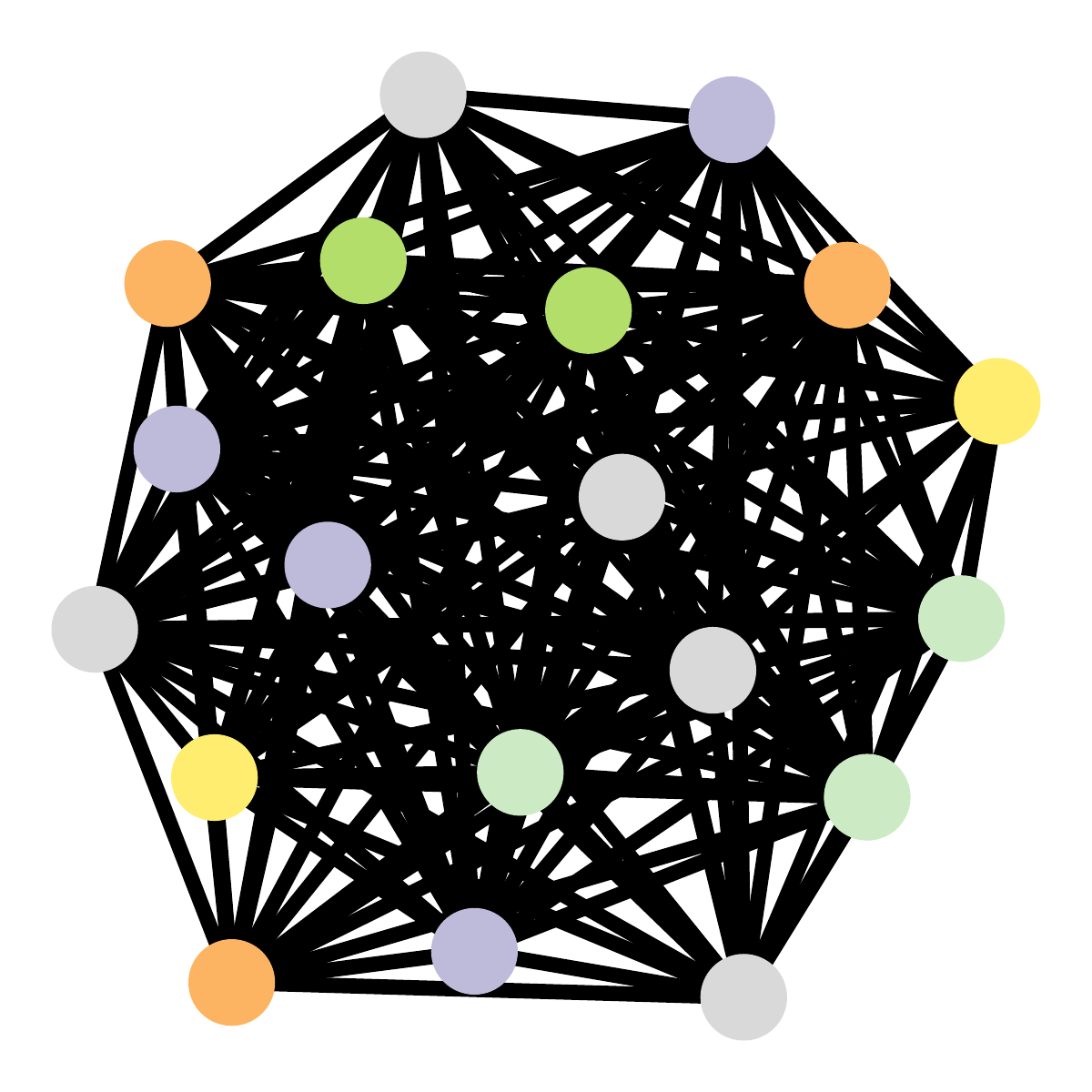}} & \adjustbox{valign=c}{\includegraphics[scale=0.105]{./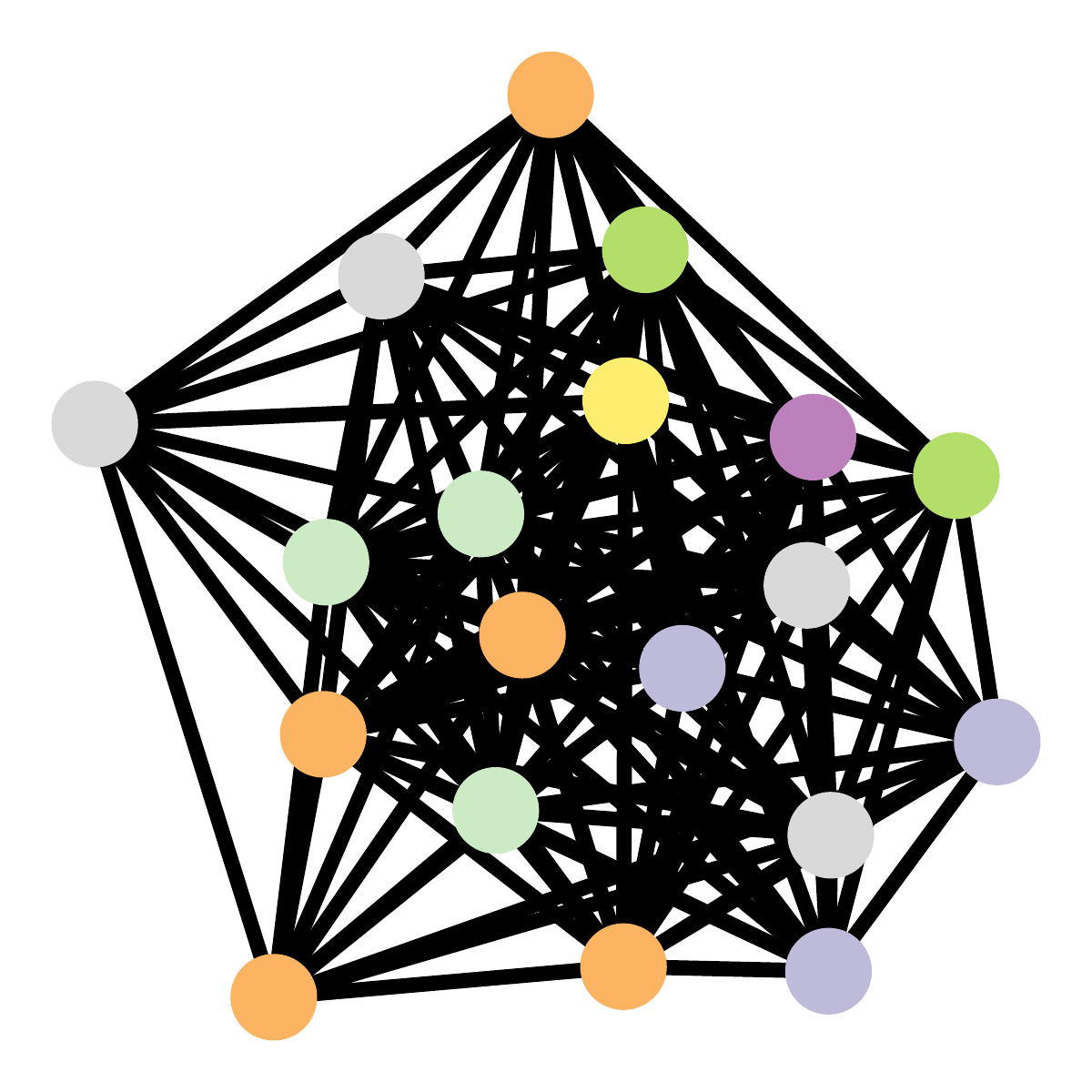}} & \adjustbox{valign=c}{\includegraphics[scale=0.105]{./figures/visualization/MUTAG/30_idlg.pdf}} & \adjustbox{valign=c}{\includegraphics[scale=0.105]{./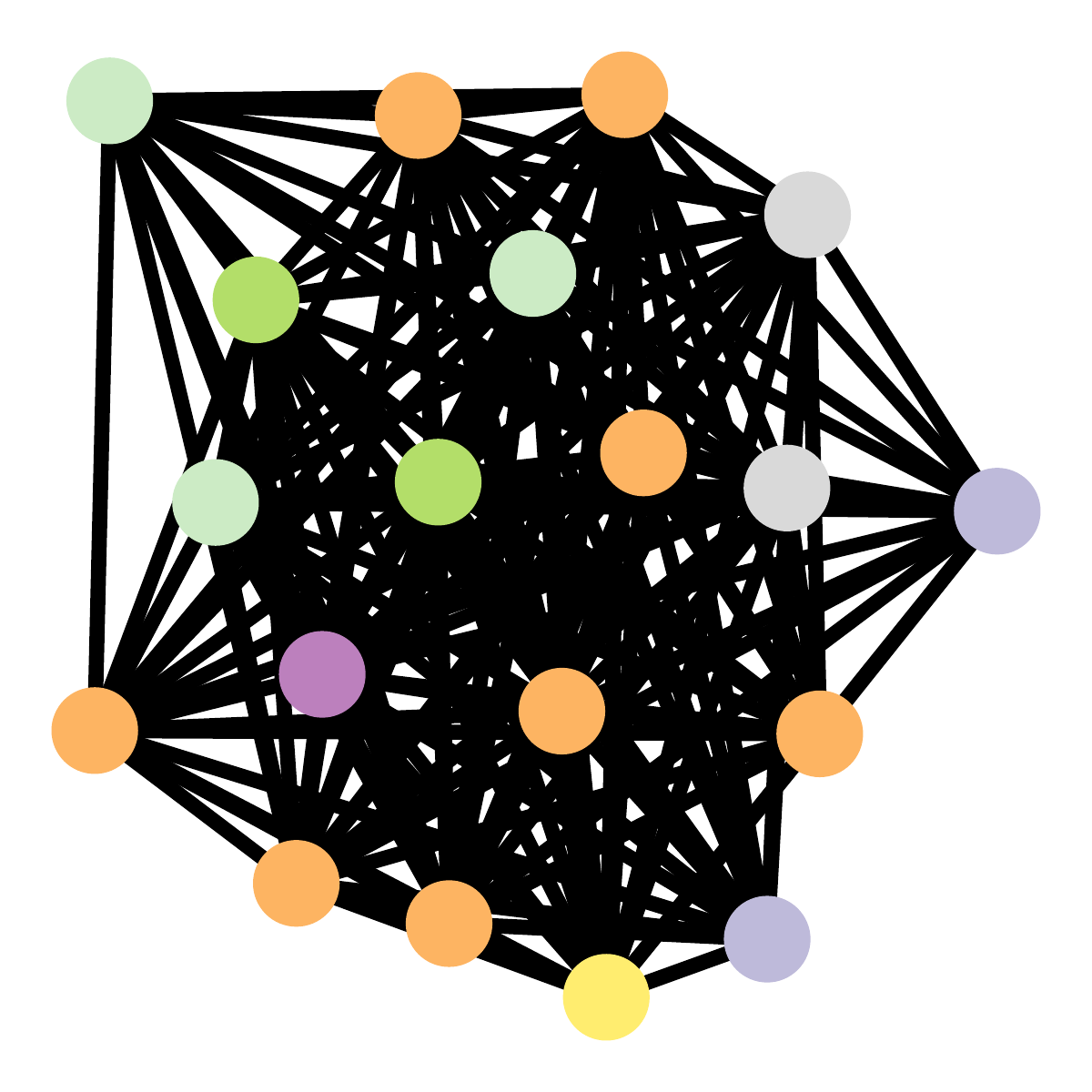}} & \adjustbox{valign=c}{\includegraphics[scale=0.105]{./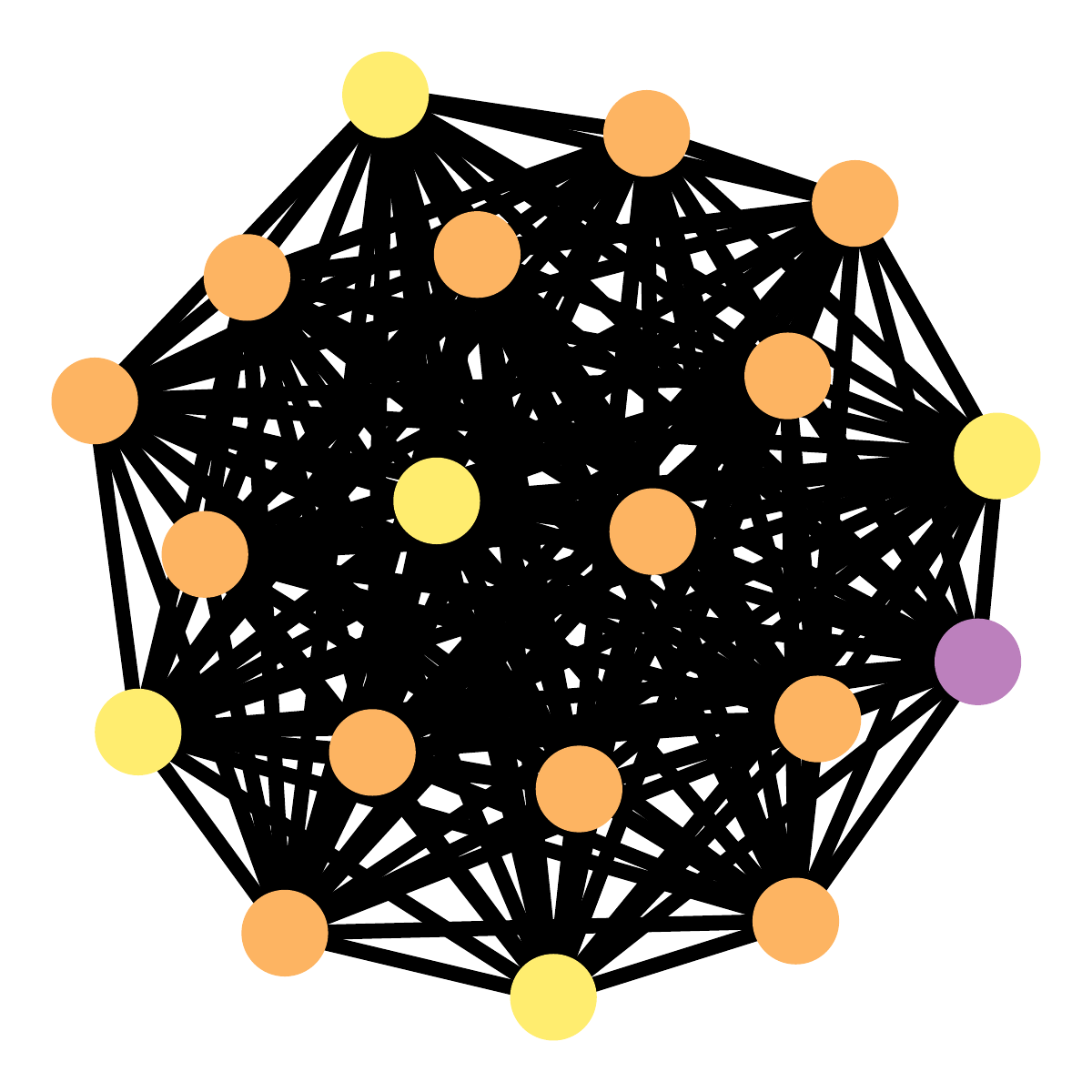}} & \adjustbox{valign=c}{\includegraphics[scale=0.105]{./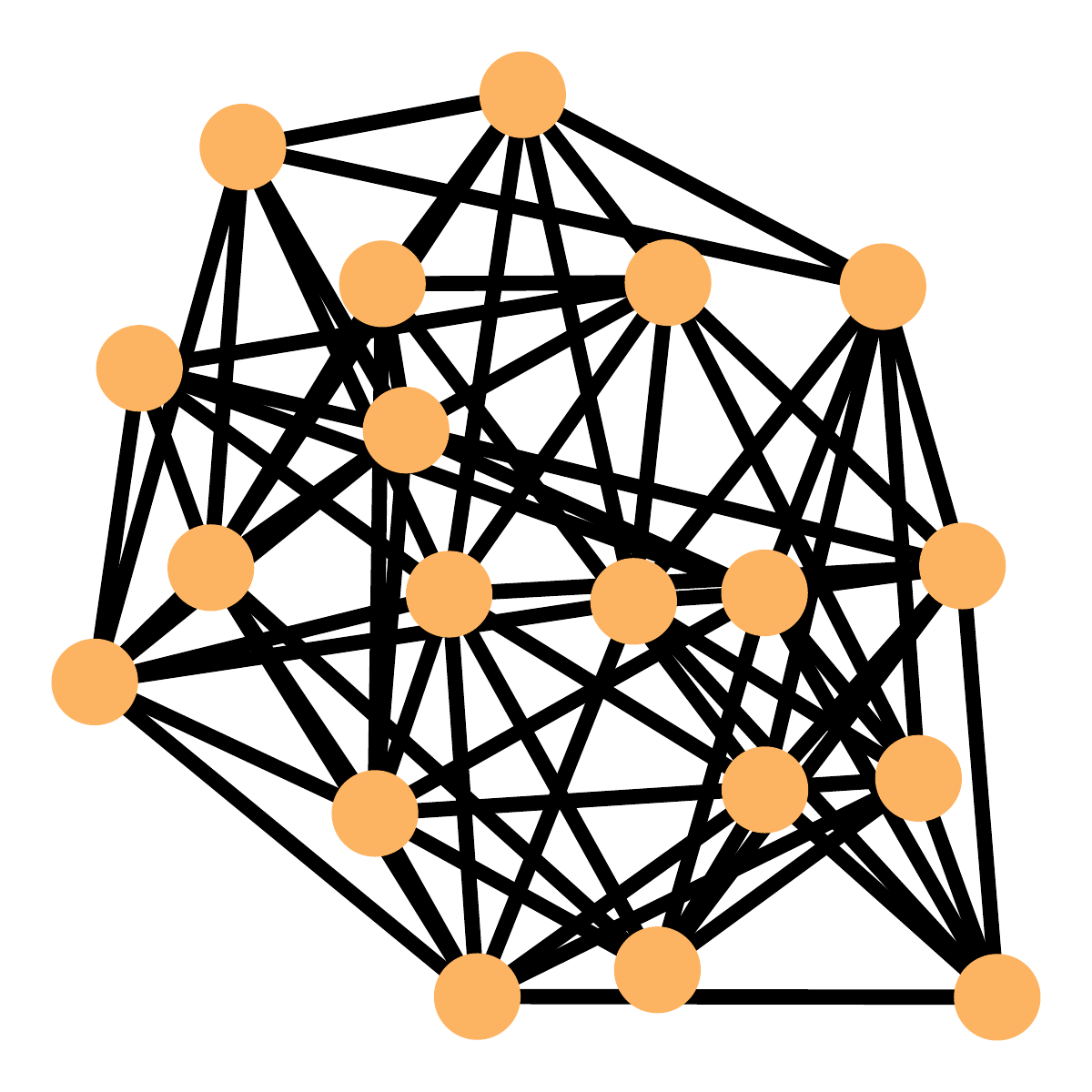}} & 
\adjustbox{valign=c}{\includegraphics[scale=0.105]{./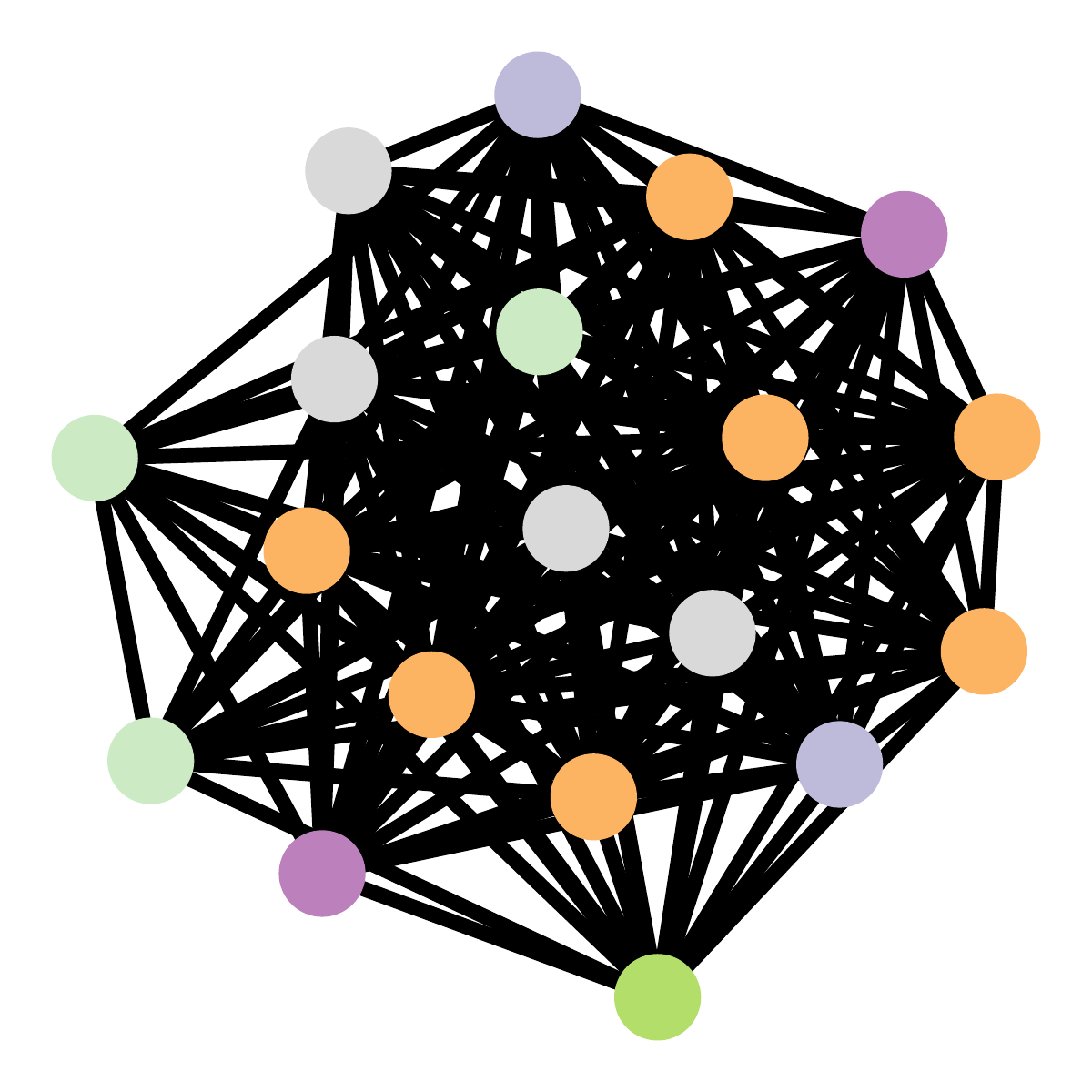}} &
\adjustbox{valign=c}{\includegraphics[scale=0.105]{./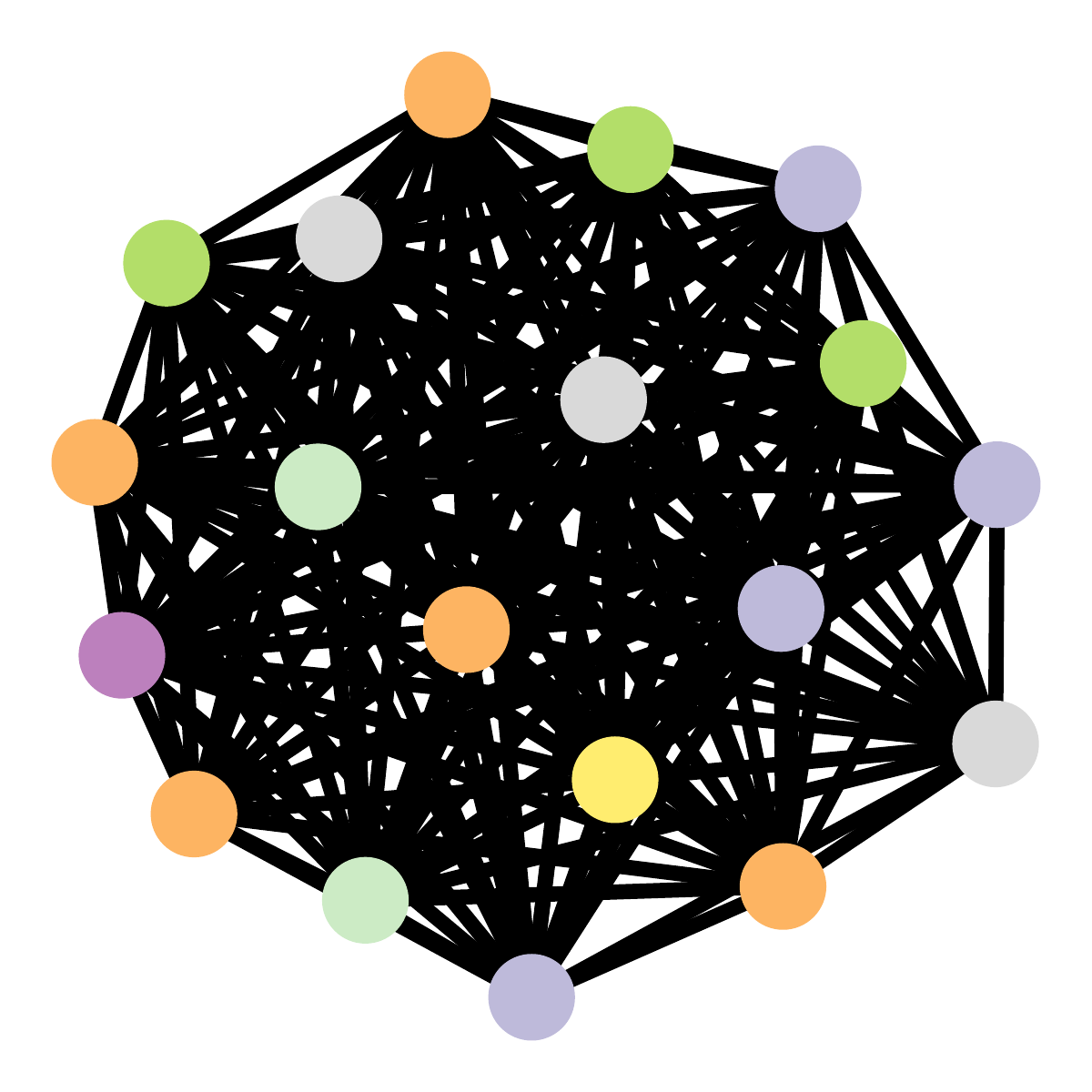}} &
\adjustbox{valign=c}{\includegraphics[scale=0.105]{./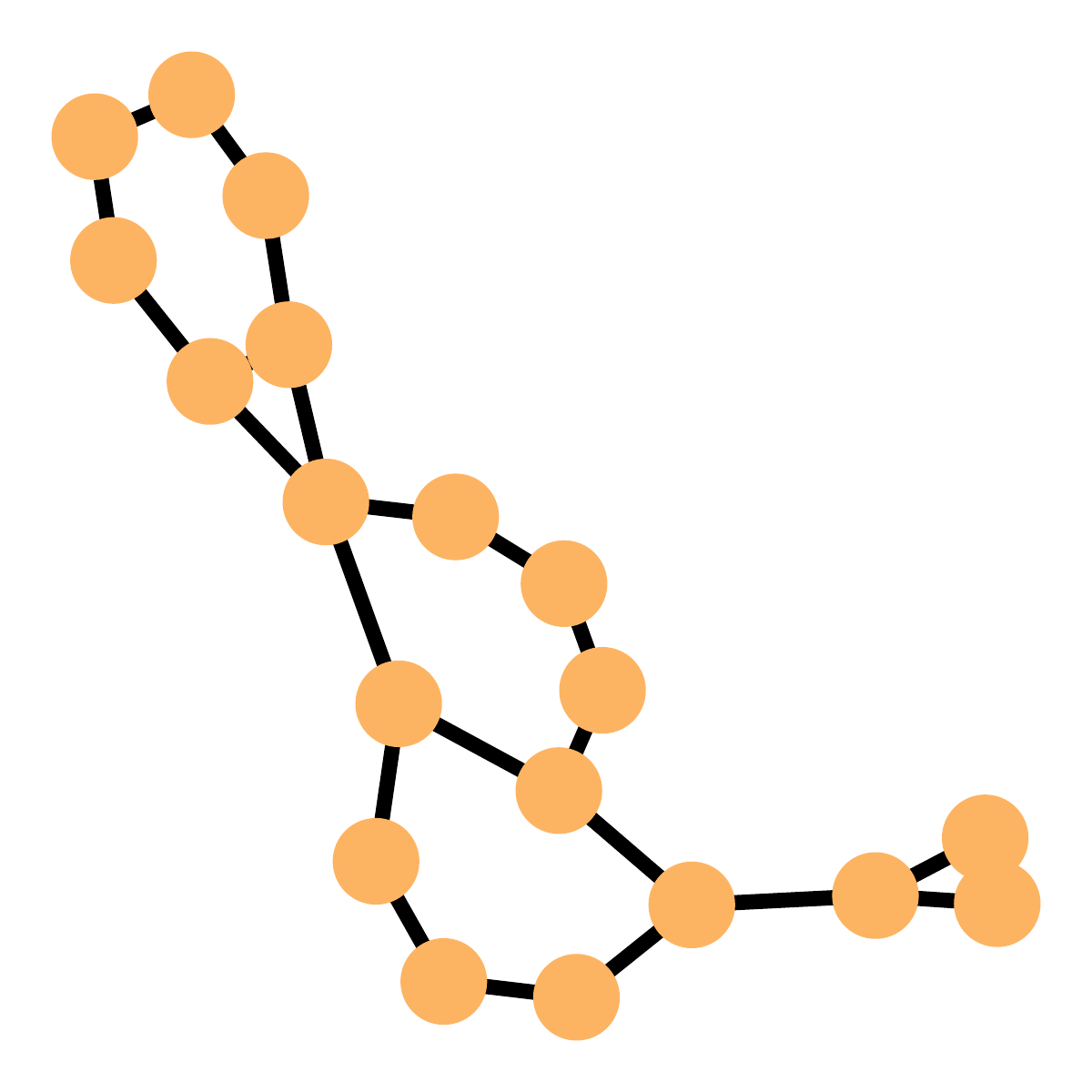}} \\ \midrule
\textbf{Node Feature} & \multicolumn{1}{c|}{\textbf{MSE/ACC(\%)}} & 0.3282/0.00 & 1.0487/3.33 & 0.9469/16.67 & 0.9277/13.33 & 0.0274/0.00 & 0.0154/\textbf{80.00} & 0.3174/6.67 & 1.0269/10.00 & \textbf{0.0144/80.00} \\
\textbf{Adjacency Matrix} & \multicolumn{1}{c|}{\textbf{AUC/AP}} & 0.5070/0.0736 & 0.4598/0.0641 & 0.5070/0.0695 & 0.4840/0.0682 & 0.5470/0.0780 & 0.4796/0.0686 & 0.4445/0.0662 & 0.5319/0.0756 & \textbf{0.9238/0.4942} \\
\textbf{\begin{tabular}[c]{@{}c@{}}Ground-truth/\\ Reconstructed graph\end{tabular}} & \multicolumn{1}{c|}{\adjustbox{valign=c}{\includegraphics[scale=0.105]{./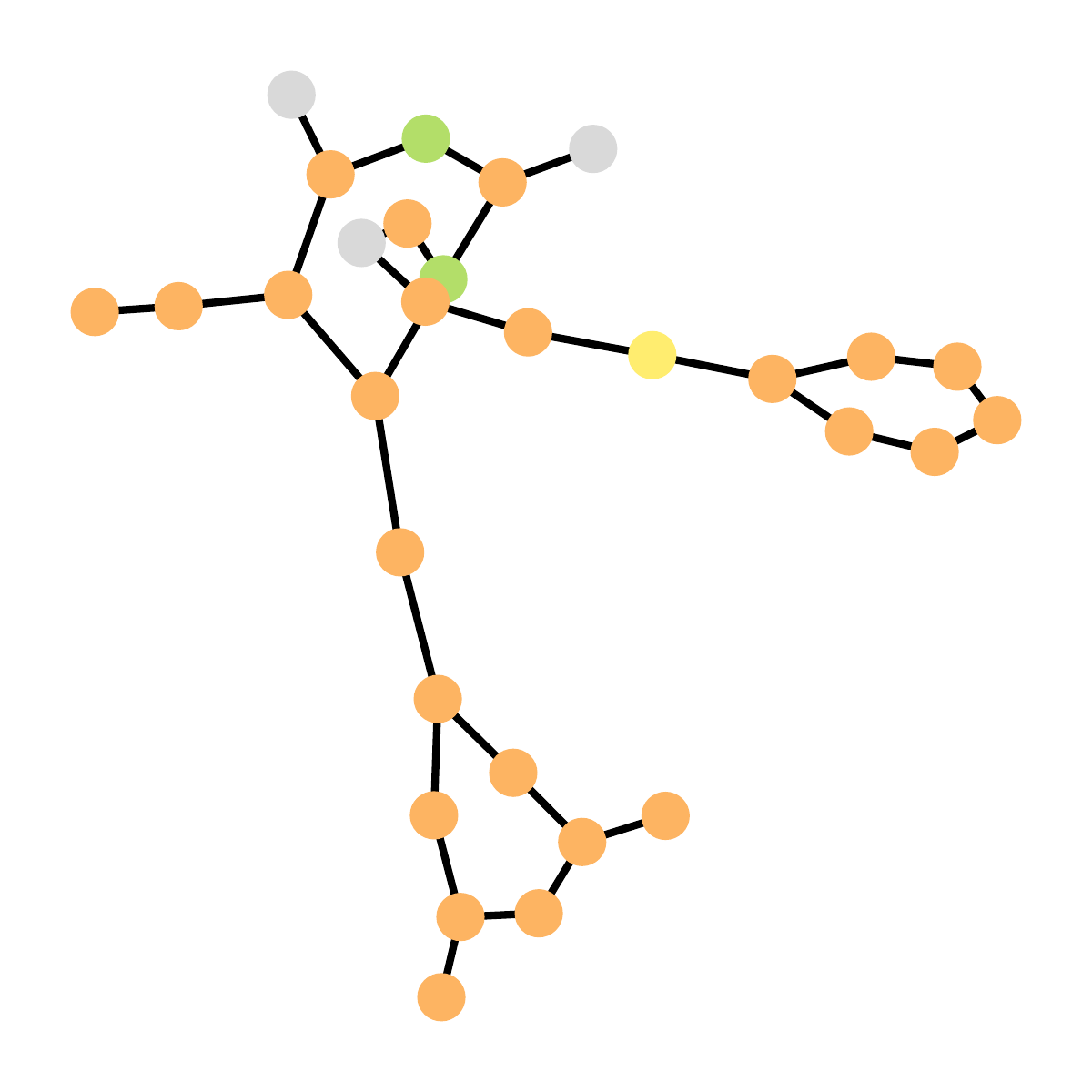}}} & \adjustbox{valign=c}{\includegraphics[scale=0.105]{./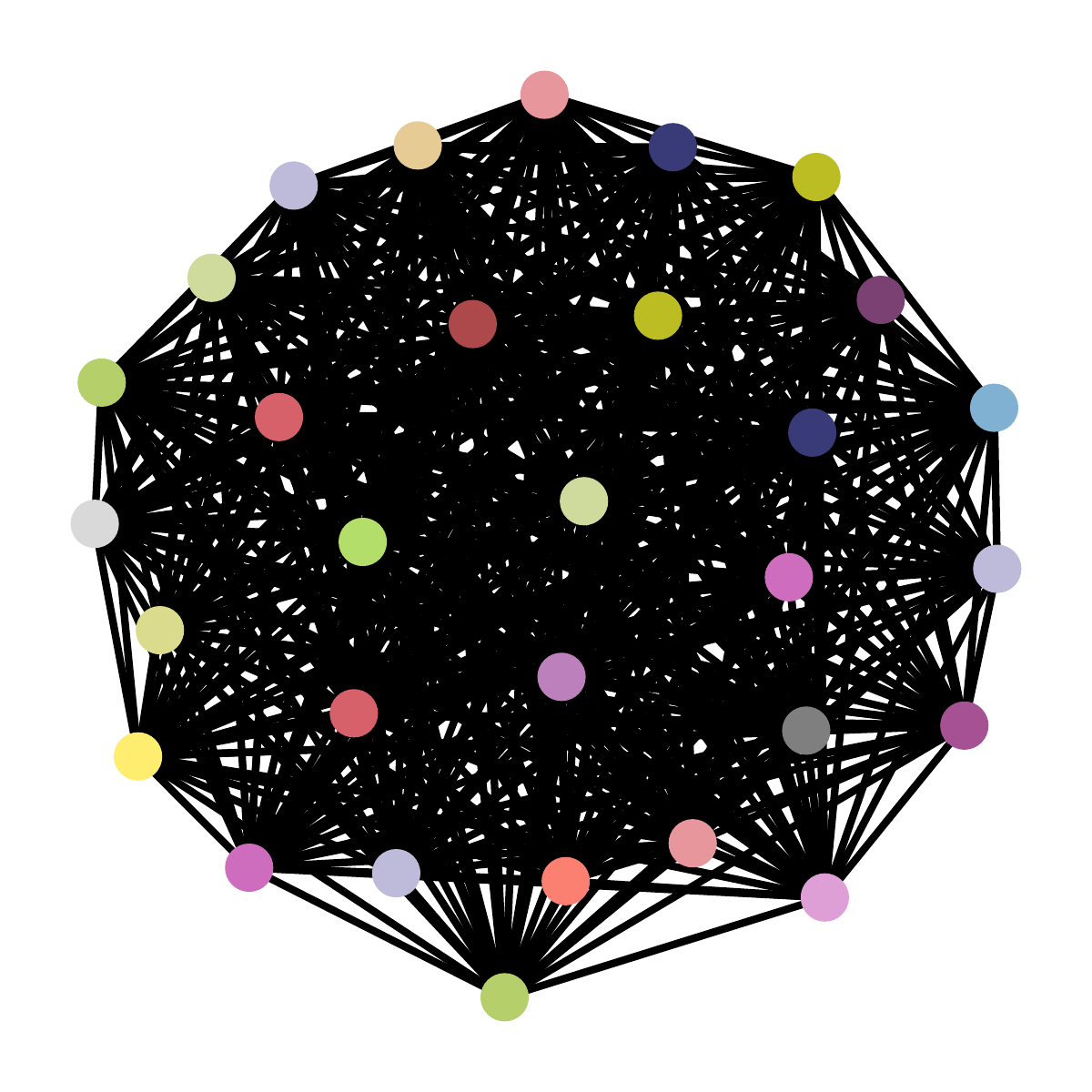}} & \adjustbox{valign=c}{\includegraphics[scale=0.105]{./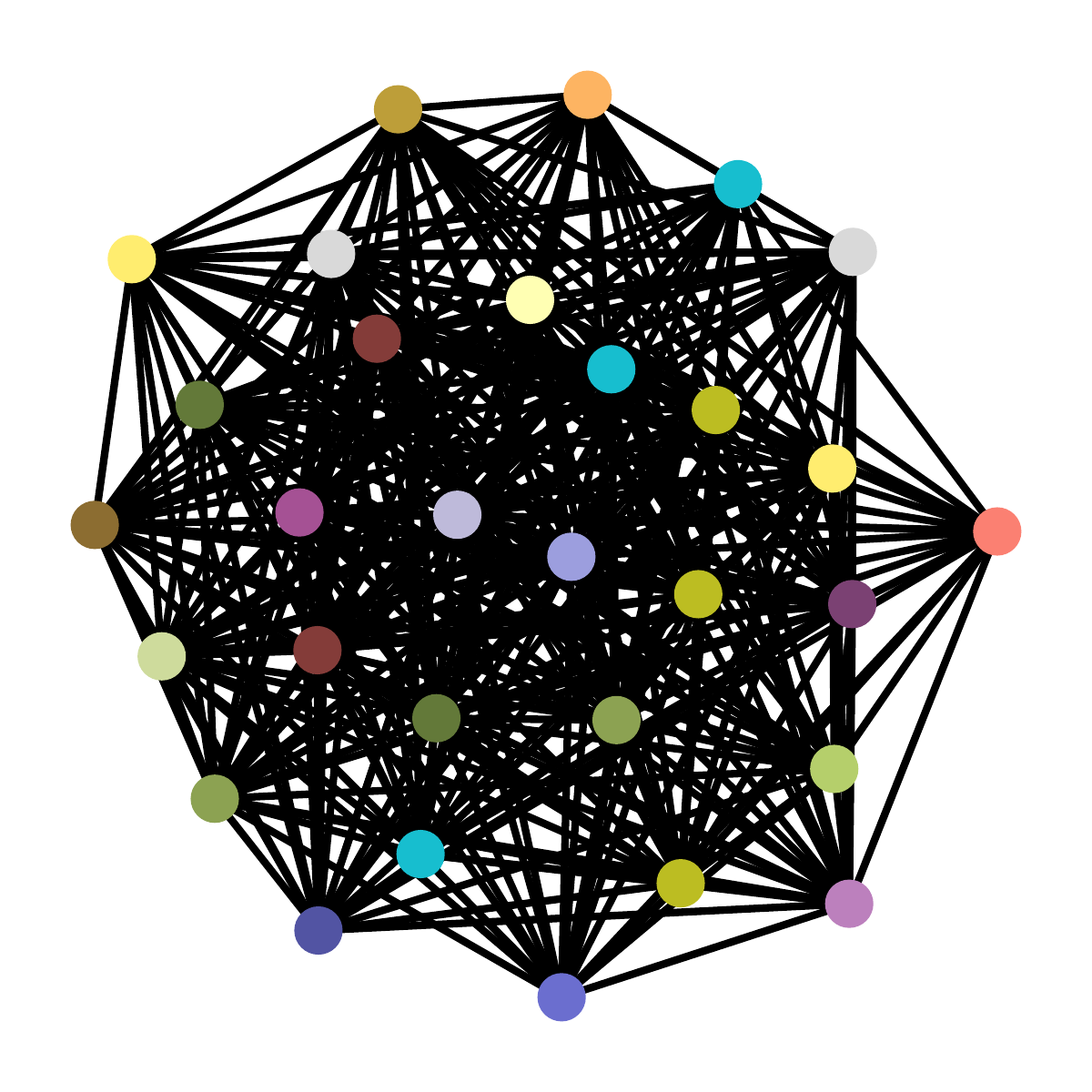}} & \adjustbox{valign=c}{\includegraphics[scale=0.105]{./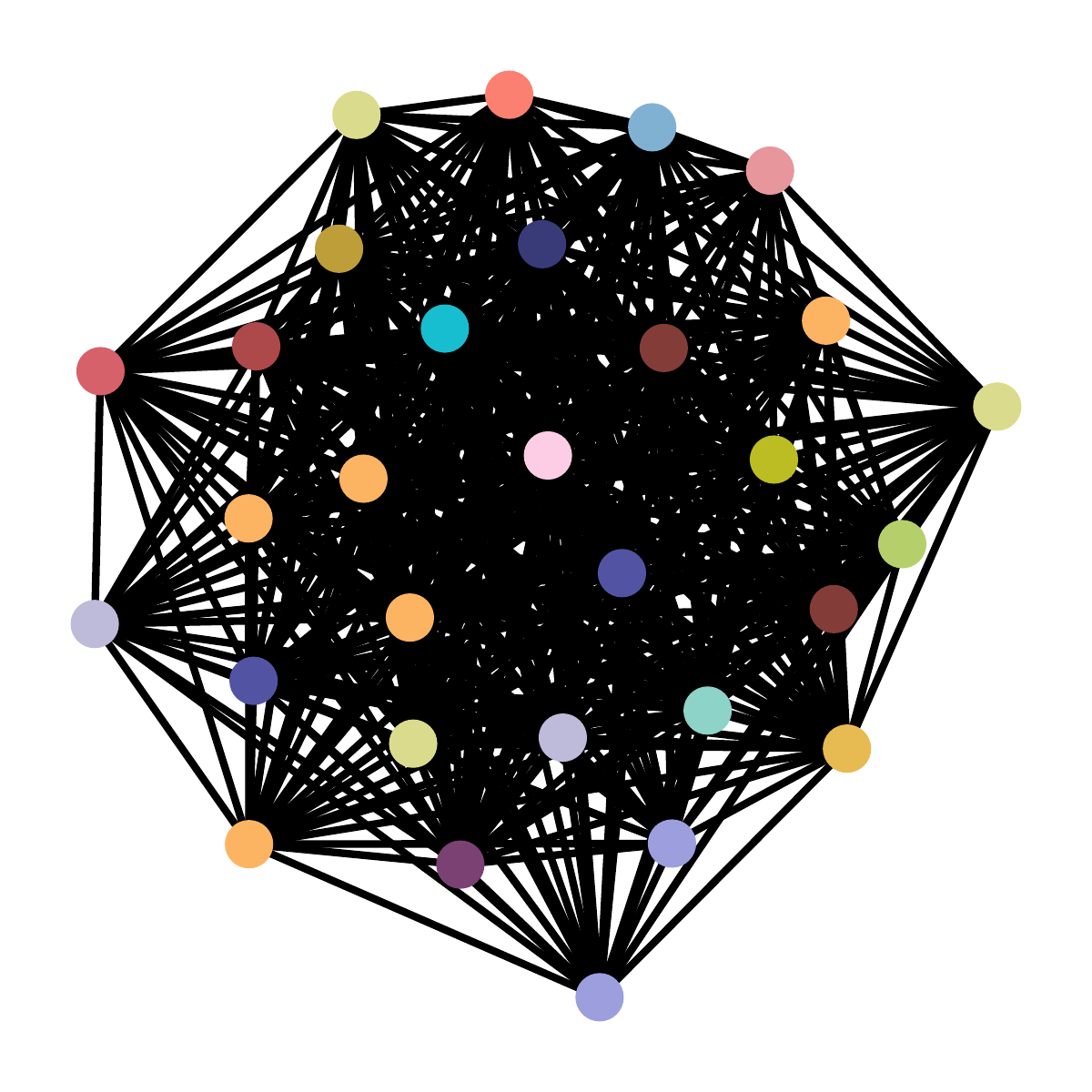}} & \adjustbox{valign=c}{\includegraphics[scale=0.105]{./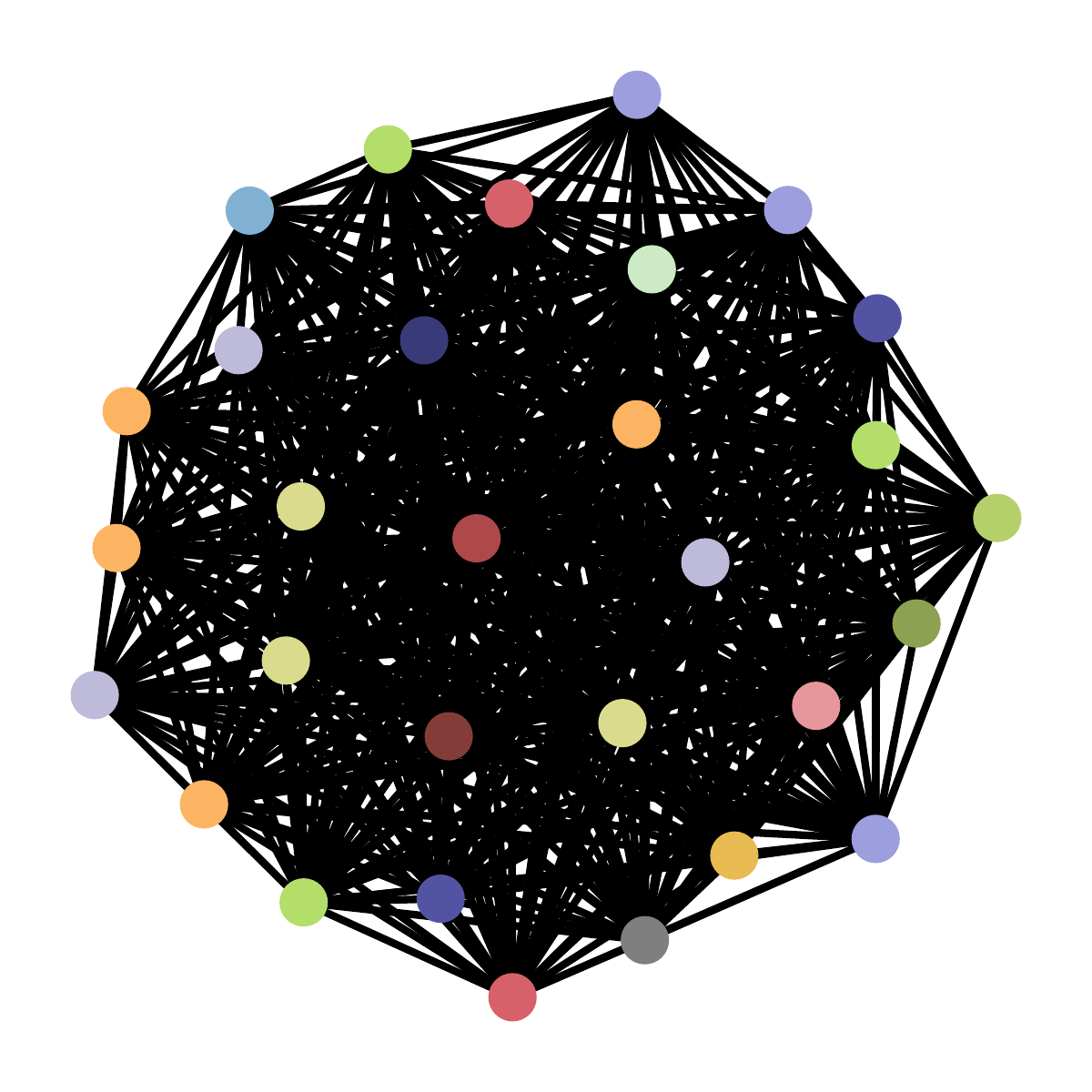}} & \adjustbox{valign=c}{\includegraphics[scale=0.105]{./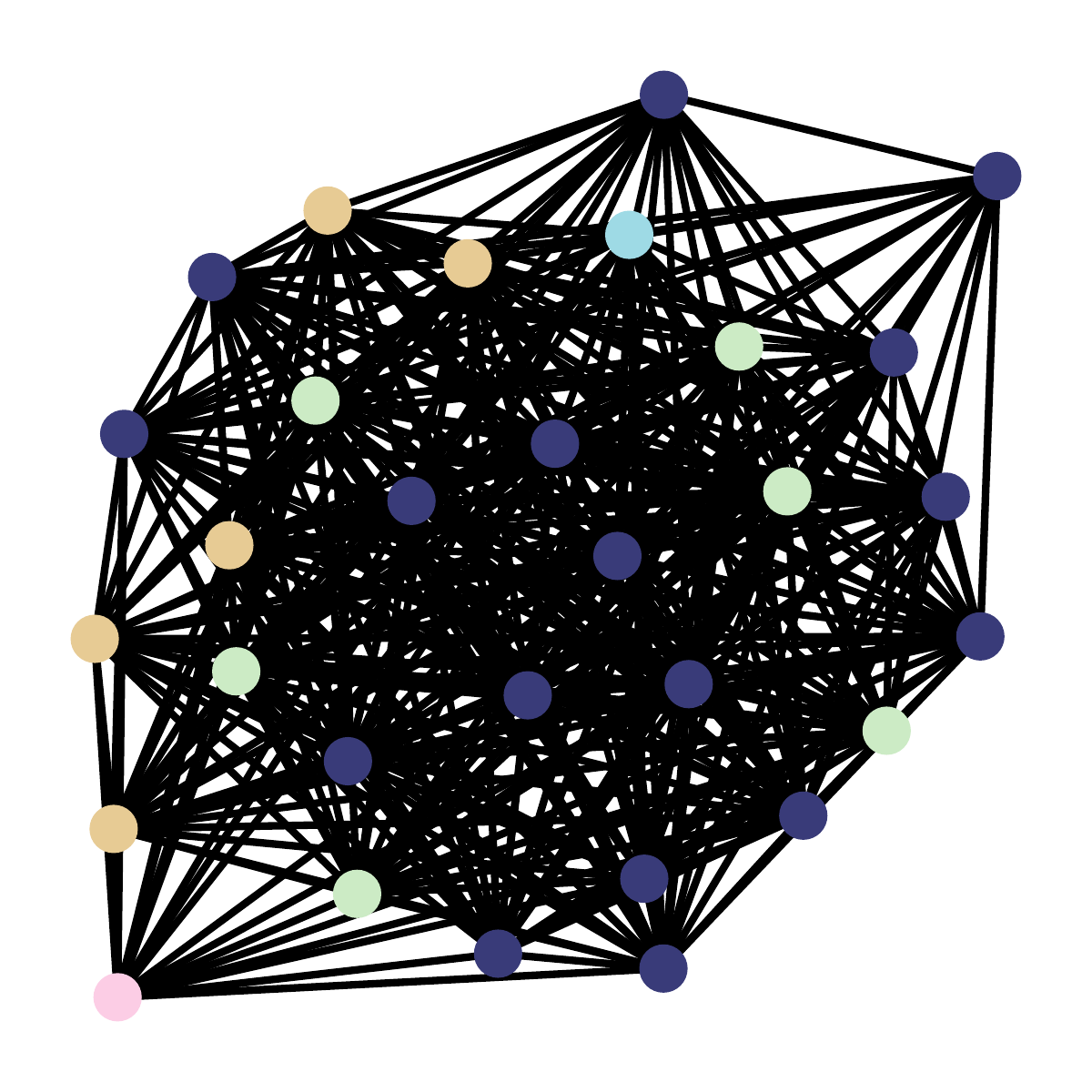}} & \adjustbox{valign=c}{\includegraphics[scale=0.105]{./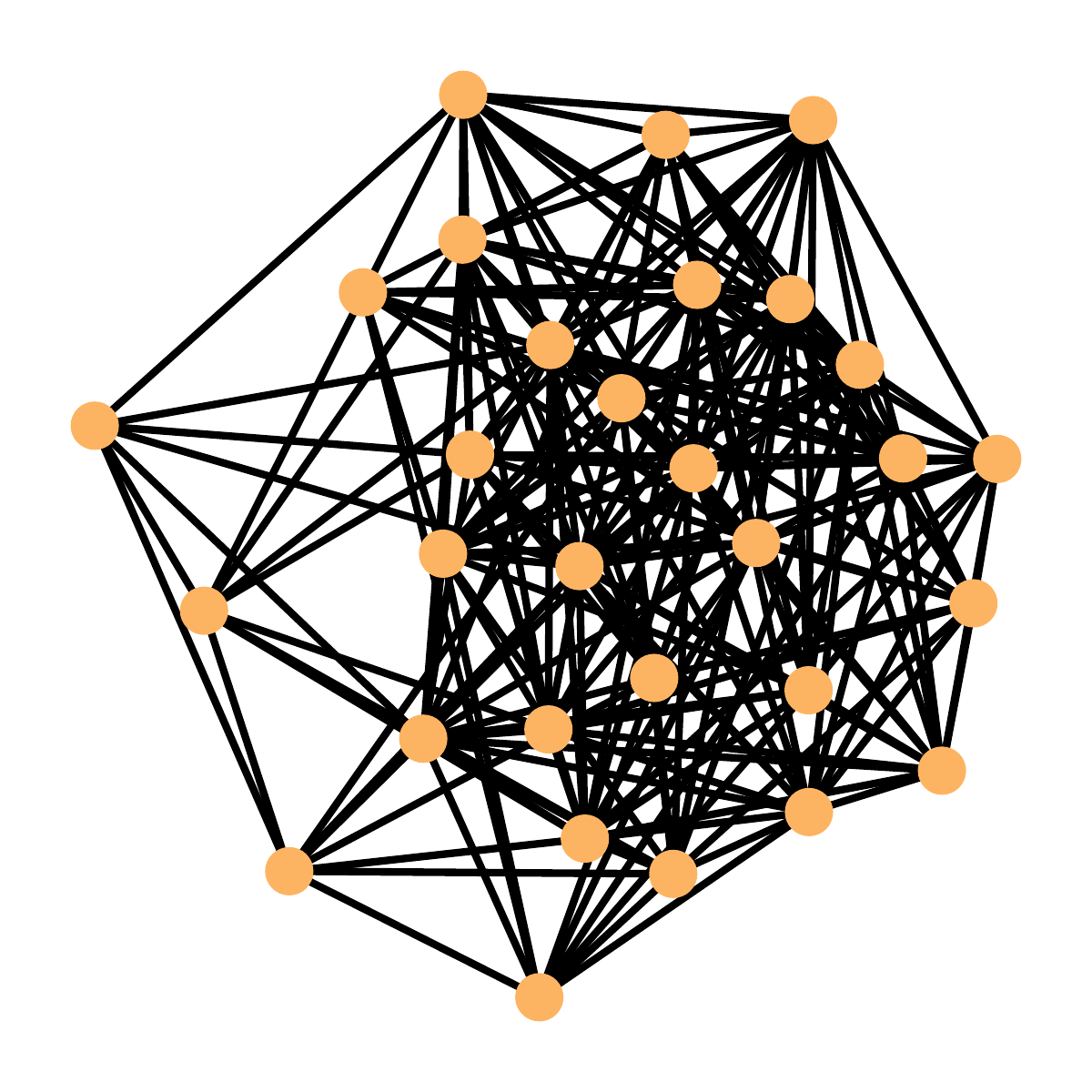}} & 
\adjustbox{valign=c}{\includegraphics[scale=0.105]{./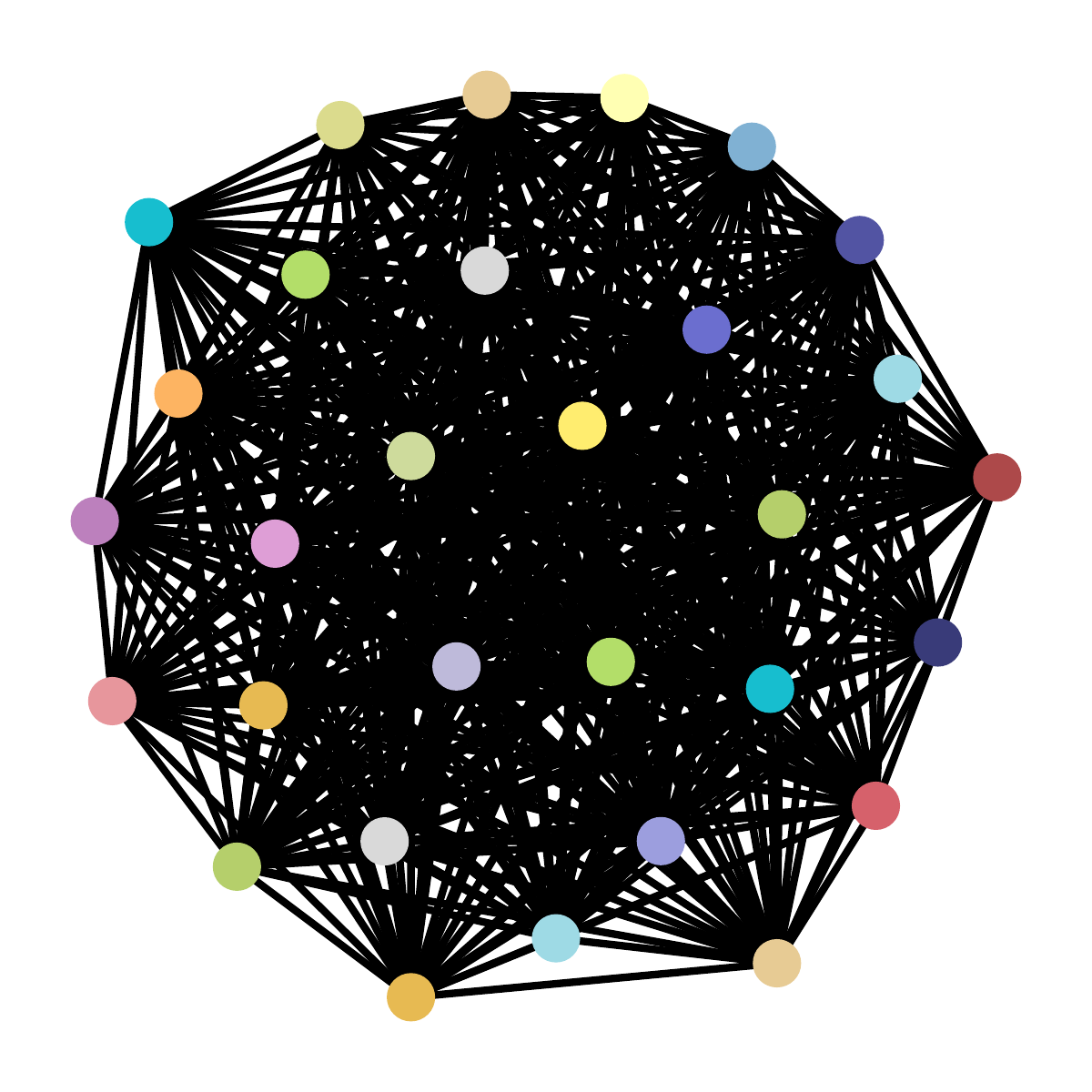}} & 
\adjustbox{valign=c}{\includegraphics[scale=0.105]{./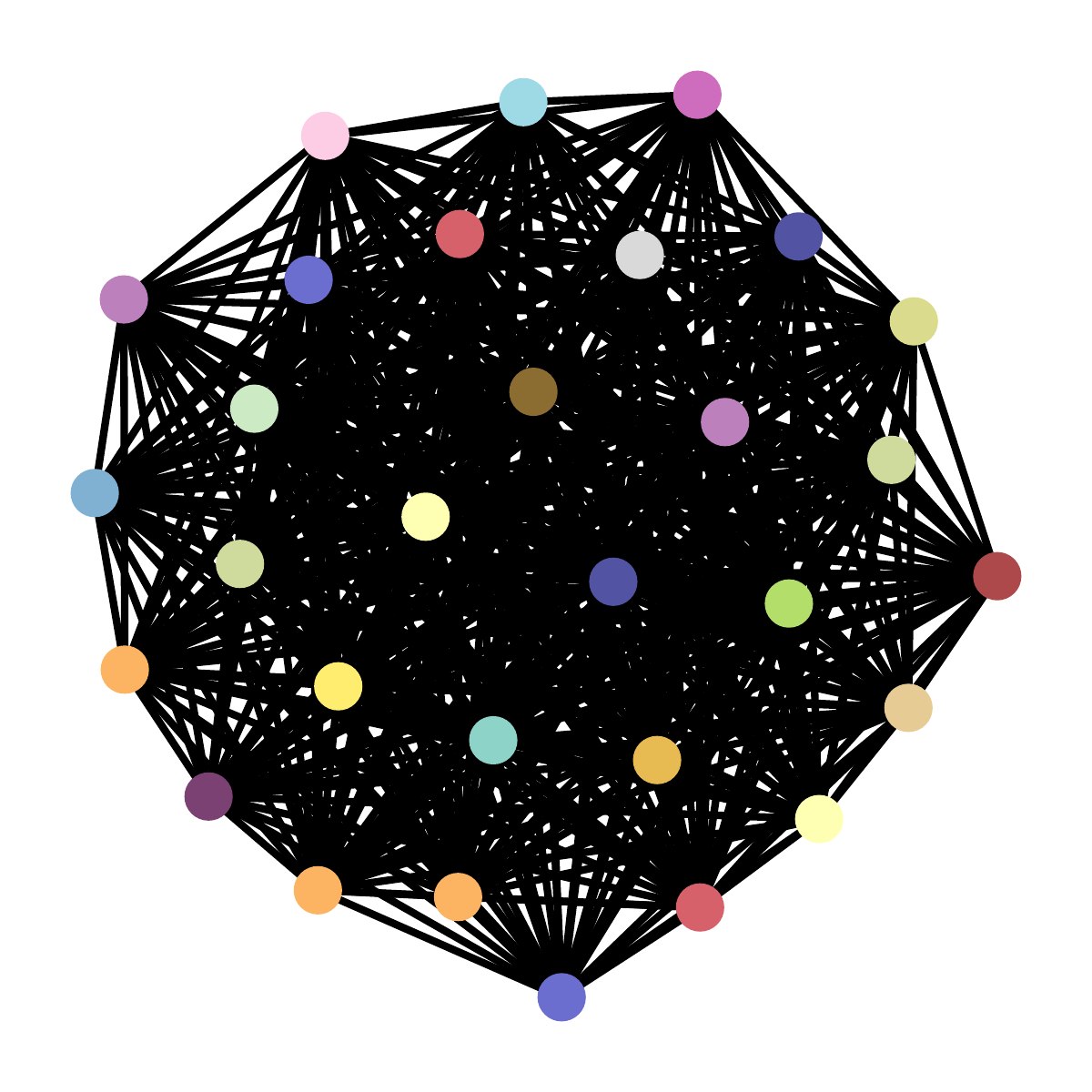}} &
\adjustbox{valign=c}{\includegraphics[scale=0.105]{./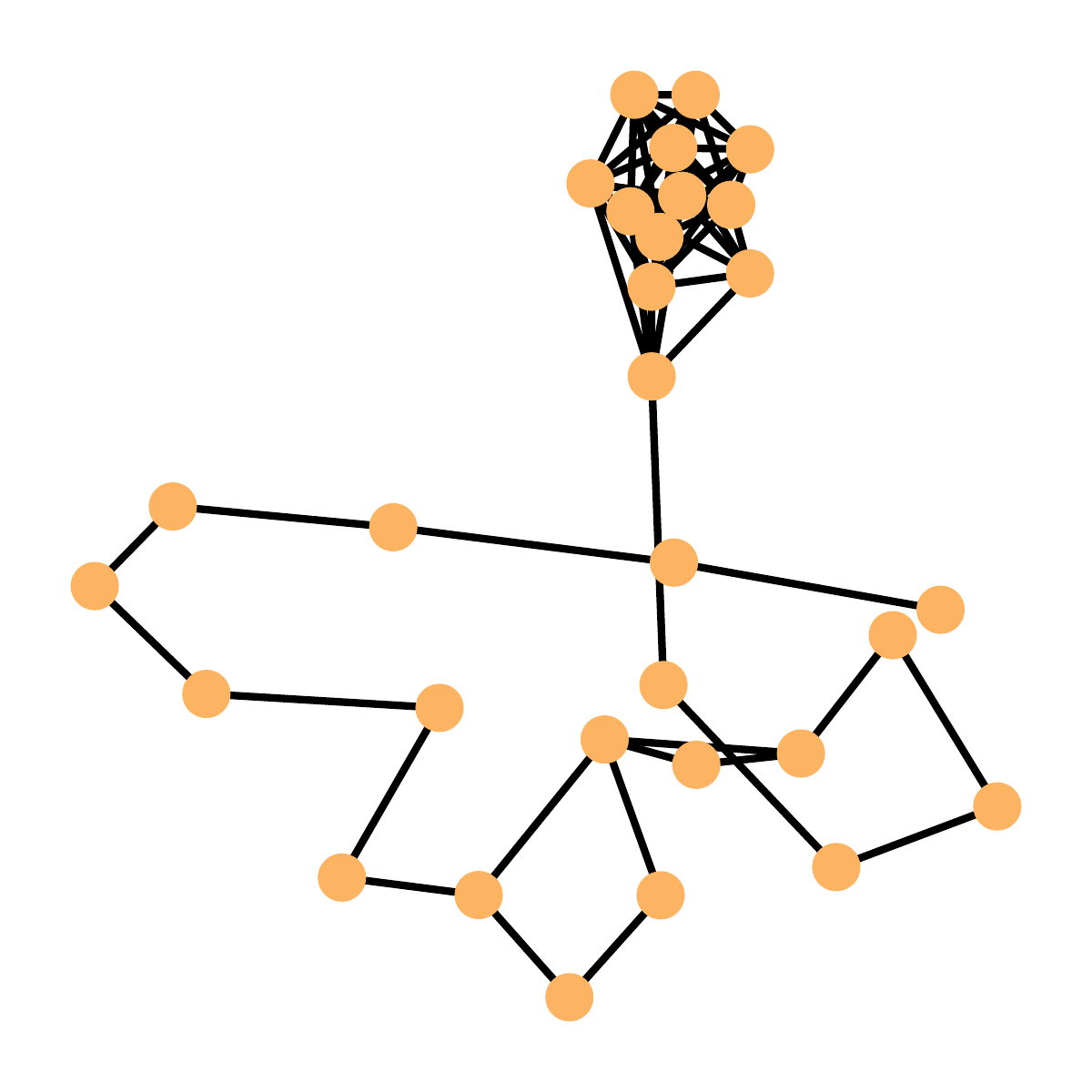}} \\ \midrule
\textbf{Node Feature} & \multicolumn{1}{c|}{\textbf{MSE/ACC(\%)}} & 0.3218/4.35 & 0.9578/13.04 & 1.0178/10.87 & 0.9178/4.35 & 0.0274/0.00 & \textbf{0.0193}/69.57 & 0.3077/2.17 & 1.0710/6.52 & 0.3580/71.74 \\
\textbf{Adjacency Matrix} & \multicolumn{1}{c|}{\textbf{AUC/AP}} & 0.4541/0.0425 & 0.4241/0.0407 & 0.4967/0.0471 & 0.4780/0.0448 & 0.4638/0.0452 & 0.4790/0.0464 & 0.5886/0.0644 & 0.5053/0.0487 & \textbf{0.8740/0.2678} \\
\textbf{\begin{tabular}[c]{@{}c@{}}Ground-truth/\\ Reconstructed graph\end{tabular}} & \multicolumn{1}{c|}{\adjustbox{valign=c}{\includegraphics[scale=0.105]{./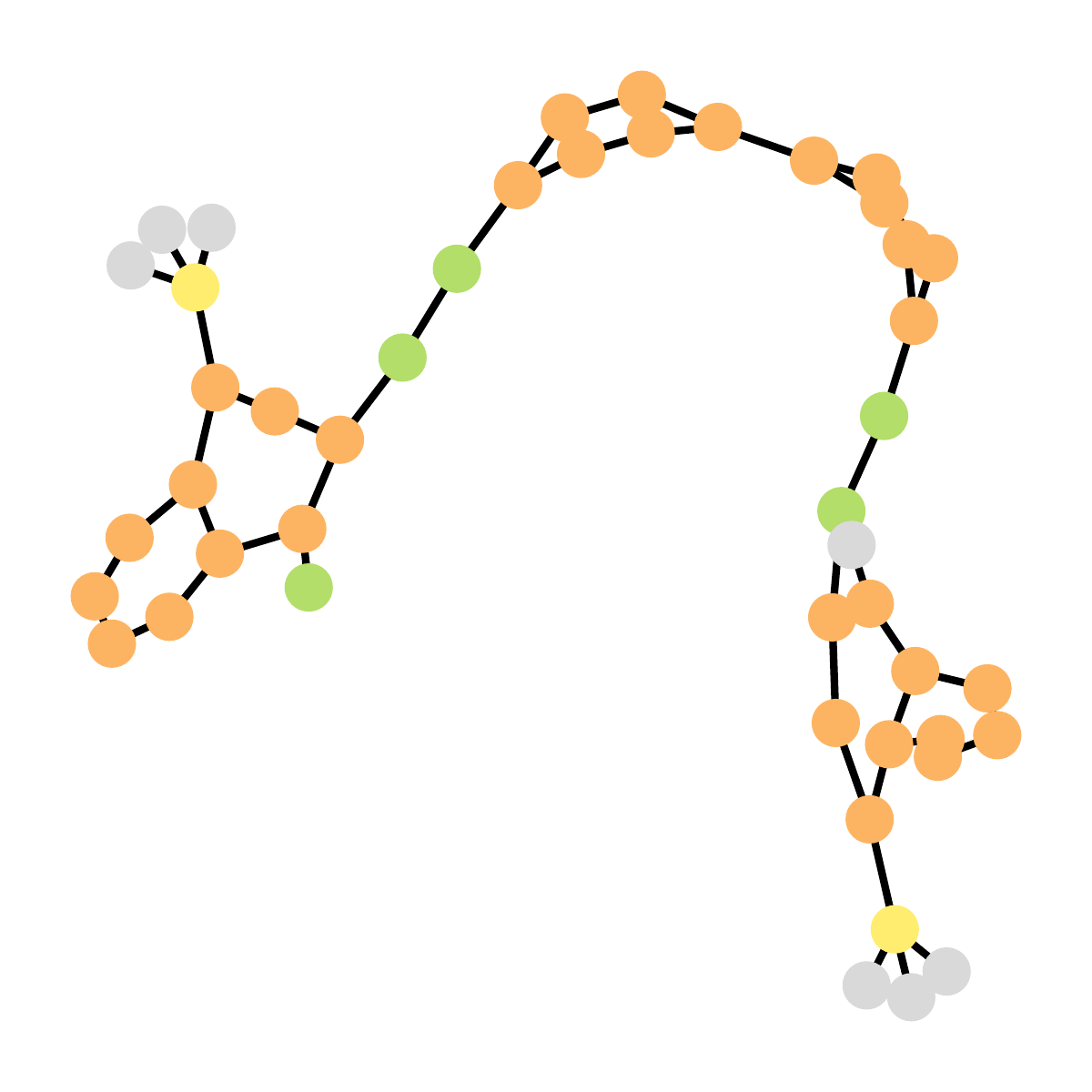}}} & \adjustbox{valign=c}{\includegraphics[scale=0.105]{./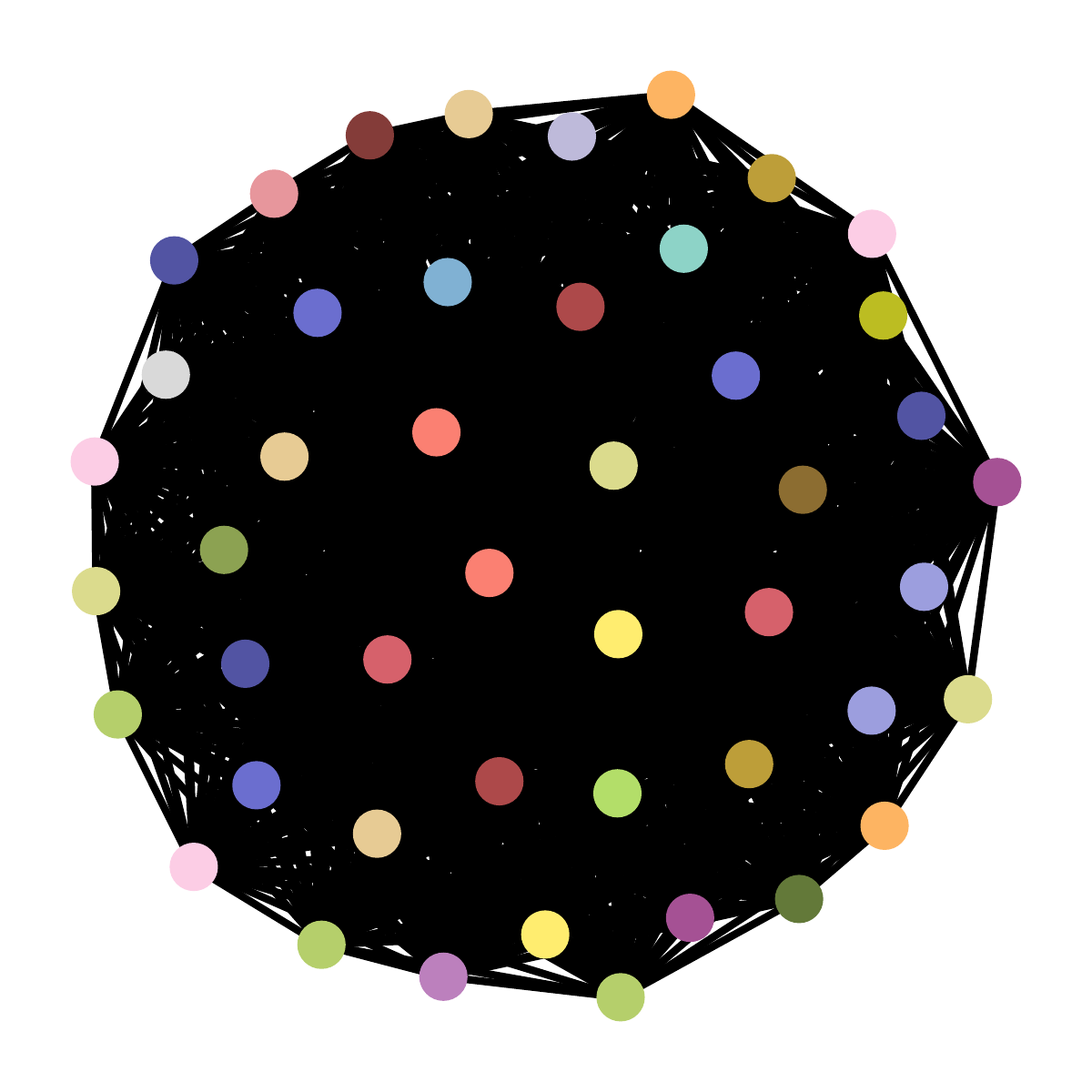}} & \adjustbox{valign=c}{\includegraphics[scale=0.105]{./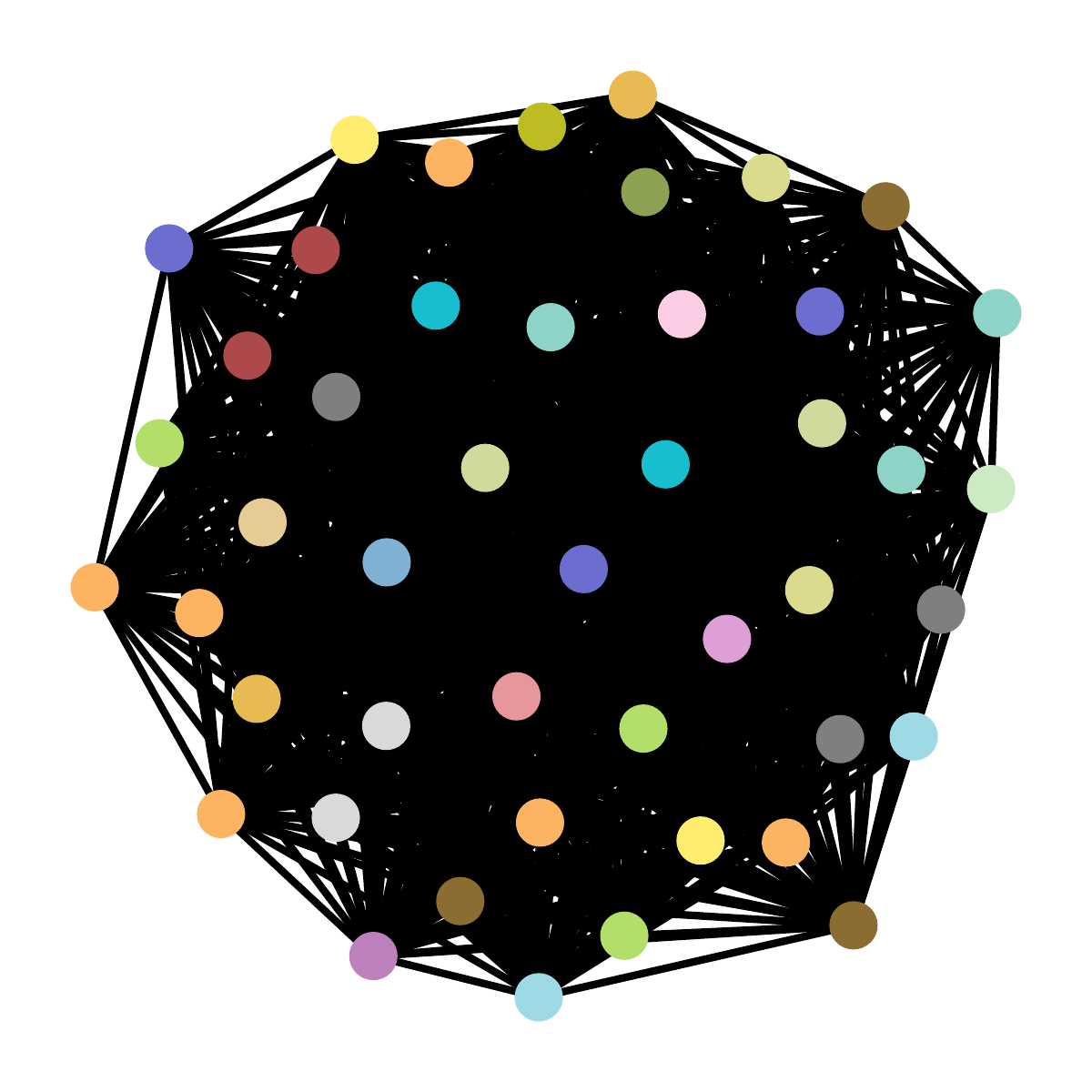}} & \adjustbox{valign=c}{\includegraphics[scale=0.105]{./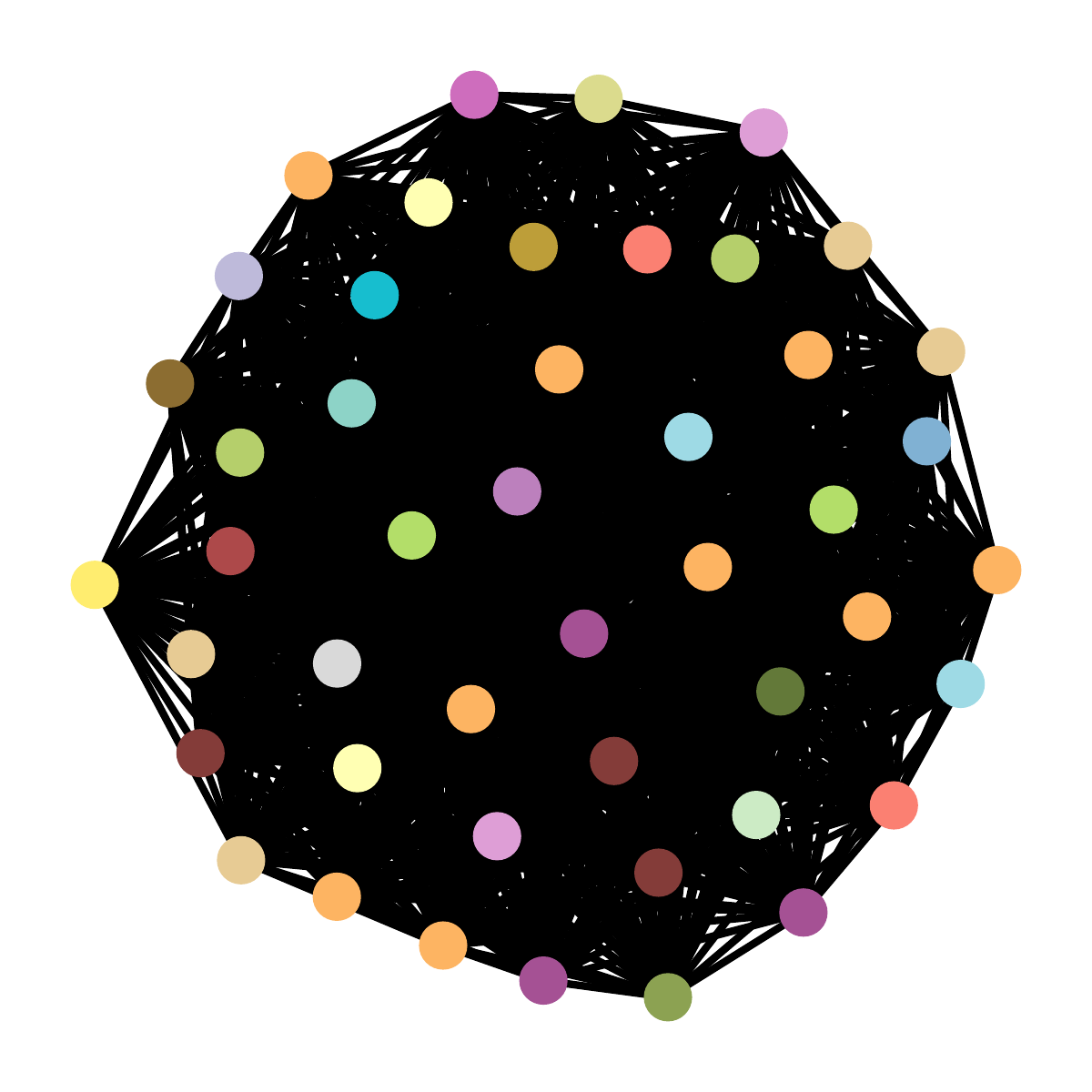}} & \adjustbox{valign=c}{\includegraphics[scale=0.105]{./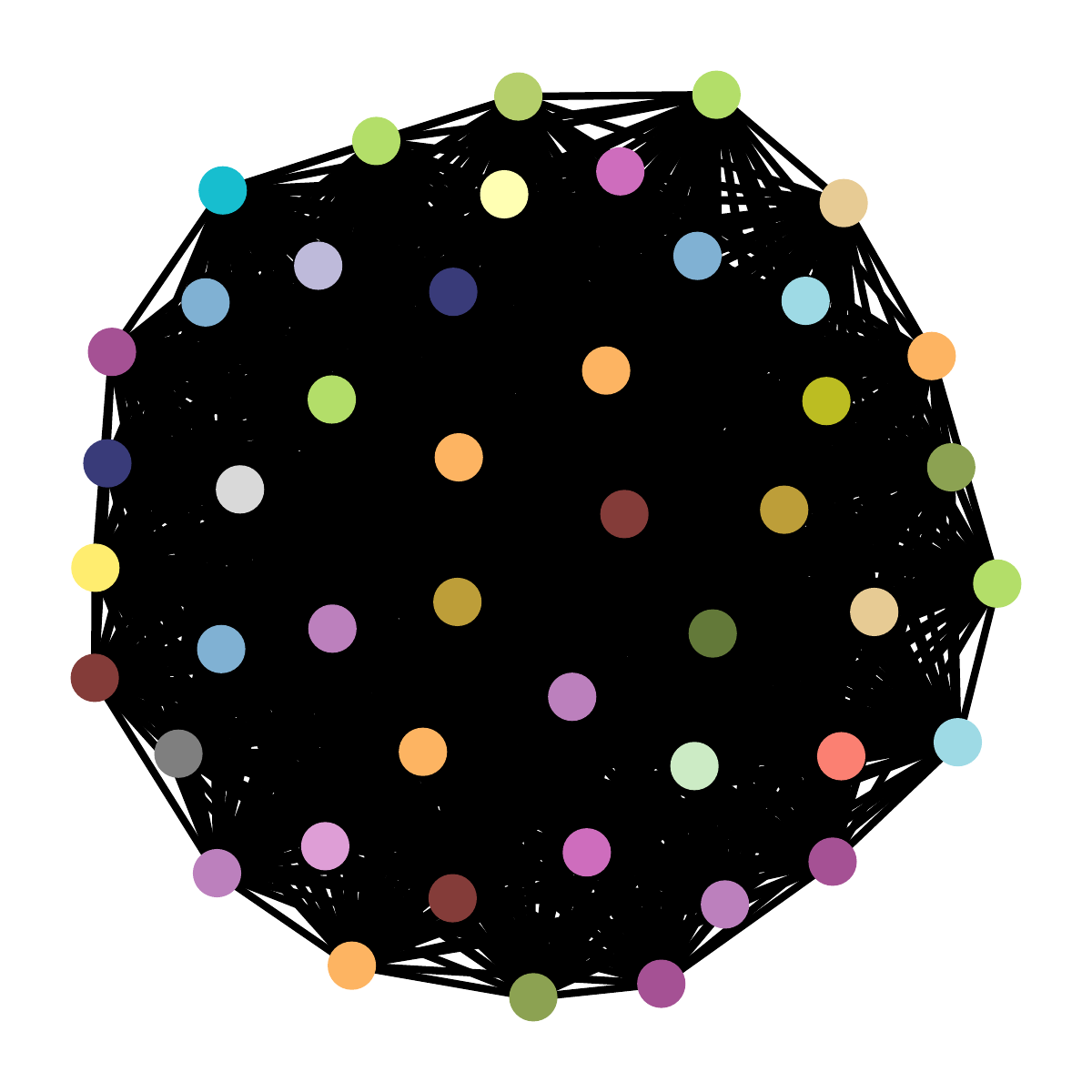}} & \adjustbox{valign=c}{\includegraphics[scale=0.105]{./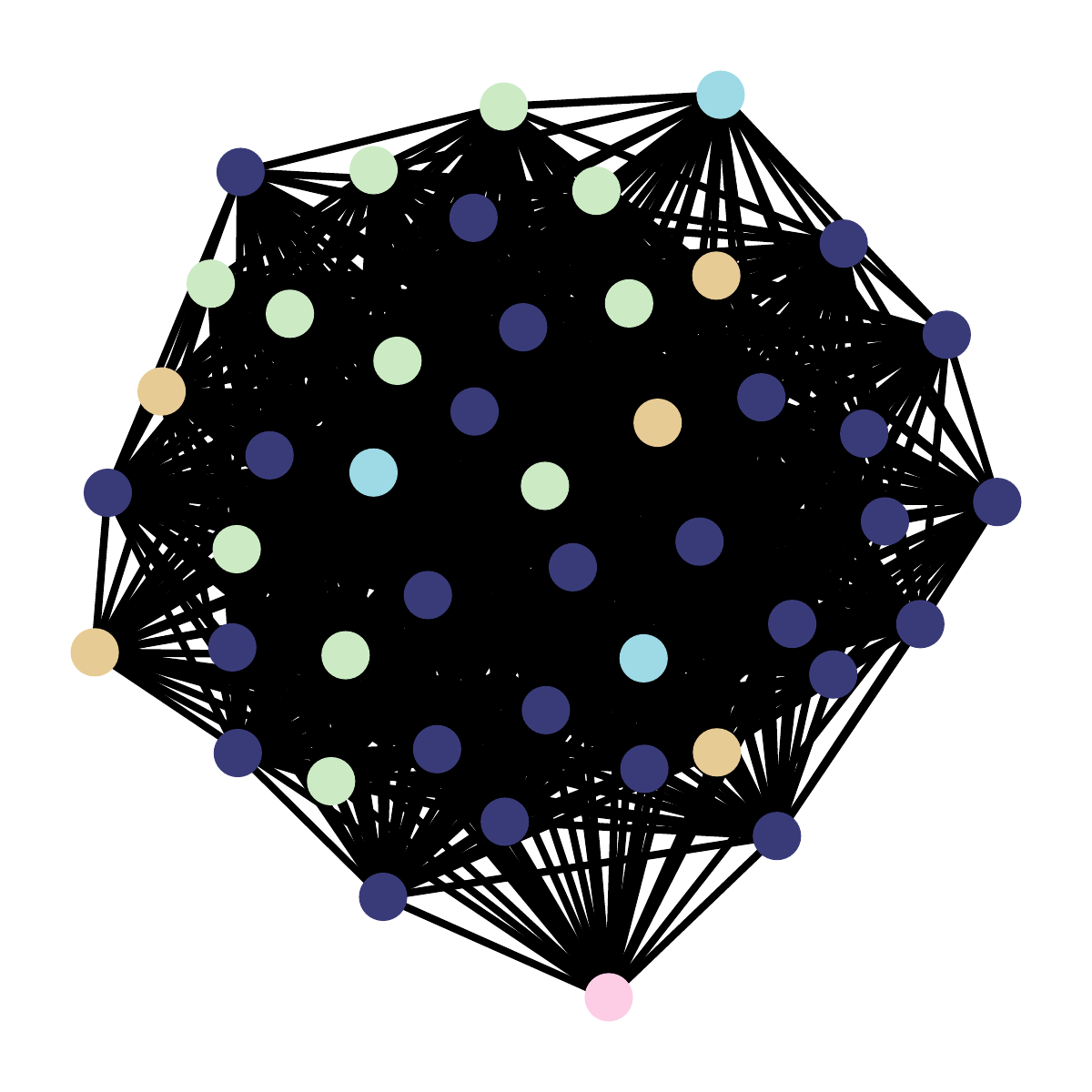}} & \adjustbox{valign=c}{\includegraphics[scale=0.105]{./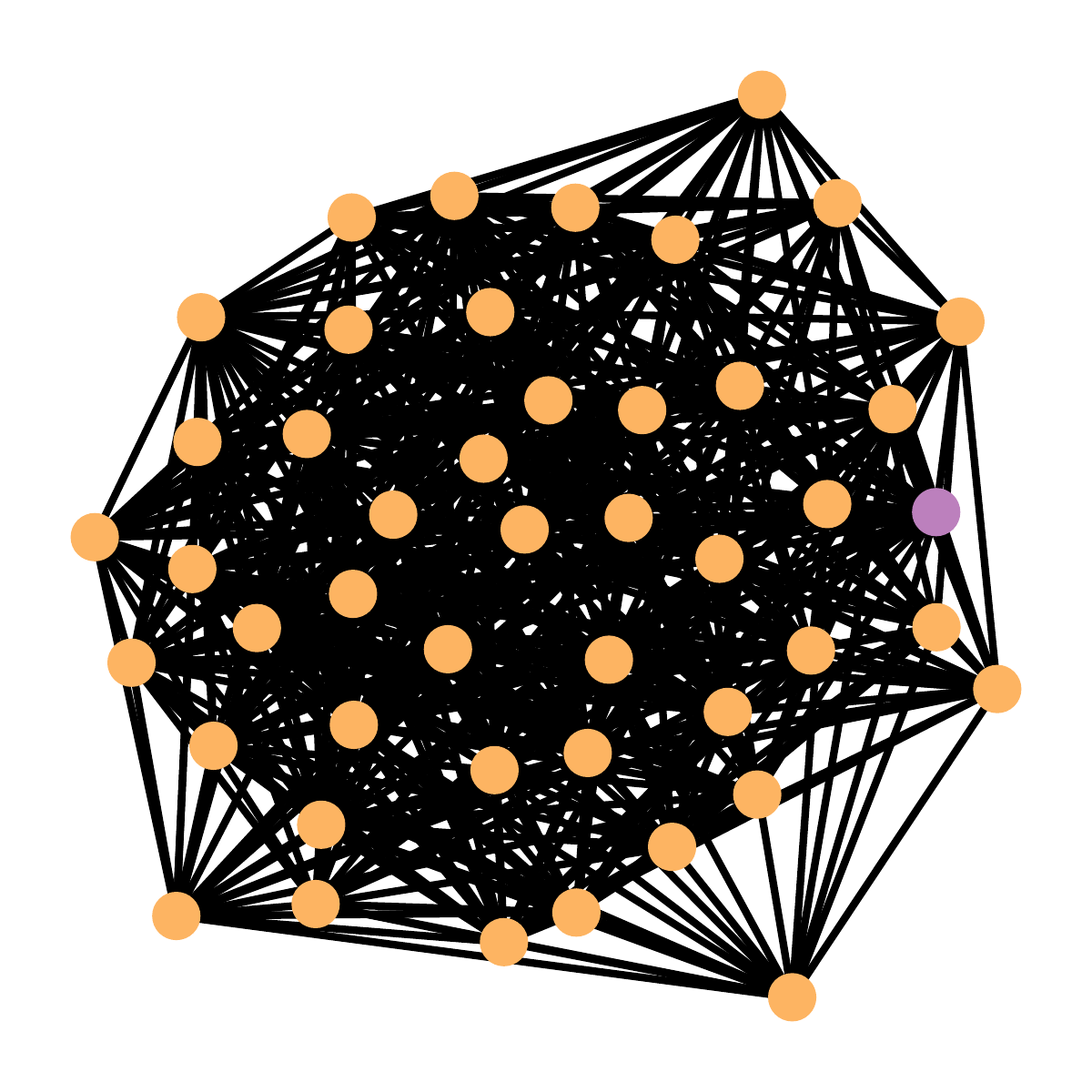}} & 
\adjustbox{valign=c}{\includegraphics[scale=0.105]{./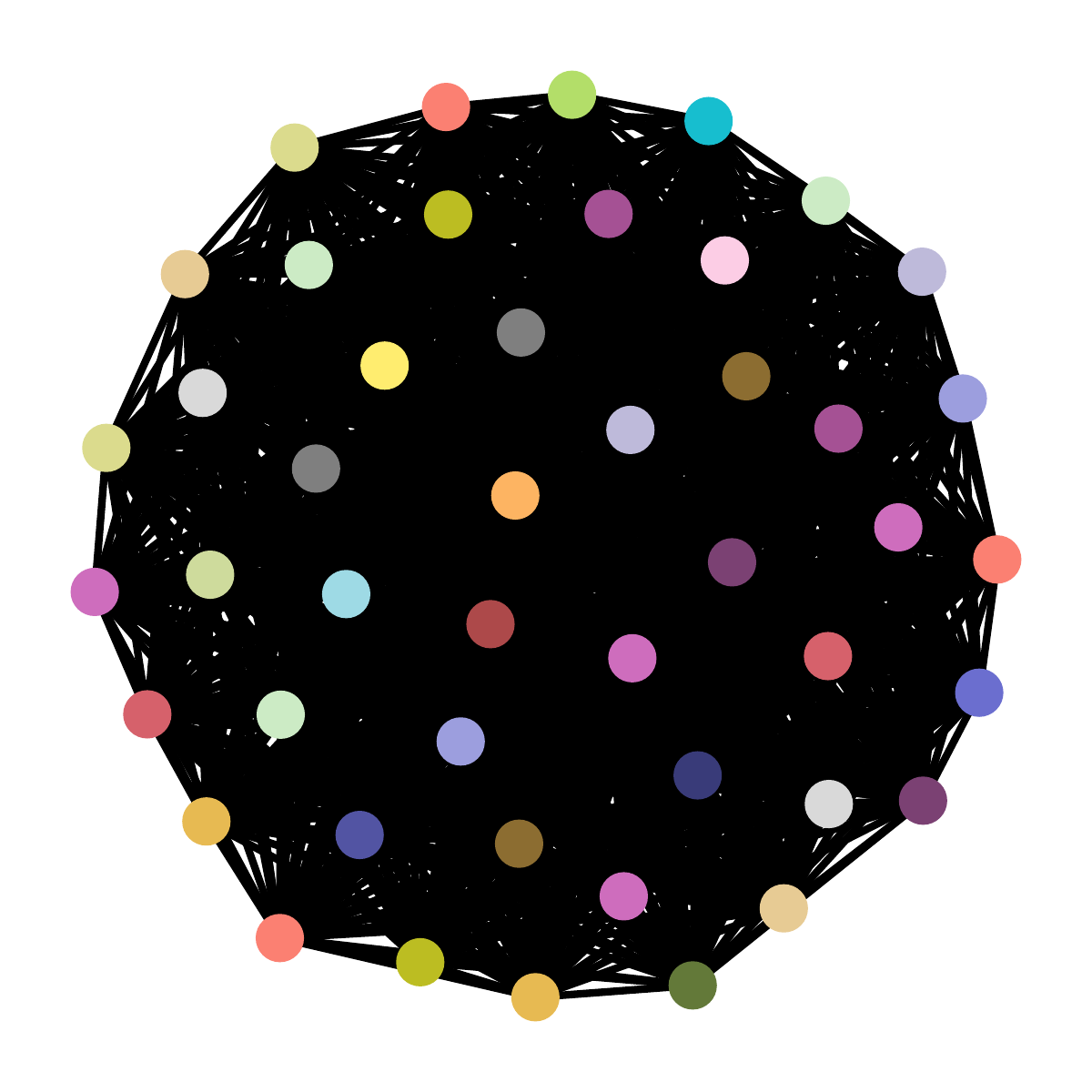}} & 
\adjustbox{valign=c}{\includegraphics[scale=0.105]{./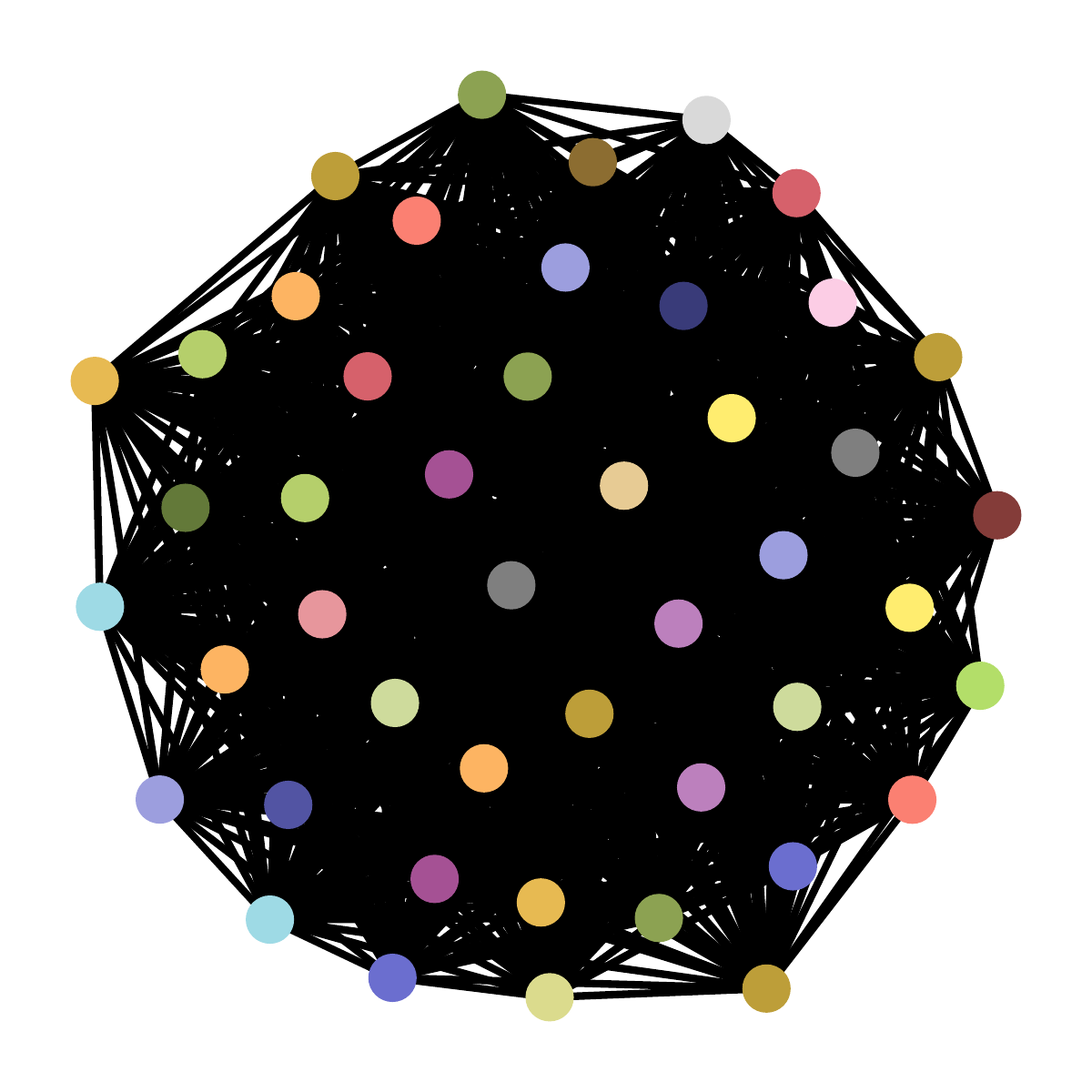}} &
\adjustbox{valign=c}{\includegraphics[scale=0.105]{./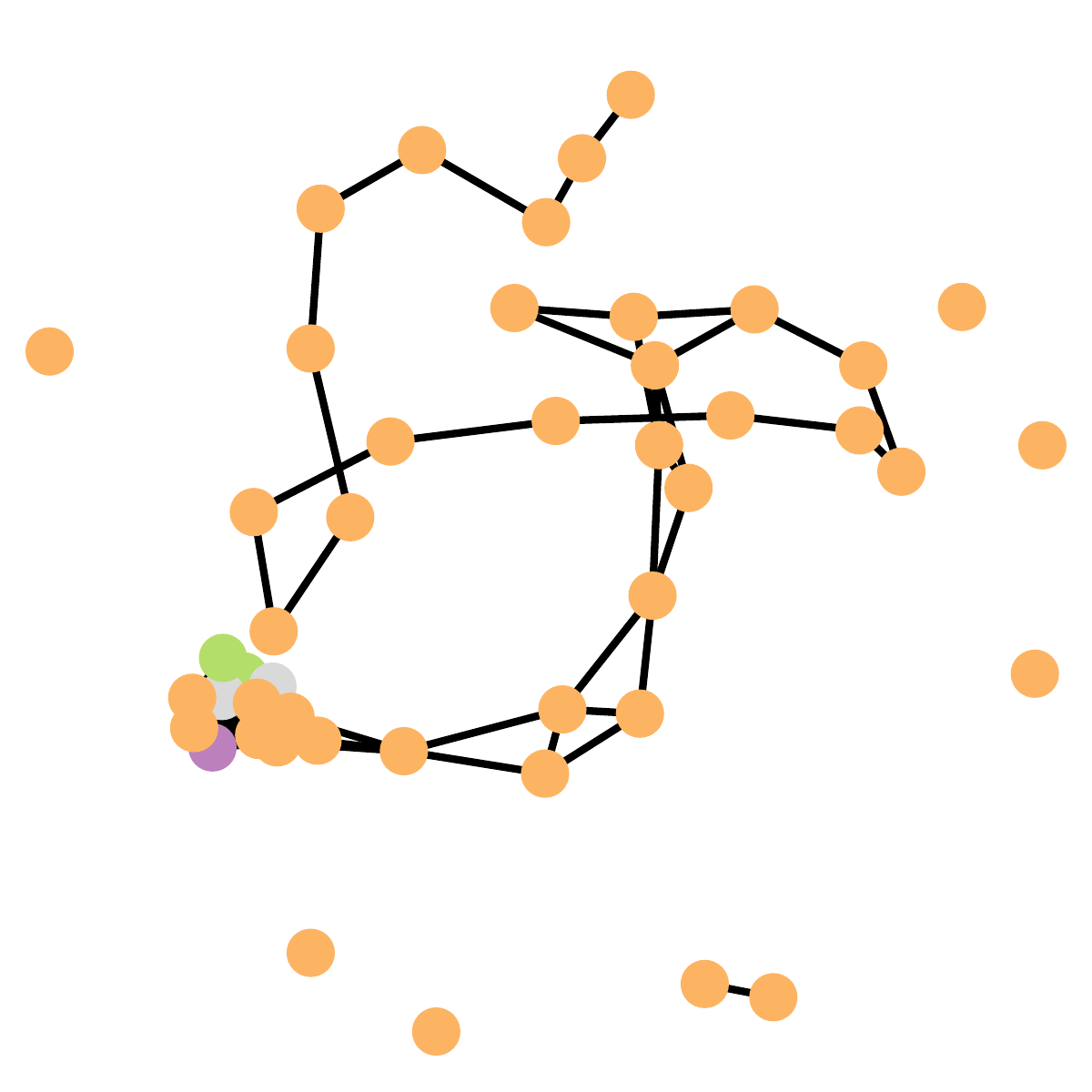}} \\ \bottomrule[1.5pt]
\end{tabular}%
}
\vspace{-1em}
\end{table*}

\setcounter{theorem}{\value{savelemma}}
\begin{theorem} [Recursive Rule for Gradients]
    The gradients $\nabla W_i$ in GCN can be depicted by its input embedding $H_{l-1}$ and a coefficient matrix $\mathbf{r_i}$ in a closed-form recursive equation, \ie,
           $\nabla{W_{i}} = H_{i-1}^\top \mathbf{r}_{i}$,
    where the coefficient matrix $\mathbf{r}_i$ is defined as follows,
    \begin{equation}  
        \begin{aligned} 
        \medop{
            \mathbf{r}_{i}
        }&
        \medop{
            =
        }
                \left \{
                \begin{array}{ll}
                     \medop{
                        \bar{A}^\top(M_p^\top\frac{\partial L}{\partial \hat Y}W_{fc}\odot\sigma_i'),
                     } & \medop{i=l}\\
                     \medop{
                        \bar{A}^\top (\mathbf{r}_{i+1} W_{i+1}^\top\odot \sigma_{i}'),
                    } & \medop{i=1,\dots, l-1}
                \end{array} \right.
        \end{aligned}
    \end{equation}
\end{theorem}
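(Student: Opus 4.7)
My plan is to establish the recursive rule by downward induction on the layer index $i$, running from $i=l$ down to $i=1$, essentially abstracting the pattern that already appears in the proof of Lemma~\ref{lemma:gradients}. The base case $i=l$ is immediate from the first equation of Lemma~\ref{lemma:gradients}, which states $\nabla W_l = H_{l-1}^\top \bar{A}^\top (M_p^\top \frac{\partial L}{\partial \hat Y} W_{fc} \odot \sigma_l')$, matching the definition of $\mathbf{r}_l$.

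For the inductive step, I would strengthen the hypothesis slightly: not only assume $\nabla W_{i+1} = H_i^\top \mathbf{r}_{i+1}$, but also that there is a valid intermediate trace representation of the total differential of the loss in the form $dL = \mathrm{tr}(\mathbf{r}_{i+1} W_{i+1}^\top \, dH_i^\top)$. This stronger inductive invariant is exactly what allows the recursion to propagate one layer deeper, and it is the form that actually appears in equation~\eqref{eq:base_case_trace} of the proof of Lemma~\ref{lemma:gradients} at the level $i+1=l-1$.

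Given this invariant at layer $i+1$, I would substitute the GCN propagation rule $H_i^\top = \sigma_i(W_i^\top H_{i-1}^\top \bar{A}^\top)$ into $dH_i^\top$, exactly mirroring the step from~\eqref{eq:base_case_trace} to~\eqref{eq:induc_hadam}. Then applying the Hadamard-trace identity $\mathrm{tr}(A(B\odot C)) = \mathrm{tr}((A\odot B^\top) C)$ with $\sigma_i'$ in the role of $B$ allows me to pull the elementwise derivative out, yielding
\begin{equation*}
    dL = \mathrm{tr}\bigl((\mathbf{r}_{i+1} W_{i+1}^\top \odot \sigma_i')\, d(W_i^\top H_{i-1}^\top \bar{A}^\top)\bigr) = \mathrm{tr}\bigl(H_{i-1}^\top \bar{A}^\top (\mathbf{r}_{i+1} W_{i+1}^\top \odot \sigma_i')\, dW_i^\top\bigr).
\end{equation*}
By Fact~\ref{lemma:trace}, reading off the coefficient of $dW_i^\top$ yields $\nabla W_i = H_{i-1}^\top \bar{A}^\top (\mathbf{r}_{i+1} W_{i+1}^\top \odot \sigma_i') = H_{i-1}^\top \mathbf{r}_i$, which is the claim. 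Simultaneously, by further expanding $H_{i-1}^\top$ via the propagation rule at layer $i-1$, one can re-express $dL$ as $\mathrm{tr}(\mathbf{r}_i W_i^\top\, dH_{i-1}^\top)$, which restores the inductive invariant at layer $i$ and permits the next downward step.

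The main technical obstacle I anticipate is not any single identity but rather keeping the bookkeeping of transposes, self-loop normalization $\bar A$, and the two uses of the Hadamard-trace identity consistent across a general layer depth; in particular, ensuring that the intermediate-form invariant is strong enough to carry through the induction (mere equality of the gradient at the previous step does not suffice to re-expose the differential structure required to take one more step). Once the invariant is framed correctly, however, the per-step manipulation is just a symbolic replay of the $l{-}1 \to l{-}2$ transition already done in Lemma~\ref{lemma:gradients}, so the induction closes without further ingredients.
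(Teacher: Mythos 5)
Your proposal is correct and takes essentially the same route as the paper's own proof: a downward induction anchored in \lemmaref{lemma:gradients}, whose strengthened invariant is exactly the trace form $dL = \mathrm{tr}(\mathbf{r}_{i} W_{i}^\top\, dH_{i-1}^\top)$, propagated one layer deeper by substituting the GCN forward rule into $dH_{i-1}^\top$ and applying the Hadamard-trace identity, then reading off $\nabla W_i$ via \factref{lemma:trace}. The only cosmetic difference is that the paper anchors the induction at $i=l-1$ (where both the gradient and the trace form are already available from the proof of \lemmaref{lemma:gradients}) rather than at $i=l$, which changes nothing of substance.
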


\begin{proof}
When $i=l$, it can be proved by rewriting  \equref{eq:gcn_g_l_app} in \lemmaref{lemma:gradients} as $\nabla{W_{l}} = H_{l-1}^\top \mathbf{r}_{l}$, where $\mathbf{r}_l=\bar{A}^\top(M_p^\top\frac{\partial L}{\partial \hat Y}W_{fc}\odot\sigma_l')$.

When $i=l-1, l-2,\dots,1$, we will utilize mathematical induction to prove the equation
        $dL = tr(\mathbf{r}_{i}W_{i}^\top dH_{i-1}^\top),\ 
        \medop{i=1,\dots,l-1}$.
Then, we can also establish the proof of \thmref{thm:gradients} by induction.
Our proof consists of two steps: the base case and the induction step.

\noindent \textbf{Step 1: Base Case.} When $i=l-1$, we have $r_{l-1}=\bar{A}^\top (\mathbf{r}_{l} W_{l}^\top\odot \sigma_{l}')=\bar{A}^\top (\bar{A}^\top(M_p^\top\frac{\partial L}{\partial \hat Y}W_{fc}\odot\sigma_l') W_{l}^\top\odot \sigma_{l-1}')$.
By \equref{eq:gcn_g_l-1_app} and \equref{eq:base_case_trace}, $\nabla{W_{l-1}}=H_{l-2}^\top \mathbf{r}_{l-1}$ and $dL=tr(\mathbf{r}_{l-1}W_{l-1}^\top dH_{l-2}^\top)$ are true.

\noindent  \textbf{Step 2: Induction Step.} Next, we assume that the formulas are true for any arbitrary positive integer $k(1 < k \le l-1)$.
We aim to prove that when $i=k-1$, $dL = tr(\mathbf{r}_{i}W_{i}^\top dH_{i-1}^\top)$ also hold true. 

According to the induction hypothesis, we have:
\begin{equation}
        \medop{dL = tr(\mathbf{r}_{k}W_{k}^\top dH_{k-1}^\top)\\
           = tr(\mathbf{r}_{k}W_{k}^\top d(\sigma_{k-1}(W_{k-1}^\top H_{k-2}^\top \bar{A}^\top))}
\end{equation}
By the property of Hadamard operation in \equref{eq:induc_hadam}, we have,
\begin{equation}\label{eq:trace_k-1}
    \begin{aligned}
        &\medop{dL 
        = \text{tr}(\mathbf{r}_{k} W_{k}^\top ((\sigma_{k-1}^\top)'\odot d(W_{k-1}^\top H_{k-2}^\top \bar A^\top)) )
        = \text{tr}((\mathbf{r}_{k} W_{k}^\top \odot \sigma_{k-1}')d(W_{k-1}^\top H_{k-2}^\top \bar A^\top) )}\\
        &\medop{
        = \text{tr}(H_{k-2}^\top \bar A^\top(\mathbf{r}_{k} W_{k}^\top \odot \sigma_{k-1}')dW_{k-1}^\top ) 
        = \text{tr}(H_{k-2}^\top \mathbf{r}_{k-1} dW_{k-1}^\top )
        }
        \\
    \end{aligned}
\end{equation}
where $\mathbf{r}_{k-1} $ is as its definition, \ie, $\mathbf{r}_{k-1}=\bar{A}^\top (\mathbf{r}_{k} W_{k}^\top\odot \sigma_{k-1}')$. 

Thus, we have $\nabla{W_{k-1}}=
            \frac{\partial L}{\partial W_{k-1}}
               =H_{k-2}^\top \mathbf{r}_{k-1}$.
Then, we can compute the full differential w.r.t. $H_{k-2}$ based on \equref{eq:trace_k-1}:
\begin{equation}
        \medop{
            dL 
            = \text{tr}((\mathbf{r}_{k} W_{k}^\top \odot \sigma_{k-1}')W_{k-1}^\top dH_{k-2}^\top \bar A^\top ) \\
            = \text{tr}(\mathbf{r}_{k-1}W_{k-1}^\top dH_{k-2}^\top  )
        } \\
\end{equation}
As previous assumptions also hold, we establish the induction step.

\noindent\textbf{Conclusion.} 
Finally, by mathematical induction, we finish the proof.

\end{proof}

\subsection{More Implementation Details}
\label{sec:app_setup}

\fakeparagraph{Experimental Environment}
We implement all methods using PyTorch 2.2.1 and the PyTorch Geometric (PyG) library.
All experiments were conducted on an NVIDIA A100-PCIE-40GB server with CUDA 11.8 and Intel(R) Xeon(R) Gold 6230R CPU @ 2.10GHz.
We utilize the \textsf{PyG} library to handle the mentioned graph datasets, which involves removing isomorphic graphs.
During partitioning, we divide the datasets into two parts based on the graph labels employing Dirichlet distribution with $\alpha=1.0$, widely used for simulating a non-i.i.d (heterogeneous) setting in federated learning.

\subsection{Baseline Descriptions} \label{sec:app_baselines}

The main ideas of the baselines is presented as follows.
\begin{itemize}[leftmargin=*, noitemsep, topsep=0pt]
    \item \textbf{Random:} It randomly generates both graph structures and features from a uniform distribution. 
    \item \textbf{DLG}~\cite{nips19:dlg}: It is the seminar work that introduces an optimization process based on gradient matching using L2 distance and the L-BFGS optimizer.   For extension to graphs, we initialize both the adjacency matrix and node features as dummy data.
    \item \textbf{iDLG}~\cite{corr20:idlg}: It improves DLG by introducing label inference from gradients to enhance recovery.
    \item \textbf{InverGrad}~\cite{nips20:ig}: It employs an optimization framework with cosine distance and the Adam optimizer, along with a total variation loss originally tailored for images. As its total variation loss is unsuitable for graphs, we omit it in our implementation.
    \item \textbf{GI-GAN:} It represents the GAN-based approaches~\cite{nips21:gias,cvpr22:ggl} that optimize a dummy latent vector and feed it into a pre-trained GAN for recovery, thereby leveraging GAN-learned priors. Specifically, we adopt MolGAN~\cite{icml18:molgan} as an extension for graph data.
    \item \textbf{GRA-GFA}~\cite{tnse23:empirical_graph}: It extends DLG to reconstruct graph data from optimized node embeddings $H$. Specifically, it recovers the adjacency matrix as $\sigma(HH^T)$ and reconstructs node features using a generator that takes the inferred adjacency matrix and a randomly initialized feature matrix as input.
    \item \textbf{TabLeak}~\cite{icml23:tableak}: It is originally designed for tabular data.  We extend it for graph data by treating adjacency matrices as continuous features and node attributes as discrete features, while employing its entropy-based uncertainty quantification to assess reconstruction quality.
    \item \textbf{Graph Attacker}~\cite{arxiv24:Graph_Attacker}: it reconstructs graph data by matching dummy gradients with actual gradients while enforcing graph-specific properties through feature smoothness and sparsity regularization terms in the optimization objective.
\end{itemize}

\subsection{More Case Study Results}
\label{sec:app_visualization}
\tabref{tab:app_visualization} shows visualization results for various concrete biometric or chemical molecules in the performance comparison experiment, including the reconstruction results for \sysname and baselines.
We can draw a similar conclusion to that mentioned in \secref{sec:exp_baseline}: Our method outperforms in recovering both the atom (node) type and the molecule structure, while other baselines tend to reconstruct a fully connected graph and incorrect node types, especially as the number of nodes increases, failing to recover the atom types and molecule structure.
However, it remains challenging to reconstruct graphs with complex structures and a large number of nodes.

\subsection{More Experimental Studies}
\label{sec:app_others}

\fakeparagraph{Studies on Larger Batch Sizes} 
We further compare the proposed method with baselines under larger batch sizes to validate the effectiveness.
As shown in \tabref{tab:batch_size} on the MUTAG dataset, GraphDLG consistently outperforms the baselines across different batch sizes, achieving an oblivious improvement of 16.35\% in node feature ACC and 66.58\% in graph structure AUC, thereby demonstrating the effectiveness and robustness of our approach.

\begin{table}[htb]
\vspace{-1em}
\caption{Performance with larger batch sizes.}
\vspace{-1em}
\label{tab:batch_size}
\centering
\resizebox{0.9\columnwidth}{!}{
\setlength{\tabcolsep}{0.5pt}
\begin{tabular}{@{}c|cc|cc|cc@{}}
\toprule[1.5pt]
\multirow{3}{*}{\textbf{Method}} & 
\multicolumn{2}{c|}{\textbf{Batch size=4}} & 
\multicolumn{2}{c|}{\textbf{Batch size=8}} & 
\multicolumn{2}{c}{\textbf{Batch size=16}} \\
\cmidrule(lr){2-3} \cmidrule(lr){4-5} \cmidrule(lr){6-7}

& 
\multicolumn{1}{c}{\textbf{Feature}} & 
\multicolumn{1}{c|}{\textbf{Structure}} & 
\multicolumn{1}{c}{\textbf{Feature}} & 
\multicolumn{1}{c|}{\textbf{Structure}} & 
\multicolumn{1}{c}{\textbf{Feature}} & 
\multicolumn{1}{c}{\textbf{Structure}} \\

& 
\multicolumn{1}{c}{\textbf{ACC (\%)}} & 
\multicolumn{1}{c}{\textbf{AUC} \quad \textbf{AP}} & 
\multicolumn{1}{c}{\textbf{ACC (\%)}} & 
\multicolumn{1}{c}{\textbf{AUC} \quad \textbf{AP}} & 
\multicolumn{1}{c}{\textbf{ACC (\%)}} & 
\multicolumn{1}{c}{\textbf{AUC} \quad \textbf{AP}} \\ 
\midrule[1pt]

Random & 
13.82 & 0.4917\quad0.1423 & 
13.70 & 0.5009\quad0.1357 & 
14.57 & 0.5054\quad0.1390 \\

DLG & 
14.95 & 0.5208\quad0.1465 & 
18.06 & 0.5149\quad0.1410 & 
14.33 & 0.5243\quad0.1440 \\

iDLG & 
17.20 & 0.5339\quad0.1498 & 
15.13 & {\ul 0.5394}\quad {\ul 0.1466} & 
14.70 & {\ul 0.5427}\quad0.1476 \\

InverGrad & 
23.10 & {\ul 0.5428}\quad {\ul 0.1532} & 
21.66 & 0.5367\quad0.1504 & 
22.03 & 0.5379\quad {\ul 0.1488} \\

GI-GAN & 
{\ul 45.52} & 0.5059\quad0.1240 & 
{\ul38.11} & 0.5046\quad0.1237 & 
{\ul44.88} & 0.5085\quad0.1251\\

GRA-GRF & 
19.50 &0.4893\quad0.1230 & 
20.89 & 0.4970\quad0.1250 & 
17.60 & 0.4852\quad0.1220 \\

\textbf{\sysname } &
\textbf{61.87} & \textbf{0.9042}\quad \textbf{0.7554} & 
\textbf{68.08} & \textbf{0.9043}\quad \textbf{0.7558} & 
\textbf{65.10} & \textbf{0.9042}\quad \textbf{0.7557} \\
\bottomrule[1.5pt]
\end{tabular}
}
\vspace{-1em}
\end{table}

\fakeparagraph{Studies on Deeper GCN Model}
\tabref{tab:layer} reports results using deeper GCN models with 3 and 4 layers on MUTAG dataset.
The results show that the proposed GraphDLG consistently outperforms other baselines in both feature and structure recovery, reducing the MSE of node features by over 19.54\% and improving the AUC of graph structures by more than 67.35\%, \ie our approach still achieves the best recovery performance for graph-tailored DLG.

\begin{table}[htb]
\vspace{-1em}
\caption{Performance with deeper GCN models.}
\vspace{-1em}
\label{tab:layer}
\centering
\resizebox{0.9\columnwidth}{!}{
\setlength{\tabcolsep}{0.5pt}
\begin{tabular}{@{}c|cc|cc@{}}
\toprule[1.5pt]
\multirow{3}{*}{\textbf{Method}} & 
\multicolumn{2}{c}{\textbf{3-layer-GCN}} & 
\multicolumn{2}{c}{\textbf{4-layer-GCN}} \\
\cmidrule(lr){2-3} \cmidrule(lr){4-5}

& 
\multicolumn{1}{c}{\textbf{Node Feature}} & 
\multicolumn{1}{c}{\textbf{Graph Structure}} & 
\multicolumn{1}{c}{\textbf{Node Feature}} & 
\multicolumn{1}{c}{\textbf{Graph Structure}} \\

& 
\multicolumn{1}{c}{\textbf{MSE} \quad \textbf{ACC (\%)}} & 
\multicolumn{1}{c}{\textbf{AUC} \quad \textbf{AP} \quad \textbf{ACC (\%)}} & 
\multicolumn{1}{c}{\textbf{MSE} \quad \textbf{ACC (\%)}} & 
\multicolumn{1}{c}{\textbf{AUC} \quad \textbf{AP} \quad \textbf{ACC (\%)}} \\ 
\midrule[1pt]

Random & 
0.3430 \quad 11.99 & 0.4896 \quad 0.1325 \quad 49.51 &
0.3416 \quad 13.29 & 0.4996 \quad 0.1382 \quad 49.89 \\

DLG & 
1.1412 \quad 21.45 & 0.5344 \quad 0.1423 \quad 65.06 &
1.2389 \quad 19.62 & {\ul 0.5269} \quad {\ul 0.1412} \quad {\ul 64.89} \\

iDLG & 
1.1519 \quad 16.76 & 0.5371 \quad 0.1481 \quad {\ul 65.67} &
1.2162 \quad 18.28 & 0.5176 \quad 0.1386 \quad 64.62 \\

InverGrad & 
1.0014 \quad 22.96 & {\ul 0.5397} \quad {\ul 0.1490} \quad 56.62 &
1.0559 \quad 22.51 & 0.5102 \quad 0.1396 \quad 58.43 \\

GI-GAN & 
1.1054 \quad 51.89 & 0.5091 \quad 0.1248 \quad 14.03 &
{\ul 0.1054} \quad 51.99 & 0.5092 \quad 0.1248 \quad 14.06 \\

GRA-GRF & 
{\ul 0.1079} \quad {\ul 71.06} & 0.4831 \quad 0.1231 \quad 49.19 &
0.1080 \quad {\ul 73.31} & 0.4922 \quad 0.1237 \quad 50.36 \\

\textbf{\sysname} & 
\textbf{0.0683} \quad \textbf{74.34} & \textbf{0.9032} \quad \textbf{0.7562} \quad \textbf{93.41} &
\textbf{0.0848} \quad \textbf{74.33} & \textbf{0.9022} \quad \textbf{0.7556} \quad \textbf{93.41} \\
\bottomrule[1.5pt]
\end{tabular}
}
\vspace{-1em}
\end{table}

\fakeparagraph{Studies on Larger Hidden Size}
\tabref{tab:hidden} shows the results using larger hidden sizes on MUTAG dataset.
We further conducted experiments with larger hidden sizes of 64 and 128, respectively.
The results are as follows.
It is evident that our proposed GraphDLG maintains the best feature and structure recovery performance compared with all baselines, reducing node features' MSE by over 8.07\% and increasing graph structure's AUC by over 64.57\%.

\begin{table}[htb]
\vspace{-1em}
\caption{Performance comparison using larger hidden sizes.}
\label{tab:hidden}
\vspace{-1em}
\centering
\resizebox{0.9\columnwidth}{!}{
\setlength{\tabcolsep}{0.5pt}
\begin{tabular}{@{}c|cc|cc@{}}
\toprule[1.5pt]
\multirow{3}{*}{\textbf{Method}} & 
\multicolumn{2}{c}{\textbf{Hidden size = 64}} & 
\multicolumn{2}{c}{\textbf{Hidden size = 128}} \\
\cmidrule(lr){2-3} \cmidrule(lr){4-5}

& 
\multicolumn{1}{c}{\textbf{Node Feature}} & 
\multicolumn{1}{c}{\textbf{Graph Structure}} & 
\multicolumn{1}{c}{\textbf{Node Feature}} & 
\multicolumn{1}{c}{\textbf{Graph Structure}} \\

& 
\multicolumn{1}{c}{\textbf{MSE} \quad \textbf{ACC (\%)}} & 
\multicolumn{1}{c}{\textbf{AUC} \quad \textbf{AP} \quad \textbf{ACC (\%)}} & 
\multicolumn{1}{c}{\textbf{MSE} \quad \textbf{ACC (\%)}} & 
\multicolumn{1}{c}{\textbf{AUC} \quad \textbf{AP} \quad \textbf{ACC (\%)}} \\ 
\midrule[1pt]

Random & 
0.3316 \quad 14.77 & 0.4789 \quad 0.1344 \quad 49.18 &
0.3337 \quad 15.05 & 0.4912 \quad 0.1331 \quad 49.67 \\

DLG & 
1.1304 \quad 20.46 & 0.5283 \quad 0.1469 \quad {\ul 65.88} &
1.0489 \quad 21.48 & 0.5236 \quad 0.1444 \quad 62.88 \\

iDLG & 
1.1228 \quad 20.29 & 0.5425 \quad 0.1577 \quad 65.57 &
1.1215 \quad 19.47 & 0.5265 \quad 0.1471 \quad {\ul 64.97} \\

InverGrad & 
0.9400 \quad 25.09 & {\ul 0.5492} \quad {\ul 0.1615} \quad 54.70 &
0.8657 \quad 25.41 & {\ul 0.5380} \quad {\ul 0.1561} \quad 49.11 \\

GI-GAN & 
{\ul 0.1053} \quad 50.33 & 0.5108 \quad 0.1254 \quad 14.85 &
{\ul 0.1050} \quad 52.37 & 0.5020 \quad 0.1234 \quad 14.42 \\

GRA-GRF & 
0.1078 \quad {\ul 71.78} & 0.4813 \quad 0.1210 \quad 52.19 &
0.1077 \quad {\ul 71.72} & 0.4831 \quad 0.1216 \quad 53.38 \\

\textbf{\sysname} & 
\textbf{0.0968} \quad \textbf{73.82} & \textbf{0.9038} \quad \textbf{0.7560} \quad \textbf{93.41} &
\textbf{0.0870} \quad \textbf{74.34} & \textbf{0.9041} \quad \textbf{0.7557} \quad \textbf{93.41} \\

\bottomrule[1.5pt]
\end{tabular}
}
\vspace{-1.5em}
\end{table}

\subsection{Potential Defense Methods}
\label{sec:app_defense}

We further conducted experiments to study the potential defense strategies against the proposed attack.
\textit{DP-Gradients}: it follows the classical differential privacy mechanism by adding Laplacian noise with variance $\sigma$ to model gradients
\textit{(ii) DP-Activations}: it injects Laplacian noise with variance $\sigma$ into the activations (\ie, layer outputs) of GCN layers instead of the gradients.
\textit{(iii) DP-Hybrid}: it is a combined scheme that applies Laplacian noise with variance $\sigma/2$ to both gradients and activations, respectively.
The experimental results summarized in \tabref{tab:defense}, where we highlight the best result in bold and the second best result with the symbol `*'.

\fakeparagraph{Experimental Observations}
Firstly, the DP-Gradients provides stronger protection for node features than for graph structures, which is consistent with the results shown in \figref{fig:defense}.
Nevertheless, when $\sigma$ is large (\eg, $\sigma=0.5$), it can also offer meaningful protection for the graph structure.
Secondly, the DP-Activations is more effective in protecting graph structure, likely because the graph structure plays a central role in graph convolution operations, where injecting noise into intermediate activations directly disrupts the structure-related signals. 
Finally, the DP-Hybrid appear to be a promising defense strategy, as it combines the advantages of both DP-Gradients and DP-Activations, \ie, the DP-Hybrid can provide comprehensive protection for both node features and graph structures.
\textit{In summary, injecting noise into both gradients and activations simultaneously can serve as an effective defense strategy against deep leakage from gradients in federated graph learning.}

\begin{table}[bht]
\vspace{-1em}
\caption{Results on Three Potential Defense Methods with Varying Noise Levels ($\sigma$).}
\label{tab:defense}
\vspace{-1em}
\centering
\resizebox{0.9\columnwidth}{!}{
\setlength{\tabcolsep}{0.5pt}
\begin{tabular}{@{}c|cc|cc|cc@{}}
\toprule[1.5pt]
\multirow{3}{*}{\textbf{Defense}} & 
\multicolumn{2}{c}{\textbf{DP-Gradients}} & 
\multicolumn{2}{c}{\textbf{DP-Activations}} & 
\multicolumn{2}{c}{\textbf{DP-Hybrid}} \\
\cmidrule(lr){2-3} \cmidrule(lr){4-5} \cmidrule(lr){6-7}

& 
\multicolumn{1}{c}{\textbf{Feature}} & 
\multicolumn{1}{c}{\textbf{Structure}} & 
\multicolumn{1}{c}{\textbf{Feature}} & 
\multicolumn{1}{c}{\textbf{Structure}} & 
\multicolumn{1}{c}{\textbf{Feature}} & 
\multicolumn{1}{c}{\textbf{Structure}} \\

& 
\multicolumn{1}{c}{\textbf{ACC (\%)}} & 
\multicolumn{1}{c}{\textbf{AUC} \quad \textbf{AP}} & 
\multicolumn{1}{c}{\textbf{ACC (\%)}} & 
\multicolumn{1}{c}{\textbf{AUC} \quad \textbf{AP}} & 
\multicolumn{1}{c}{\textbf{ACC (\%)}} & 
\multicolumn{1}{c}{\textbf{AUC} \quad \textbf{AP}} \\ 
\midrule[1pt]

w/o DP & 
74.35 & 0.9042\quad0.7556 & 
74.35 & 0.9042\quad0.7556 &
74.35 & 0.9042\quad0.7556 \\

$\sigma$=0.05 & 
\textbf{63.50} & 0.9045\quad0.7564 & 
74.34 & 0.8994*\quad0.7358* & 
67.99* & \textbf{0.8843}\quad\textbf{0.7079} \\

$\sigma$=0.1 & 
\textbf{36.50} & 0.8931\quad0.7296 & 
74.34 & 0.8829*\quad0.6926* & 
45.49* & \textbf{0.8546}\quad\textbf{0.6618} \\

$\sigma$=0.2 & 
\textbf{25.22} & 0.8820\quad 0.7204 & 
74.34 & 0.8485*\quad0.6351* & 
29.26* & \textbf{0.8328}\quad \textbf{0.6313} \\

$\sigma$=0.5 & 
\textbf{20.67} & 0.8272*\quad0.6230* & 
74.34 & \textbf{0.8131}\quad\textbf{0.5748} & 
20.75* & 0.8344\quad0.6404\\

\bottomrule[1.5pt]
\end{tabular}
}
\vspace{-1em}
\end{table}

\end{document}